\documentclass{article}

\PassOptionsToPackage{numbers, compress}{natbib}
\usepackage[preprint]{neurips_2025}

\usepackage[utf8]{inputenc} 
\usepackage[T1]{fontenc}    	
\usepackage{url}            		
\usepackage{booktabs}       	
\usepackage{amsfonts}       	
\usepackage{nicefrac}       	
\usepackage{microtype}      	
\usepackage{xcolor}         		
\usepackage{amsmath}
\usepackage{mathrsfs}
\usepackage{amssymb}
\usepackage{subcaption}
\usepackage{graphicx}
\usepackage{algorithm}
\usepackage{lipsum}
\usepackage{longtable} 
\usepackage{listings}
\usepackage{tcolorbox}
\usepackage{multirow}
\usepackage{natbib}
\usepackage{amsthm}

\usepackage[colorlinks,linkcolor=magenta,citecolor=blue]{hyperref}
\usepackage{amsmath, amssymb}
\usepackage{algorithmicx}
\usepackage{algpseudocode}
\usepackage{mathtools}
\algnewcommand{\LineComment}[1]{\State \(\triangleright\) #1}
    
\newtheorem{theorem}{Theorem}[section]

\newtheorem{lemma}[theorem]{Lemma}
\newtheorem{proposition}[theorem]{Proposition}

\newtheorem{definition}{Definition}[section]

\newtheorem{remark}[theorem]{Remark}
\newtheorem{assumption}[theorem]{Assumption}
\newtheorem*{lemma*}{Lemma}
\newtheorem*{corollary*}{Corollary}
\newtheorem*{theorem*}{Theorem}
\newtheorem*{proposition*}{Proposition}
\newtheorem*{remark*}{Remark}

\def\proofs{\par\noindent{\em Proof sketch.}}

\newcommand{\argmin}{\mathop{\rm argmin}}
\newcommand{\argmax}{\mathop{\rm argmax}}

\newcommand{\ba}{\begin{array}}
\newcommand{\ea}{\end{array}}

\title{Two-Fidelity Best-Action Identification \\ for Stochastic Minimax Tree}

\author{%
  Peter Chen\\
  Department of Mathematics\\
  Columbia University\\
  \texttt{lc3826@columbia.edu} \\
  \And
  Xi Chen\\
  Stern School of Business\\
  New York University\\
  \texttt{xc13@stern.nyu.edu} \\
}

\begin{document}

\maketitle

\begin{abstract}
We study fixed-confidence best-action identification (BAI) in stochastic minimax trees. This problem is increasingly relevant in modern AI planning, where deep minimax search and Monte Carlo Tree Search (MCTS) with language model long rollouts face a fundamental tradeoff: heuristic evaluations are cheap but biased, while accurate rollouts are reliable but prohibitively expensive.  We propose 2FFS, a two-fidelity tree-search algorithm that brings multi-fidelity flat bandit ideas into trees. The algorithm combines minimax-style fast expansion with MCTS-style stochastic sampling, adaptively deciding when to exploit cheap biased evaluations and when to invoke expensive accurate evaluations for local certification. We prove fixed-confidence correctness, establish finite stopping for exact
identification, and give a polynomial-depth cost upper bound for general-depth
trees. Across numerical stochastic-tree experiments, 2FFS uses substantially fewer samples and
computational operations comparing to existing BAI-MCTS baseline.
\end{abstract}

\section{Introduction}
Modern decision-making systems often face a fundamental budget-allocation tradeoff: when reasoning over a large search space, should computation be used to expand the search tree more deeply, or to improve the reliability of the evaluations assigned to actions already under consideration? In two-player zero-sum planning and multi-agent reinforcement learning, this tradeoff is exemplified by the contrast between minimax-style tree search, which pushes depth using fast heuristic leaf evaluations, and Monte Carlo Tree Search (MCTS), which allocates computation to repeated stochastic sampling and consequently grows a narrower, more selective tree.

A closely related tradeoff also arises in long-horizon reasoning and planning with large language models, where exhaustive tree expansion quickly becomes infeasible and each rollout, especially one involving a long chain-of-thought, can be costly. At a high level, this setting points to a common abstraction: a learner must allocate a limited sampling budget between \emph{cheap but biased} information and \emph{expensive but accurate} information. This viewpoint is closely connected to multi-fidelity bandits, in which an agent may query each arm at different fidelities, trading cost against accuracy in estimating its value. While the multi-fidelity multi-armed bandit (\texttt{MF-MAB}) literature \citep{kandasamy2016multi,poiani2022multifidelity,wang2023multi,poiani2024optimal} formalizes this tradeoff in flat action spaces, and best-action identification in MCTS \citep{kaufmann2017monte} studies fixed-confidence identification in minimax trees under a single stochastic oracle, the corresponding problem of \emph{multi-fidelity best-action identification in minimax trees} remains largely unexplored.

In this work, we formalize this tradeoff in a minimax tree with two complementary sources of information: a \emph{fast} oracle, which is cheap but biased, and a \emph{slow} oracle, which is more costly but statistically accurate. The agent must adaptively allocate computation across the tree, deciding whether to expand the search using fast evaluations or to spend additional budget refining the values of nodes already on the frontier. Our objective is to identify the optimal root action under a fixed-confidence requirement while minimizing total evaluation cost. 
This exposes a principled crossover between $\alpha$-$\beta$-style expansion and MCTS-style sampling: at a frontier node, should the algorithm go one level deeper using the cheap biased fast oracle, or stay and reduce uncertainty using the expensive unbiased slow oracle?

\vspace{-0.5em}
\paragraph{Contributions.}
Our work fills this gap by proposing 2FFS, a two-fidelity search algorithm for fixed-confidence best-action identification in stochastic minimax trees. 2FFS adaptively compares two routes at each decision-critical node: recursively expanding the tree using cheap biased fast evaluations, or locally certifying the node value using expensive but statistically accurate slow evaluations.  On the theoretical side, we prove \((\varepsilon,\delta)\)-PAC correctness and,
for exact identification under a local regularity assumption, establish finite stopping together with a general-depth cost upper bound whose overhead is
polynomial in the tree depth. The same exact-identification upper bound also gives a conservative bound for \(\varepsilon>0\) runs, since the approximate stopping rule can only terminate earlier. To our knowledge, this is the first analytic framework for multi-fidelity bandits in a minimax-tree setting. Empirically, on synthetic stochastic minimax trees, we show that 2FFS substantially improves both sampling and computational efficiency compared with existing baseline.
 
\vspace{-0.5em}
\paragraph{Related works.} Viewed through the compute-allocation lens above, classical perfect-information two-player zero-sum search is dominated by minimax with $\alpha$-$\beta$ pruning, where efficiency is driven by deeper expansion and pruning under deterministic leaf evaluation \cite{Knuth1975alpha,Baudet1978full}. MCTS takes the opposite perspective, treating action selection as an exploration--exploitation problem guided by bandit principles (UCB) and stochastic samples \cite{kocsis2006bandit}. Empirical hybrids such as implicit minimax backups suggest these paradigms are complementary, but also echo classic game-tree pathology results: applying minimax to noisy point estimates can mis-rank moves as depth grows \cite{lanctot2014monte,pearl1983game}. On the theory side, BAI-MCTS formalizes root-action selection as fixed-confidence BAI in minimax trees with an unbiased sampling oracle \cite{kaufmann2017monte}; in parallel, multi-fidelity bandits and Bayesian optimization study cost--accuracy tradeoffs between cheap biased approximations and expensive accurate evaluations \cite{kandasamy2016multi,Kandasamy2016gaussian,kandasamy2017multi,poiani2022multifidelity,wang2023multi,poiani2024optimal}. We defer further related works to Appendix~\ref{app:related-works}.
\section{Problem Setup and Preliminaries}\label{sec:prelim}
\vspace{-0.1em}
This section formalizes fixed-confidence best-action identification in two-fidelity stochastic minimax trees and introduces the notation used by the algorithm and analysis. We first define the tree model, the fast and slow oracles, and the frontier maintained by the learner. Then, we define local and propagated confidence intervals, the effective gaps that determine
decision-relevant precision, and the recursive oracle complexity and dyadic scale surrogates used in the upper-bound proof.
\vspace{-0.2em}
\subsection{Minimax tree with two-fidelity oracles}\label{sec:tree}
\vspace{-0.2em}
\paragraph{Stochastic minimax tree.}
Let $\mathcal{T}$ be a finite rooted tree with root $r$ and internal nodes alternating between \textsc{Max} and \textsc{Min}, with root $r$ as \textsc{Max} node. For each node $v \in \mathcal{T}$, we denote $\mathrm{Ch}(v)$ as its set of children and $d(v)$ as its depth, and we define \emph{remaining depth} of node $v$ as
\(
h(v) \coloneqq D - d(v)
\),
where \(D\) is the depth of \(\mathcal{T}\). For notational simplicity, we assume the tree has balanced depth: a node \(v\)
is a leaf if and only if \(h(v)=0\).

Let \(\mathcal{L}\) be the set of leaves, and let each \(\ell\in\mathcal{L}\)
have mean payoff \(\mu_\ell\), then its minimax value \(V^*(v)\) is defined by
\(
V^*(\ell) = \mu_\ell
\). For every internal node $v$,
\[
V^*(v)=
\begin{cases}
\max_{u\in \mathrm{Ch}(v)} V^*(u), & \text{if } v \text{ is a \textsc{Max} node},\\[4pt]
\min_{u\in \mathrm{Ch}(v)} V^*(u), & \text{if } v \text{ is a \textsc{Min} node}.
\end{cases}
\]
For each root action $a \in \mathrm{Ch}(r)$, we write
\(
V^*(r,a) \coloneqq V^*(a)
\). Thus, \(V^*(\cdot)\) denotes node values, while \(V^*(r,\cdot)\) denotes
root-action values. We assume throughout that \(|\mathrm{Ch}(r)|\ge 2\), so the BAI problem is nontrivial, and that the optimal root action is
unique, which is given by
\(
a^* \in \arg\max_{a \in \mathrm{Ch}(r)} V^*(r,a)
\), along with the \emph{root gap}
\(
\Delta_* \coloneqq V^*(r,a^*) - \max_{a \neq a^*} V^*(r,a)
\).

\begin{definition}[Frontier]\label{def:frontier}
A \emph{frontier} $\mathcal{F}\subseteq \mathcal{T}\setminus\{r\}$ is a complete cut of the tree, meaning that every root-to-leaf path intersects $\mathcal{F}$ in exactly one node.
\end{definition}
\vspace{-0.5em}
At each round \(t\), the algorithm has an explored subtree
\(\mathcal{T}_t\subseteq\mathcal{T}\) containing the root. A node becomes
\emph{exposed} when it first enters \(\mathcal{T}_t\). The current frontier
\(\mathcal{F}_t\) consists of the exposed nodes at which recursive expansion
has not yet proceeded; it forms a complete cut of the full tree. Expanding an
exposed internal node reveals all of its children, queries the fast oracle at
those children, and replaces the node by its children in the frontier. Any
local slow samples already collected at the expanded node remain available for
all future interval computations.
\vspace{-0.5em}
\paragraph{Two-fidelity node-wise evaluations.}
Each exposed non-root node \(v\) can be evaluated through two resources. In the
multi-fidelity bandit viewpoint
\citep{kandasamy2016multi,poiani2022multifidelity,poiani2024optimal}, querying
an arm at fidelity \(m\) produces a sample from a distribution \(\nu_{v,m}\)
with mean \(\mu_{v,m}\) and known query cost \(\lambda_m\). In our tree setting,
the node \(v\) plays the role of the arm, and we use two fidelities. The
\emph{fast} oracle \((m=1)\) is cheap and deterministic, but may be biased. The
\emph{slow} oracle \((m=2)\) is more expensive and stochastic, but unbiased for
the true minimax value \(V^*(v)\). Thus, the fast oracle provides a low-fidelity
estimate and the slow oracle provides a high-fidelity estimate of the same node
value.

Note that the fast-oracle bias is controlled by the remaining depth $h(v)$ through a known
calibration envelope
\(
B:\{0,\ldots,D\}\to \mathbb{R}_{\ge 0}
\), which satisfies
\[
B(0)=0,
\qquad
B(h+1)\ge B(h)\quad \text{for every } h\in\{0,\ldots,D-1\},
\]
which simply means that the deterministic evaluator becomes more accurate near the leaves. We
normalize query costs as
\(
\lambda_1=1
\)
and
\(
\lambda_2=c\ge 1
\).
The two oracles are defined formally below.

\begin{definition}[Fast oracle $F$]\label{def:fast} The fast oracle returns a deterministic value $V_F(v)$. Equivalently, $\nu_{v,1}$ is a degenerate distribution at $V_F(v)$, so $\mu_{v,1}=V_F(v)$ and
    \(
    |V_F(v)-V^*(v)| \leq B(h(v)).
    \)
\end{definition} 

\begin{definition}[Slow oracle $S$]\label{def:slow} For each node $v$, repeated slow-oracle queries return independent $\sigma$-sub-Gaussian samples $Y_{v,s}$ with distribution $\nu_{v,2}$ and mean
    \(
    \mu_{v,2}=V^*(v).
    \)
\end{definition}

\begin{remark}
We consider an unbiased slow oracle merely for clean theoretical analysis; in \emph{\S\ref{sec:exp}}, we instantiate it by adding actual noise to true minimax values. Also, unlike classical BAI-MCTS \citep{kaufmann2017monte}, where stochastic samples are attached to leaf rollouts, our slow oracle may be queried at any exposed non-root node. This is essential to compare local certification against recursive expansion.
\end{remark}

\vspace{-0.5em}
\subsection{Intervals, effective gaps, and recursive oracle complexity}
\vspace{-0.15em}
In the fixed-confidence setting, a strategy adaptively chooses frontier nodes and query types, stops at a stopping time \(\tau\), and outputs a recommended action \(\hat a_\tau \in \mathrm{Ch}(r)\). Let \(N^F(\tau)\) denote the total number of fast-oracle queries issued by time \(\tau\), and let \(N_v^S(\tau)\) denote the number of slow-oracle queries issued at node \(v\). The total searching cost is 
\begin{equation*}\label{eq:c-tau}
   C_\tau \coloneqq N^F(\tau) + c \sum_{v \in \mathcal{T}} N_v^S(\tau),
\end{equation*}
where $c$ is the slow-oracle cost $\lambda_2$. Given $(\varepsilon,\delta) \!\in\! [0,\infty)\! \times\! (0,1)$, we seek strategies that satisfy the $(\varepsilon,\delta)$-PAC guarantee:
\(
\mathbb{P}\!\left(V^*(r,a^*)\! -\! V^*(r,\hat{a}_\tau) \!> \!\varepsilon \right)\!\leq\! \delta
\), while minimizing the expected cost $\mathbb{E}[C_\tau]$.

We now introduce the interval objects used by 2FFS and the
instance-dependent oracle complexity used in the upper bound. The construction
has three layers: (i) local intervals combine the cheap fast evaluation with
slow statistical evidence at the same node; (ii) propagated intervals carry
child information upward through the minimax recursion; (iii) finally, we establish the ideal amount of oracle complexity needed to certify only the precision that can affect the root decision.
\vspace{-0.5em}
\paragraph{Confidence allocation \& intervals.} We first fix a feasible node-wise confidence allocation \(\boldsymbol{\delta}=(\delta_v)_{v\in\mathcal T\setminus\{r\}}\), which indicates that 
\(
\delta_v\in(0,1)\) and \(
\sum_{v\in\mathcal T\setminus\{r\}}\delta_v\le \delta
\). For every exposed non-root node \(v\), the fast oracle gives the deterministic
interval
\begin{equation}\label{eq:I-fast}
I_v^F=[V_F(v)-B(h(v)),\, V_F(v)+B(h(v))].
\end{equation}
For slow samples, let \(\widehat V_{v,n}\) be the empirical mean of the first
\(n\) slow-oracle samples at \(v\). Following from fixed-confidence bandit literature \citep{garivier2016optimal,kaufmann2017monte,howard2021time,kaufmann2021mixture,poiani2022multifidelity}, we use the time-uniform radius
\(\beta_v(\cdot,\delta_v)\) satisfying
\begin{equation}\label{eq:conf-radius}
\mathbb{P}\left(
\exists\, n\ge 1:\,
\left|\widehat V_{v,n}-V^*(v)\right|>\beta_v(n,\delta_v)
\right)\le \delta_v,
\end{equation}
where \(n\mapsto \beta_v(n,\delta_v)\) is non-increasing and
\(\beta_v(n,\delta_v)\to 0\) as \(n\to\infty\). Let \(N_v^S(t)\) be the
number of slow queries issued at \(v\) by round \(t\), the slow interval is then the
running intersection
\begin{equation}\label{eq:slow-interval}
I_v^S(t)=
\begin{cases}
\displaystyle\bigcap_{j=1}^{N_v^S(t)}
\left[
\widehat V_{v,j}-\beta_v(j,\delta_v),\,
\widehat V_{v,j}+\beta_v(j,\delta_v)
\right], & N_v^S(t)\ge 1,\\[10pt]
(-\infty,+\infty), & N_v^S(t)=0,
\end{cases}
\end{equation}
which further gives the direct local interval
\(
I_v^{\mathrm{loc}}(t)\coloneqq I_v^F\cap I_v^S(t)
\).
Finally, we define the simultaneous-validity event by Eq.~\eqref{eq:E-delta}. Also, by Eq.~\eqref{eq:conf-radius} and the union bound, we always have
\(\mathbb P(\mathcal E_\delta)\ge 1-\delta\).
\begin{equation}\label{eq:E-delta}
\mathcal E_\delta
\coloneqq
\bigcap_{v\in\mathcal T\setminus\{r\}}
\bigcap_{n\ge 1}
\left\{
\left|\widehat V_{v,n}-V^*(v)\right|
\le
\beta_v(n,\delta_v)
\right\}.
\end{equation}
\vspace{-1.5em}
\paragraph{Propagated intervals.} If an internal node \(v\) has been expanded, its children are exposed and carry
effective intervals \(I_u(t)=[L_u(t),U_u(t)]\). The child-backup interval is
\begin{equation}\label{eq:minmax-propegate}
I_v^{\mathrm{ch}}(t)=
\begin{cases}
\left[
\max_{u\in \mathrm{Ch}(v)} L_u(t),\,
\max_{u\in \mathrm{Ch}(v)} U_u(t)
\right], & \text{if } v \text{ is a \textsc{Max} node},\\[8pt]
\left[
\min_{u\in \mathrm{Ch}(v)} L_u(t),\,
\min_{u\in \mathrm{Ch}(v)} U_u(t)
\right], & \text{if } v \text{ is a \textsc{Min} node}.
\end{cases}
\end{equation}
If \(v\) is a leaf or not expanded yet, we set
\(I_v^{\mathrm{ch}}(t)=(-\infty,+\infty)\). For every explored non-root node,
the effective interval combines direct local evidence and descendant evidence:
\(
I_v(t)=I_v^{\mathrm{loc}}(t)\cap I_v^{\mathrm{ch}}(t)
\). We note that the minimax backup in Eq.~\eqref{eq:minmax-propegate} preserves validity and
does not increase interval width. The following two lemmas formalize these
properties; their proofs are given in Appendix~\ref{app:prelim-proofs}.

\begin{lemma}\label{lem:minimax-propagation}
Fix a frontier \(\mathcal{F}\) and suppose that each frontier interval
\([L_v,U_v]\), \(v\in\mathcal{F}\), contains \(V^*(v)\). Under the recursive
backup in Eq.~\eqref{eq:minmax-propegate}, \emph{(i)} every ancestor \(u\) of a
frontier node satisfies \(V^*(u)\in[L_u,U_u]\); and \emph{(ii)} if every
frontier interval has half-width at most \(w\), then every propagated ancestor
interval also has half-width at most \(w\).
\end{lemma}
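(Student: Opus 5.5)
The plan is to argue by induction on the height of the explored subtree above the frontier. For an ancestor \(u\) of frontier nodes, let \(\mathrm{ht}(u)\) be the maximal number of edges on a downward path from \(u\) to a node of \(\mathcal F\). Since \(\mathcal F\) is a complete cut, every child of an ancestor \(u\) is either in \(\mathcal F\) or is itself an ancestor of frontier nodes with strictly smaller height, so the recursion in Eq.~\eqref{eq:minmax-propegate} is well-founded. The base case is the frontier itself, where both claims hold by hypothesis: \(V^*(v)\in[L_v,U_v]\) and \(U_v-L_v\le 2w\). In the lemma the ancestor interval is the pure backup \(I_u^{\mathrm{ch}}\). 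If one instead uses the effective interval \(I_u=I_u^{\mathrm{loc}}\cap I_u^{\mathrm{ch}}\), intersecting with a valid local interval, which holds on \(\mathcal E_\delta\), preserves containment of \(V^*(u)\) and cannot increase the width. So it suffices to treat the backup.

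\textbf{Step (i), validity.} Take a \textsc{Max} node \(u\) whose children \(c\) all satisfy \(L_c\le V^*(c)\le U_c\), by the induction hypothesis. Then
\[
\max_c L_c\le \max_c V^*(c)=V^*(u)\le \max_c U_c,
\]
using monotonicity of \(\max\) coordinatewise. The \textsc{Min} case is identical with \(\min\).

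\textbf{Step (ii), width.} The key elementary fact is that \(\max\) and \(\min\) are \(1\)-Lipschitz in the sup norm. Let \(c^\dagger\) attain \(\max_c U_c\). Then
\[
\max_c U_c-\max_c L_c\le U_{c^\dagger}-L_{c^\dagger}\le 2w .
\]
Similarly, for a \textsc{Min} node let \(c_\dagger\) attain \(\min_c L_c\). Then
\[
\min_c U_c-\min_c L_c\le U_{c_\dagger}-L_{c_\dagger}\le 2w .
\]
Nonemptiness, i.e.\ lower endpoint \(\le\) upper endpoint, follows because \(L_c\le U_c\) for every child, so the width is nonnegative. Induction then gives half-width at most \(w\) at every ancestor.

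There is no real obstacle here. The only care needed is in setting up the induction correctly, since frontier nodes may lie at different depths, which is why I induct on height above the cut rather than on depth. A second point of care is making explicit that possibly unbounded local intervals, \(I^S=(-\infty,\infty)\) before any slow samples, do not matter: they only enter through intersection, which preserves both properties.
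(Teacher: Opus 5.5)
Your proposal is correct and follows essentially the same route as the paper: induction on the maximal downward distance from \(u\) to the cut (the paper's \(m(u)\)), monotonicity of \(\max/\min\) for validity, and a witness child attaining \(\max_c U_c\) (resp.\ \(\min_c L_c\)) for the width bound. Your extra remark that intersecting with a valid local interval preserves both properties is a harmless addition that the lemma as stated does not need.
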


\begin{lemma}[Simultaneous interval validity]\label{lem:interval-validity}
On \(\mathcal E_\delta\), every effective interval constructed from
Eqs.~\eqref{eq:I-fast}--\eqref{eq:minmax-propegate} is valid at every round \(t\). In particular,
\(V^*(v)\in I_v(t)\) for every explored non-root node \(v\), and the intervals
of root children and the root backup are valid as well.
\end{lemma}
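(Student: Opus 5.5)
The plan is to argue by strong induction on the remaining depth \(h(v)\), working pathwise on \(\mathcal E_\delta\) and at a fixed round \(t\). First I will record the two local facts that hold for every exposed non-root node \(v\). The fast interval \(I_v^F\) always contains \(V^*(v)\), deterministically, by Definition~\ref{def:fast}. On \(\mathcal E_\delta\), every constituent interval \([\widehat V_{v,j}-\beta_v(j,\delta_v),\widehat V_{v,j}+\beta_v(j,\delta_v)]\) with \(j\le N_v^S(t)\) contains \(V^*(v)\), so the running intersection \(I_v^S(t)\) in Eq.~\eqref{eq:slow-interval} contains it as well. If \(N_v^S(t)=0\), then \(I_v^S(t)=\mathbb R\) and this is trivial. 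Hence \(V^*(v)\in I_v^{\mathrm{loc}}(t)\). These slow samples may have been collected at \(v\) before it was expanded; they stay in the intersection and remain valid, because \(\mathcal E_\delta\) is a statement about all \(n\ge1\), uniformly in time.

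The induction runs over explored nodes, in order of increasing \(h\). For the base case, take an explored node that is a leaf or is not yet expanded. Then \(I_v^{\mathrm{ch}}(t)=\mathbb R\), so \(I_v(t)=I_v^{\mathrm{loc}}(t)\ni V^*(v)\).

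For the inductive step, take an expanded internal node \(v\). All of its children are exposed, and they have strictly smaller remaining depth, so by the induction hypothesis \(V^*(u)\in[L_u(t),U_u(t)]\) for every \(u\in\mathrm{Ch}(v)\). Suppose \(v\) is a \textsc{Max} node. Since \(L_u\le V^*(u)\le V^*(v)\) for each child, we get \(\max_u L_u\le V^*(v)\). Choosing \(u^*\) attaining the max of \(V^*\), we get \(V^*(v)=V^*(u^*)\le U_{u^*}\le\max_u U_u\). The \textsc{Min} case is symmetric. This is exactly the computation behind Lemma~\ref{lem:minimax-propagation}(i), applied to the single level \(\mathrm{Ch}(v)\), which is a valid frontier for the subtree rooted at \(v\). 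So \(V^*(v)\in I_v^{\mathrm{ch}}(t)\), and intersecting with the local interval gives \(V^*(v)\in I_v(t)\).

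One subtlety must be handled here: the effective interval could be empty. On \(\mathcal E_\delta\) this cannot happen, since \(V^*(v)\) lies in every piece of the intersection. This is why the argument works pathwise on the event rather than intersection by intersection.

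Finally, root children are non-root explored nodes, so they are already covered. The root backup \(I_r^{\mathrm{ch}}(t)\) is obtained from them by the same one-level \textsc{Max} argument, giving \(V^*(r)\in I_r^{\mathrm{ch}}(t)\); the root has no local interval. Since the argument fixes an arbitrary \(t\) and uses only the time-uniform event \(\mathcal E_\delta\), validity holds simultaneously for all rounds.

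The main point needing care is making the induction well-founded. The induction is over explored nodes, and the effective interval of an expanded node depends on its children's effective intervals, not on the children's local intervals alone. Inducting on \(h(v)\) rather than over the frontier handles this, because children of an expanded node are always exposed and always have smaller \(h\).
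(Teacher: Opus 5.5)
Your proposal is correct and follows essentially the same route as the paper. Both first get local validity from the deterministic fast envelope together with the slow running intersection on \(\mathcal E_\delta\), then argue bottom-up through the explored tree with the \textsc{Max}/\textsc{Min} backup inequality, and finally treat the root by the same one-level backup; your explicit induction on \(h(v)\) is just a precise form of the paper's ``from the leaves upward'' induction.
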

\vspace{-0.5em}
\paragraph{Effective gap.}
A node does not necessarily need to be estimated down to its own local sibling gap. The value of a node \(v\) can affect the final recommendation only if its
uncertainty propagates through every ancestor and changes the comparison between root actions. Along this path, any large value separation creates a bottleneck: once the uncertainty below \(v\) is smaller than that separation,
further refinement cannot change the root decision. We therefore define the \emph{effective gap} of a node as the precision scale needed to certify that node for the purpose of root-action identification.

\begin{definition}[Effective gap]\label{def:eff-gap}
Set the root effective gap to \(\Delta_r^{\mathrm{eff}}\coloneqq \Delta_*\).
For every non-root node \(v\), let \(p(v)\) denotes the parent of \(v\), we define recursively
\[
\Delta_v^{\mathrm{eff}}
\coloneqq
\max\!\left\{
\Delta_{p(v)}^{\mathrm{eff}},\,
|V^*(v)-V^*(p(v))|
\right\}.
\]
\end{definition}
\vspace{-0.5em}
Thus, \(\Delta_v^{\mathrm{eff}}\) records the largest bottleneck encountered on the path from the root to \(v\). If this quantity is large, then \(v\) only needs to
be estimated coarsely: any error smaller than \(\Delta_v^{\mathrm{eff}}\) is
too small to change the backed-up values above \(v\), and therefore cannot change the root decision.
\vspace{-0.5em}
\paragraph{Reference recursive oracle complexity.}
To derive cost guarantees, we first define an idealized optimal oracle complexity \(H^*\) under perfect gap information within our recursive certification model. Our main theorem shows that 2FFS, despite not knowing the effective gaps or the cheaper route in advance, has cost within a polynomial-depth factor of this ideal reference complexity.

For a target precision \(\rho>0\), we first define the local cost of certifying
a single exposed node \(v\). Let
\[
m_v(\rho,\delta_v)
\coloneqq
\inf\!\left\{
n\in\mathbb N:\,n\ge 1,\,
\beta_v(n,\delta_v)\le \tfrac14 \rho
\right\}.
\]
After \(m_v(\rho,\delta_v)\) slow samples, the slow confidence interval has
width at most \(\rho/2\). The factor \(1/4\) is chosen so that either the fast
interval alone, \(2B(h(v))\le \rho/2\), or the slow confidence interval alone,
\(2\beta_v(n,\delta_v)\le \rho/2\), is sufficient to make the direct local
interval \(I_v^{\mathrm{loc}}\) have width at most \(\rho/2\). 

The corresponding local certification cost is
\begin{equation}\label{eq:gamma-rho}
\Gamma_v(\rho,\delta_v)=
\begin{cases}
0, & \text{if } B(h(v))\le \tfrac14 \rho,\\[3pt]
c\cdot m_v(\rho,\delta_v), & \text{if } B(h(v))> \tfrac14 \rho.
\end{cases}
\end{equation}
Thus, \(\Gamma_v(\rho,\delta_v)\) is the slow-oracle cost needed to certify the
value of \(v\) locally at precision \(\rho\), after \(v\) has already been
exposed. The first case means that the fast interval is already narrow enough,
so no slow sample is needed at \(v\).

We now define the recursive subtree oracle complexity. For a fixed confidence
allocation \(\boldsymbol{\delta}\), \(J_v^*(\boldsymbol{\delta})\) is the ideal
cost to certify the subtree rooted at \(v\) at its effective precision
\(\Delta_v^{\mathrm{eff}}\). At each internal node, the ideal planner chooses
the cheaper of two routes: certify \(v\) locally, or expand \(v\) and certify
all child subtrees recursively:
\begin{equation}\label{eq:J-star}
J_v^*(\boldsymbol{\delta})=
\begin{cases}
\Gamma_v(\Delta_v^{\mathrm{eff}},\delta_v), & \text{if } h(v)=0,\\[5pt]
\min\!\left\{
\Gamma_v(\Delta_v^{\mathrm{eff}},\delta_v),\,
|\mathrm{Ch}(v)|+\sum_{u\in \mathrm{Ch}(v)} J_u^*(\boldsymbol{\delta})
\right\}, & \text{if } h(v)\ge 1.
\end{cases}
\end{equation}
This induces the root oracle complexity
\(
H(\boldsymbol{\delta})
\coloneqq
|\mathrm{Ch}(r)|+\sum_{a\in \mathrm{Ch}(r)} J_a^*(\boldsymbol{\delta})
\),
where \(|\mathrm{Ch}(r)|\) is the cost of initially exposing all root actions, and gives us the optimized recursive oracle complexity $H^*$:
\begin{equation}\label{eq:H-star}
H^*
\coloneqq
\inf_{\boldsymbol{\delta}\,\mathrm{feasible}}
H(\boldsymbol{\delta}).
\end{equation}

\begin{remark}\label{rmk:H^*}
We note that instance-dependent complexity measures are standard in fixed-confidence BAI. Classical bounds are often stated in terms of gap-dependent hardness quantities such as
\(H_1=\sum_{i\neq i^*}\Delta_i^{-2}\), while refined analyses use information-theoretic characteristic times to quantify the intrinsic difficulty of an instance \citep{audibert2010best,jamieson2014lilucb,kaufmann2016complexity}. BAI-MCTS and multi-fidelity BAI follow the same convention, using tree- or fidelity-dependent oracle complexities to state adaptive cost guarantees \citep{kaufmann2017monte,poiani2022multifidelity,poiani2024optimal}. Our \(H^*\) plays this role in the exact same way, serving as an analysis-only recursive oracle complexity under perfect gap information.
\end{remark}
\vspace{-0.5em}
\paragraph{Dyadic scale surrogates.}
Following from Remark~\ref{rmk:H^*}, the algorithm does not know the effective gaps and therefore cannot target
\(J_v^*(\boldsymbol{\delta})\) directly. Instead, 2FFS works on a dyadic
precision grid, which initializes by exposing the root children with fast
queries, then sets
\[
\rho_0\coloneqq \max_{a\in \mathrm{Ch}(r)} \bigl(U_a(0)-L_a(0)\bigr),
\qquad
\rho_{k+1}\coloneqq \tfrac12 \rho_k,
\qquad k\ge 0.
\]
If \(\rho_0=0\), the root-child values are already known exactly and the
algorithm stops immediately; otherwise the following scales are used. A node
\(v\) is \emph{scale-\(k\) safe} at round \(t\) if
\(
U_v(t)-L_v(t)\le \tfrac12 \rho_k
\). The scale-\(k\) local certification cost is
\(
\Gamma_v^{(k)}(\delta_v)\coloneqq \Gamma_v(\rho_k,\delta_v)
\), and the operational rules that decide which scale and which node to refine are
given in Algorithm~\ref{alg:2ffs-t}.

\begin{figure*}[t]
    \centering
    \includegraphics[width=\linewidth]{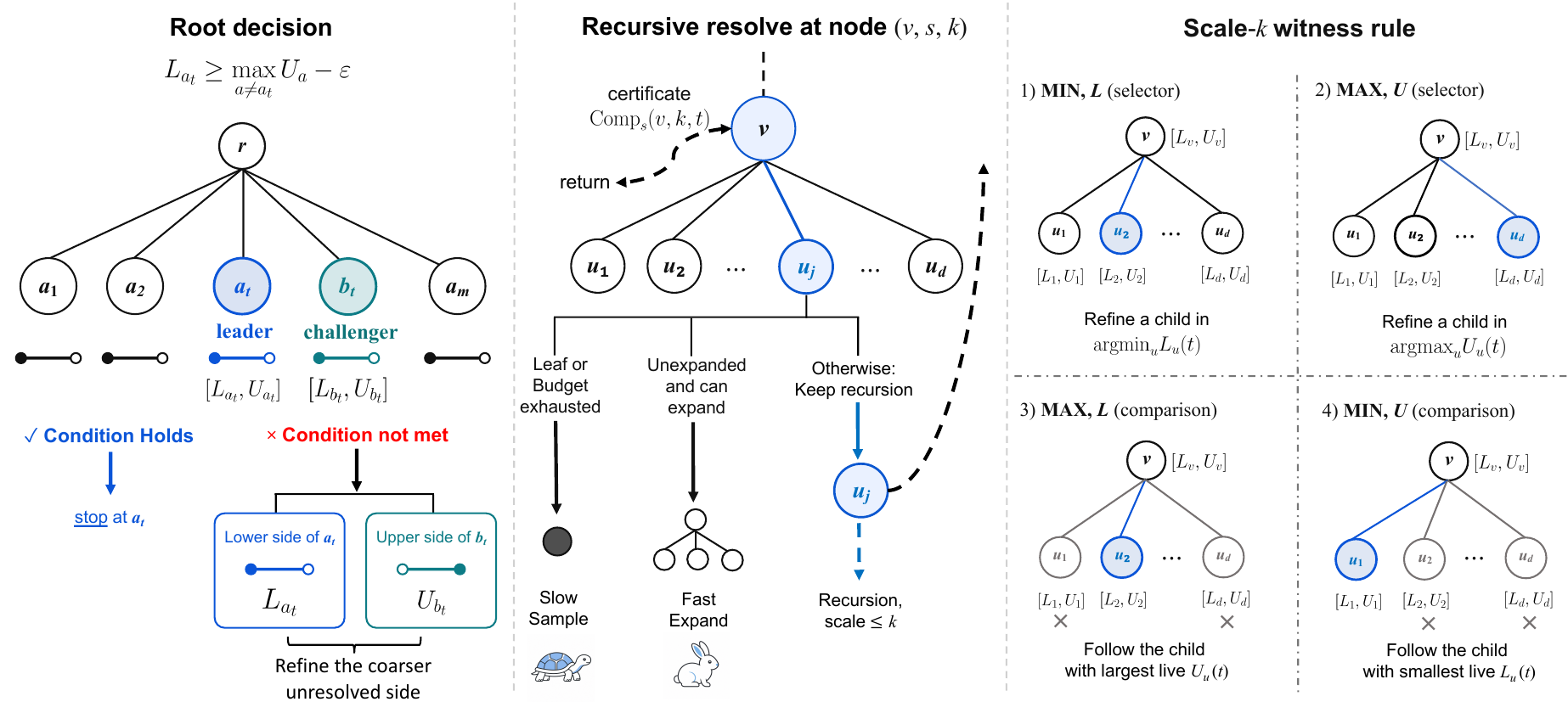}
    \caption{Main algorithmic pipeline and component for 2FFS.}
    \label{f1}
    \vspace{-1.5em}
\end{figure*}
\vspace{-0.5em}
\section{2FFS Tree Search: Two-Fidelity Fast-Slow Tree Search}\label{sec:fs2fa}
\vspace{-0.5em}
2FFS maintains a confidence interval \([L_v(t),U_v(t)]\) for each exposed node
and repeatedly certifies the root decision. At each round, the algorithm first
checks whether some root action is already \(\varepsilon\)-optimal according to
the current intervals. If not, it selects one decision-relevant side of one
root child and calls a recursive resolver \(\textsc{Resolve}(v,s,k)\), where
\(s\in\{L,U\}\) is the endpoint to certify and \(k\) is the active dyadic
scale. Figure~\ref{f1} visualizes these three pieces: the root
test, the recursive resolver, and the scale-\(k\) child-selection rules.

At the root, let
\(
a_t\in \argmax_{a\in \mathrm{Ch}(r)}L_a(t)\) and \(
b_t\in \argmax_{a\in\mathrm{Ch}(r)\setminus\{a_t\}}U_a(t)
\)
be the current leader and challenger. 2FFS stops once a root action is
certified to be \(\varepsilon\)-optimal:
\(
L_{\hat a_t}(t)\!\ge\! \max_{a\neq \hat a_t}U_a(t)-\varepsilon 
\). If this test fails, the leader's lower endpoint and the challenger's unresolved
endpoint information are the two quantities that can still change the root
decision. 2FFS refines whichever of these two root-side certificates is
currently coarser. For the leader this means the lower side \(L_{a_t}\); for
the challenger it means the coarsest unresolved side among \(L_{b_t}\) and
\(U_{b_t}\). 

The call \(\textsc{Resolve}(v,s,k)\) tries to certify endpoint \(s\) of node
\(v\) at scale \(\rho_k\). It has two competing routes. The local route takes
slow samples directly at \(v\). The recursive route expands below \(v\), uses
fast evaluations on the children, and continues recursively only along the
currently relevant child. These routes remain available throughout the run:
even after \(v\) has been expanded, 2FFS may still take slow samples at \(v\).
This \emph{local reversibility} is important because the cheaper route is not
known in advance. Recursive spending at node \(v\) and scale \(k\) is capped by
\(
\mathsf B_{v,k}^{\mathrm{rec}}
=
\alpha_{h(v)}\Gamma_v^{(k)}(\delta_v)
\), and every child call uses a scale no finer than the parent scale \(k\). If the
recursive route cannot make admissible progress within this budget, the
resolver falls back to one slow local query at \(v\).

\begin{algorithm}[t]
\caption{\textsc{2FFS} algorithm skeleton}
\label{alg:2ffs-t}
\begin{algorithmic}[1]
\State \textbf{Input:} confidence schedule \((\delta_v)\), accuracy \(\varepsilon\), depth budgets \((\alpha_h)_{h=0}^D\)
\State Initialize the root children with fast-oracle intervals and propagate intervals to the root
\While{no root child \(\hat a\) satisfies \(L_{\hat a}\ge \max_{a\neq \hat a}U_a-\varepsilon\)}
    \State Let \(a_t\gets\argmax_a L_a(t)\) be the leader and \(b_t\gets\argmax_{a\neq a_t}U_a(t)\) be the challenger
    \State Choose the coarser unresolved root side: \(L_{a_t}\) or the challenger's coarsest unresolved side
    \LineComment{\textcolor{blue}{\(x\) is the selected node, \(s\) the selected side, and \(k\) the active scale.}}
    \State \Call{Resolve}{$x,s,k$} 
\EndWhile
\State \textbf{return} any separated root child
    
\Procedure{Resolve}{$v,s,k$}
    \If{the side certificate \(\mathsf{Comp}_s(v,k,t)\) already holds}
        \State \textbf{return}
    \EndIf
    \If{\(v\) is a leaf, or \(v\)'s recursive budget at scale \(k\) is exhausted}
        \State Take one slow local sample at \(v\) and update affected intervals and certificates
    \ElsIf{\(v\) is unexpanded and expansion fits the remaining recursive budget}
        \State Expand \(v\), query the fast oracle at its children, and update affected intervals and certificates
    \Else
        \State Select the current live child using selector-comparison rule in Figure~\ref{f1} (right).
        \State Recursively resolve the selected child at an unresolved scale no larger than \(k\)
        \If{the child call is blocked only by the inherited parent cap}
            \State Return the blocked status upward
        \ElsIf{the child call is blocked by \(v\)'s own recursive budget}
            \State Take one slow local sample at \(v\)
        \EndIf
    \EndIf
\EndProcedure
\end{algorithmic}
\end{algorithm}

Inside an expanded node, the recursive route follows the live child rule in
Figure~\ref{f1} (right). In selector cases, the live child is the endpoint
that currently determines the minimax backup. In comparison cases, 2FFS does
not force all children to become fully certified. Instead, it lazily removes a
child once the child is already outside the scale-\(k\) comparison margin, or
once the child has the required same-side certificate. Among the remaining
live children, only the current endpoint-most blocker is refined recursively. The comparison rows use the following discharge margins. For a \textsc{Max}
node on side \(L\), write \(\lambda_L(v,t)=\max_u L_u(t)\); a child is no
longer live if
\[
U_u(t)\le \lambda_L(v,t)+\rho_k/2
\quad\text{or}\quad
\mathsf{Comp}_L(u,k,t).
\]
For a \textsc{Min} node on side \(U\), write
\(\lambda_U(v,t)=\min_u U_u(t)\); a child is no longer live if
\[
L_u(t)\ge \lambda_U(v,t)-\rho_k/2
\quad\text{or}\quad
\mathsf{Comp}_U(u,k,t).
\]
The formal implementation maintains scale-local counters, monotone side flags
\(\mathsf{Done}_s(v,k)\), observable certificates \(\mathsf{Comp}_s(v,k,t)\),
and capped unresolved scales \(K_s^{\le k}(v,t)\). These details are given in
Appendix~\ref{app:alg-details}, which are
not needed to understand the main flow above but important for the proofs.

\vspace{-0.25em}
\subsection{Theoretical results}
\vspace{-0.25em}
We present three results: algorithmic correctness and a general-depth cost upper bound guarantee along with finite stopping time guarantee. First, we provide algorithm's $(\varepsilon,\delta)$-PAC correctness:

\begin{theorem}[PAC correctness]\label{thm:pac-correct}
Fix any feasible confidence allocation
\(\boldsymbol{\delta}=(\delta_v)_{v\in\mathcal T\setminus\{r\}}\) and any
\(\varepsilon\ge 0\). Consider \emph{Algorithm~\ref{alg:2ffs-t}} run with accuracy
\(\varepsilon\), let \(\tau\) be its stopping time, and, on the event
\(\{\tau<\infty\}\), let \(\hat a_\tau\) be the returned root child. Then, 
\(
\mathbb P\!\left(
\tau<\infty
\text{ \emph{and} }
V^*(r,a^*)-V^*(r,\hat a_\tau)>\varepsilon
\right)
\le \delta
\).
In fact, on the simultaneous-validity event \(\mathcal E_\delta\), every
finite-time recommendation made by the stopping rule is \(\varepsilon\)-optimal.
Consequently, whenever the algorithm stops, its
recommendation is \((\varepsilon,\delta)\)-PAC.
\end{theorem}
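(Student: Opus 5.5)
The plan is to show the deterministic claim first: on the simultaneous-validity event \(\mathcal E_\delta\), any recommendation produced by the stopping rule at a finite round is \(\varepsilon\)-optimal. The probabilistic statement then follows from \(\mathbb P(\mathcal E_\delta^c)\le\delta\). Concretely, I will establish the inclusion
\[
\{\tau<\infty\}\cap\{V^*(r,a^*)-V^*(r,\hat a_\tau)>\varepsilon\}\subseteq \mathcal E_\delta^c .
\]
The right-hand event has probability at most \(\sum_{v}\delta_v\le\delta\), by Eq.~\eqref{eq:conf-radius} and a union bound over the finitely many non-root nodes. This is the bound \(\mathbb P(\mathcal E_\delta)\ge 1-\delta\) already recorded after Eq.~\eqref{eq:E-delta}.

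For the deterministic part, fix an outcome in \(\mathcal E_\delta\) with \(\tau<\infty\). Lemma~\ref{lem:interval-validity} gives \(V^*(a)\in[L_a(\tau),U_a(\tau)]\) for every root child \(a\). The lemma applies because the intervals maintained by Algorithm~\ref{alg:2ffs-t} are exactly those built from Eqs.~\eqref{eq:I-fast}--\eqref{eq:minmax-propegate}.

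Two properties of the construction make this work regardless of the sequence of oracle calls. First, intersections of valid intervals are valid. Second, local slow samples retained at already-expanded nodes only add further valid constraints. So the adaptive, data-dependent choice of which node to refine, including the recursive budgets, fallbacks, and scale bookkeeping, plays no role in validity. The reason is that \(\mathcal E_\delta\) controls every sample prefix \(n\ge1\) at every node simultaneously, so no optional-stopping issue arises.

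The stopping rule says that the returned child \(\hat a=\hat a_\tau\) satisfies
\[
L_{\hat a}(\tau)\ge \max_{a\neq\hat a}U_a(\tau)-\varepsilon .
\]
If \(\hat a=a^*\), the gap is zero and there is nothing to show. Otherwise, chaining validity with the stopping rule gives
\[
V^*(r,\hat a)\ge L_{\hat a}(\tau)\ge U_{a^*}(\tau)-\varepsilon\ge V^*(r,a^*)-\varepsilon,
\]
which is the required \(\varepsilon\)-optimality.

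The step needing the most care is the appeal to Lemma~\ref{lem:interval-validity} for the actual algorithm. I need to check that every quantity used in the stopping test is an effective interval in the sense of Section~\ref{sec:prelim}. In particular, the initial fast-only intervals at the root children must be of this form, and none of the appendix bookkeeping (flags, capped scales, certificates) modifies \(L_a,U_a\) except through Eqs.~\eqref{eq:I-fast}--\eqref{eq:minmax-propegate}. I will verify this by inspecting the update lines of \textsc{Resolve}: each one either appends a slow sample, which updates \(I_v^S\) via Eq.~\eqref{eq:slow-interval}, or exposes children via fast queries, followed by the backup of Eq.~\eqref{eq:minmax-propegate} along ancestors.

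The case \(\rho_0=0\), where the algorithm stops immediately, is covered by the same argument. In that case all root-child intervals are degenerate, valid singletons, so the stopping test holds with the true values. Finally, no claim is made on \(\{\tau=\infty\}\); that event is simply excluded from the PAC statement.
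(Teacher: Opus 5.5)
Your proposal is correct and follows essentially the same route as the paper's proof. You show the error event is contained in \(\mathcal E_\delta^c\), use Lemma~\ref{lem:interval-validity} for the root-child intervals at the finite time \(\tau\), and then chain the stopping rule with \(a=a^*\) in the case \(\hat a_\tau\neq a^*\). Your extra checks (that the bookkeeping changes intervals only through Eqs.~\eqref{eq:I-fast}--\eqref{eq:minmax-propegate}, the time-uniformity remark, and the \(\rho_0=0\) case) are harmless refinements that the paper leaves implicit.
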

\begin{remark}
We note that \emph{Theorem~\ref{thm:pac-correct}} uses only simultaneous interval validity and the root stopping rule and does not use any assumptions in the later upper-bound proof. Its proof is very straightforward, and we defer it to \emph{Appendix~\ref{app:pac-correct}.} The finite stopping guarantee is elaborated in \emph{Theorem~\ref{thm:budgeted-general-depth}}.
\end{remark}
\vspace{-0.5em}
Now, we move to establish the upper bound for cost using the optimal oracle complexity $H^*$. Before that, we first explicitly state several required regular assumptions along with their discussions.

\begin{assumption}[Local dyadic scale regularity]\label{ass:local-scale-regularity}
Fix the confidence allocation \(\boldsymbol{\delta}\), we assume there exist constants
\(\Lambda_{\mathrm{pre}}\ge 1\) and \(\Lambda_{\mathrm{gap}}\ge 1\) such that,
for every non-root node \(v\) and every \(K\ge 0\),
\begin{equation}\label{eq:local-prefix-regularity}
\sum_{j=0}^{K}\Gamma_v^{(j)}(\delta_v)
\le
\Lambda_{\mathrm{pre}}\Gamma_v^{(K)}(\delta_v),
\end{equation}
and, for every non-root node \(v\),
\begin{equation}\label{eq:local-gap-regularity}
\Gamma_v\!\left(\frac{\Delta_v^{\mathrm{eff}}}{2},\delta_v\right)
\le
\Lambda_{\mathrm{gap}}\,
\Gamma_v(\Delta_v^{\mathrm{eff}},\delta_v).
\end{equation}
We write
\(
\Lambda_{\mathrm{loc}}
\coloneqq
\Lambda_{\mathrm{pre}}\Lambda_{\mathrm{gap}}
\).
\end{assumption}
\begin{remark}
\emph{Assumption~\ref{ass:local-scale-regularity}} is purely local: it does not
compare the raw recursive dyadic DP with \(J^*\). The prefix condition is the
standard dyadic regularity of a statistical sample-complexity curve; it holds
with a constant for polynomial laws, e.g., \(m_v(\rho,\delta_v)\asymp
\rho^{-2}\log(1/\delta_v)\). The second condition only asks that moving from
the effective gap to the neighboring dyadic half-scale changes the local stop
cost by at most a constant factor. This is the only place where the deterministic
fast-bias cutoff in \emph{Eq.~\eqref{eq:gamma-rho}} can create a knife-edge, which is ruled out by the assumption at the exact effective gap.
\end{remark}

\begin{lemma}\label{lem:depth-polynomial-overhead}
For a fixed depth-budget sequence \((\alpha_h)_{h=0}^{D}\), define
\(P_0=\Lambda_{\mathrm{loc}}\) and, for \(h\ge 1\),
\begin{equation}\label{eq:P-depth-recursion}
P_h
\coloneqq
\max\!\left\{
(1+\alpha_h)\Lambda_{\mathrm{loc}},\,
\left(1+\frac{\Lambda_{\mathrm{pre}}}{\alpha_h}\right)P_{h-1}
\right\}.
\end{equation}
With the concrete choice \(\alpha_h=(h+1)^2\), this recursion gives
\(P_D=\mathcal{O}_{\Lambda_{\mathrm{pre}},\Lambda_{\mathrm{gap}}}(D^2)\).
\end{lemma}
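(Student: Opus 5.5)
We unroll the recursion \eqref{eq:P-depth-recursion} directly. With \(\alpha_h=(h+1)^2\), the first branch of the maximum is \((1+(h+1)^2)\Lambda_{\mathrm{loc}}\), which is \(\mathcal O(D^2)\Lambda_{\mathrm{loc}}\) for every \(h\le D\). The second branch multiplies \(P_{h-1}\) by \(1+\Lambda_{\mathrm{pre}}/(h+1)^2\).

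So the natural guess is a bound of the form
\[
P_h\le \Pi_h\,\bigl(1+(h+1)^2\bigr)\Lambda_{\mathrm{loc}},
\qquad
\Pi_h\coloneqq\prod_{j=1}^{h}\Bigl(1+\tfrac{\Lambda_{\mathrm{pre}}}{(j+1)^2}\Bigr).
\]
We prove this by induction on \(h\).

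\emph{Base case.} For \(h=0\), \(P_0=\Lambda_{\mathrm{loc}}\le 2\Lambda_{\mathrm{loc}}\), and \(\Pi_0=1\).

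\emph{Inductive step.} Assume the bound holds at \(h-1\).
\begin{itemize}
\item The first branch satisfies \((1+\alpha_h)\Lambda_{\mathrm{loc}}\le \Pi_h(1+(h+1)^2)\Lambda_{\mathrm{loc}}\), since \(\Pi_h\ge 1\).
\item The second branch satisfies
\[
\Bigl(1+\tfrac{\Lambda_{\mathrm{pre}}}{(h+1)^2}\Bigr)P_{h-1}\le \Pi_h\,(1+h^2)\Lambda_{\mathrm{loc}}\le \Pi_h\,(1+(h+1)^2)\Lambda_{\mathrm{loc}},
\]
using the inductive hypothesis and the fact that \(1+h^2\le 1+(h+1)^2\).
\end{itemize}
Hence the maximum satisfies the same bound, and the induction closes. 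Equivalently, one can unroll the recursion and write \(P_D\) as a maximum over the branch point \(h\) of \((1+\alpha_h)\Lambda_{\mathrm{loc}}\prod_{j=h+1}^{D}(1+\Lambda_{\mathrm{pre}}/\alpha_j)\). This leads to the same estimate.

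It remains to show that \(\Pi_D\) is bounded by a constant depending only on \(\Lambda_{\mathrm{pre}}\). Using \(1+x\le e^x\),
\[
\Pi_D\le \exp\Bigl(\Lambda_{\mathrm{pre}}\sum_{j\ge 2}j^{-2}\Bigr)=\exp\bigl(\Lambda_{\mathrm{pre}}(\pi^2/6-1)\bigr).
\]
Combining the two pieces gives
\[
P_D\le e^{\Lambda_{\mathrm{pre}}(\pi^2/6-1)}\,\Lambda_{\mathrm{pre}}\Lambda_{\mathrm{gap}}\,\bigl(1+(D+1)^2\bigr)=\mathcal O_{\Lambda_{\mathrm{pre}},\Lambda_{\mathrm{gap}}}(D^2).
\]

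The only real point is choosing the right inductive hypothesis. The quadratic factor must be carried along in the form \(1+(h+1)^2\), so that it is monotone in \(h\) and absorbs the first branch. Meanwhile the product of the multiplicative factors stays bounded, precisely because \(\sum_j 1/\alpha_j\) converges when \(\alpha_h=(h+1)^2\). A slower-growing budget such as \(\alpha_h\propto h\) would make this product grow polynomially with an exponent depending on \(\Lambda_{\mathrm{pre}}\). This trade-off is why the quadratic choice is the natural one.
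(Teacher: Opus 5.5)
Your proof is correct and follows essentially the paper's route: you bound \(P_D\) by the largest first-branch term \((1+\alpha_h)\Lambda_{\mathrm{loc}}\) times the product of the later multipliers \(1+\Lambda_{\mathrm{pre}}/\alpha_j\), then control that product by \(\exp\bigl(\Lambda_{\mathrm{pre}}\sum_{j}(j+1)^{-2}\bigr)\). Your induction with the monotone envelope \(\Pi_h\bigl(1+(h+1)^2\bigr)\Lambda_{\mathrm{loc}}\) is simply a rigorous packaging of the unrolling step that the paper asserts directly.
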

\vspace{-0.5em}
\begin{proof}
We denote \(A=\Lambda_{\mathrm{loc}}\) and \(B=\Lambda_{\mathrm{pre}}\), and unrolling
the recursion shows that \(P_D\) is at most the largest local term
\((1+\alpha_i)A\), multiplied by the product of later recursive multipliers:
\[
P_D
\le
A
\max_{0\le i\le D}
\left[
\bigl(1+\alpha_i\bigr)
\prod_{j=i+1}^{D}\left(1+\frac{B}{\alpha_j}\right)
\right].
\]
For \(\alpha_j=(j+1)^2\), the product is bounded by
\[
\prod_{j=1}^{D}\left(1+\frac{B}{(j+1)^2}\right)
\le
\exp\!\left(B\sum_{j=1}^{\infty}\frac{1}{(j+1)^2}\right),
\]
which is a constant depending only on \(B\). Since
\(\max_{i\le D}(1+\alpha_i)=O(D^2)\), we obtain
\(P_D=O_{\Lambda_{\mathrm{pre}},\Lambda_{\mathrm{gap}}}(D^2)\), which is the key ingredient for the following general-depth cost bound.
\end{proof}

\begin{theorem}[General-depth upper bound]\label{thm:budgeted-general-depth}
Fix any feasible confidence allocation
\(\boldsymbol{\delta}=(\delta_v)_{v\in\mathcal T\setminus\{r\}}\) satisfying
\emph{Assumption~\ref{ass:local-scale-regularity}}, and run \emph{Algorithm~\ref{alg:2ffs-t}}
with \(\varepsilon=0\). On \(\mathcal E_\delta\), the algorithm is guaranteed to stop at a
finite time \(\tau\), returns \(\hat a_\tau=a^*\), and satisfies
\(
C_\tau
\le
P_D\,H(\boldsymbol{\delta})
\).

With the depth budgets \(\alpha_h=(h+1)^2\), the cost
\(
C_\tau
\le
\mathcal{O}_{\Lambda_{\mathrm{pre}},\Lambda_{\mathrm{gap}}}
\!\left(D^2H(\boldsymbol{\delta})\right)
\). Consequently, suppose the regularity constants
\(\Lambda_{\mathrm{pre}}\) and \(\Lambda_{\mathrm{gap}}\) are uniform along a
near-optimal feasible sequence: for every \(\eta>0\), there exists a feasible
allocation \(\boldsymbol{\delta}^{\eta}\) satisfying
\emph{Assumption~\ref{ass:local-scale-regularity}} with these same constants and
\[
H(\boldsymbol{\delta}^{\eta})\le H^*+\eta.
\]
Then, the corresponding run satisfies, with probability at least \(1-\delta\),
\[
\hat a_\tau=a^*,
\qquad
C_\tau
\le
P_D\bigl(H^*+\eta\bigr).
\]
In particular, when \(\alpha_h=(h+1)^2\) and the local regularity constants are
uniform constants,
\[
C_\tau
\le
\mathcal{O}_{\Lambda_{\mathrm{pre}},\Lambda_{\mathrm{gap}}}
\!\left(D^2(H^*+\eta)\right),
\]
and, since \(H^*\ge |\mathrm{Ch}(r)|\ge 2\), one obtains
\(\mathcal{O}_{\Lambda_{\mathrm{pre}},\Lambda_{\mathrm{gap}}}(D^2H^*)\) by taking
\(\eta\le H^*\), or \(\eta=0\) whenever the infimum in
\emph{Eq.~\eqref{eq:H-star}} is attained.
\end{theorem}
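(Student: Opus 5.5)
The plan is to work entirely on \(\mathcal E_\delta\), where Lemma~\ref{lem:interval-validity} makes every effective interval valid at every round. The core of the argument is a per-node, per-scale cost lemma proved by induction on the remaining depth \(h\).

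\textbf{Correctness and a stopping criterion.} With \(\varepsilon=0\), Theorem~\ref{thm:pac-correct} already gives \(\hat a_\tau=a^*\) whenever the algorithm stops. For finite stopping, I will show a deterministic sufficient condition: the root test must succeed once every root child \(a\) has an interval of width at most \(\Delta_*/2\) on its decision-relevant sides. By Lemma~\ref{lem:minimax-propagation}(ii), it suffices to certify each relevant node \(v\) to within its effective precision \(\Delta_v^{\mathrm{eff}}\), in the sense of scale-\(k\) safety for the first dyadic scale with \(\rho_k\le\Delta_v^{\mathrm{eff}}\). The point of Definition~\ref{def:eff-gap} is the following. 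If a child \(u\) has \(|V^*(u)-V^*(p(u))|\ge \Delta_{p(u)}^{\mathrm{eff}}\) and its interval has width below that gap, then the discharge margins in the comparison rows remove \(u\) from the live set. So the recursion only ever descends into children for which refining at the parent's precision is actually needed. Because scales never get finer than \(\rho_{k_v}\) with \(\rho_{k_v}\in(\Delta_v^{\mathrm{eff}}/2,\Delta_v^{\mathrm{eff}}]\), every \(\textsc{Resolve}\) call terminates after finitely many oracle calls, which yields \(\tau<\infty\).

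\textbf{Inductive cost lemma.} I will prove by induction on \(h\) the claim
\[
\mathrm{cost}_v\le P_{h(v)}\,\bigl(\text{share of }v\text{ in }H(\boldsymbol\delta)\bigr),
\]
that is, the total cost charged to \textsc{Resolve} calls inside the subtree of \(v\) across all scales up to \(k_v\) is at most \(P_{h(v)}\min\{\Gamma_v(\Delta_v^{\mathrm{eff}},\delta_v),\,|\mathrm{Ch}(v)|+\sum_u J_u^*\}=P_{h(v)}J_v^*\). At a leaf only local sampling occurs. Summing over scales, Eq.~\eqref{eq:local-prefix-regularity} gives \(\sum_{j\le k_v}\Gamma_v^{(j)}\le\Lambda_{\mathrm{pre}}\Gamma_v(\rho_{k_v})\), and Eq.~\eqref{eq:local-gap-regularity} converts \(\rho_{k_v}\ge\Delta_v^{\mathrm{eff}}/2\) into \(\Lambda_{\mathrm{loc}}\Gamma_v(\Delta_v^{\mathrm{eff}})\), which equals \(P_0J_v^*\).

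For an internal node there are two cases.
\begin{itemize}
\item If \(J_v^*\) is attained by the local route, the recursive spending at each scale is capped by \(\alpha_h\Gamma_v^{(j)}\) and the local samples by \(\Gamma_v^{(j)}\). The total is therefore \((1+\alpha_h)\Lambda_{\mathrm{loc}}\Gamma_v(\Delta_v^{\mathrm{eff}})\), the first term in Eq.~\eqref{eq:P-depth-recursion}.
\item If \(J_v^*\) is attained by the recursive route, the expansion costs \(|\mathrm{Ch}(v)|\), and the children's subtrees cost at most \(P_{h-1}\sum_u J_u^*\) by induction, using \(\Delta_u^{\mathrm{eff}}\ge\Delta_v^{\mathrm{eff}}\) and the fact that child scales are capped by the parent scale. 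The only waste is fallback local samples at \(v\) triggered by budget exhaustion. A fallback at scale \(j\) happens only after \(\alpha_h\Gamma_v^{(j)}\) recursive spending, so the local samples are at most \(\alpha_h^{-1}\) times the recursive spending summed over scales. Via the prefix condition, this contributes the factor \((1+\Lambda_{\mathrm{pre}}/\alpha_h)P_{h-1}\), the second term.
\end{itemize}
Summing over root children and adding \(|\mathrm{Ch}(r)|\) gives \(C_\tau\le P_DH(\boldsymbol\delta)\). Lemma~\ref{lem:depth-polynomial-overhead} then gives the \(D^2\) form. The statements about \(H^*\) follow immediately: apply the bound with \(\boldsymbol\delta^\eta\), use \(\mathbb P(\mathcal E_{\delta})\ge1-\delta\), and absorb \(\eta\le H^*\) into the constant.

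\textbf{Main obstacle.} I expect the hardest step to be the charging argument in the recursive case. Two things must be shown. First, root-level side selection, lazy discharge, and the capped rescaling \(K_s^{\le k}\) must never cause repeated refinement of a child beyond its own effective scale. Second, a blocked status returned upward because of an inherited cap must not be double-counted against both the parent's and the child's budgets. My first attempt would be to assign each oracle call to the unique deepest node whose budget counter it increments. I would then show, using the monotone \(\mathsf{Done}_s(v,k)\) flags, that each (node, side, scale) triple is charged at most once.
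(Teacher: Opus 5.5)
Your architecture matches the paper's: induction on $h(v)$, the local route giving $(1+\alpha_h)\Lambda_{\mathrm{loc}}$, the recursive route giving $(1+\Lambda_{\mathrm{pre}}/\alpha_h)P_{h-1}$, then $\boldsymbol{\delta}^{\eta}$ for the $H^*$ statements. But the step everything rests on is asserted, not proved: positive work at $v$ happens only at scales with $\Delta_v^{\mathrm{eff}}\le 2\rho_k$. This is what can actually be shown; it is slightly weaker than your $\rho_k>\Delta_v^{\mathrm{eff}}/2$, and Eq.~\eqref{eq:local-gap-regularity} absorbs the boundary scale. Without it, the scale sums in both route cases run over unboundedly many scales, so neither the cost bound nor finiteness follows.

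Your justification does not work. The monotonicity $\Delta_u^{\mathrm{eff}}\ge\Delta_v^{\mathrm{eff}}$ points the wrong way: a child has \emph{fewer} admissible scales than its parent. So the cap $k_u\le k$ together with $\Delta_v^{\mathrm{eff}}\le 2\rho_k$ says nothing about $\Delta_u^{\mathrm{eff}}\le 2\rho_{k_u}$. The discharge-margin heuristic has three problems: it ignores the selector rows $(\textsc{Min},L)$ and $(\textsc{Max},U)$, it presupposes narrow child intervals, and it says nothing about the root.

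What is needed is two ingredients.
\begin{enumerate}
\item A root base case: the selected child $x$ and scale $k$ satisfy $\Delta_x^{\mathrm{eff}}\le 2\rho_k$. This follows from the failed stop test, the leader/challenger definitions, and the rule picking the coarser unresolved obligation. That rule forces the leader's lower error and both of the challenger's one-sided errors to be at most $\rho_k$ (Lemma~\ref{lem:root-active-call-localization}).
\item A one-step bound: a child called at $k_u=K^{\le k}_{s_u}(u,t)$ has its relevant one-sided error at most $\rho_{k_u}$, because $k_u$ is either the current width scale or the successor of a completed scale. Since $u$ is the extremal witness or blocker, this gives $|V^*(u)-V^*(v)|\le\rho_{k_u}$ (Lemmas~\ref{lem:capped-endpoint-control} and~\ref{lem:capped-one-step-localization}).
\end{enumerate}
Along any call chain these give $\Delta_{x_i}^{\mathrm{eff}}\le\max\{2\rho_{k_{i-1}},\rho_{k_i}\}\le 2\rho_{k_i}$. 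Your proposed fix for the ``main obstacle'' (deepest-node charging plus the $\mathsf{Done}$ flags) yields only no same-scale rework and per-$(v,k)$ budget bounds, not this localization.

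Two further steps need repair.

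First, finite stopping. ``Every \textsc{Resolve} call terminates after finitely many oracle calls'' does not give $\tau<\infty$: each invocation does at most one unit of work anyway, and the real issue is the number of outer rounds. You also never show that your width-$\Delta_*/2$ condition is reached. The paper instead proves $C_T\le P_DH(\boldsymbol{\delta})$ for every finite $T$, and then shows each non-stopping round costs at least $q_{\min}=\min\{c,1\}$ (Lemmas~\ref{lem:finite-selected-side}, \ref{lem:capped-recursive-progress}, \ref{lem:budgeted-positive-progress}).
\begin{itemize}
\item The selected root obligation is finite and unresolved: a jointly infinite leader/challenger obligation would contradict the leader maximizing $L$.
\item An infinite-cap \textsc{Resolve} on an unresolved obligation cannot return zero cost, because the capped recursive branch always has an admissible child.
\end{itemize}

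Second, in the recursive case, ``a fallback at scale $j$ happens only after $\alpha_h\Gamma_v^{(j)}$ recursive spending'' is not true as stated. A fallback also occurs when $|\mathrm{Ch}(v)|$ exceeds the remaining race budget, or when a child step is blocked by $v$'s own budget, possibly with little or no spending at that scale. The paper argues contrapositively: at any scale with $\alpha_{h(v)}\Gamma_v^{(k)}\ge R_v^{\mathrm{alg}}$, no leakage can occur. So every leaked scale has $\Gamma_v^{(k)}<R_v^{\mathrm{alg}}/\alpha_{h(v)}$, and the threshold form of the prefix condition bounds the total leakage by $\Lambda_{\mathrm{pre}}R_v^{\mathrm{alg}}/\alpha_{h(v)}$.
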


\proofs \ \
To our knowledge, this is the first theoretical result to establish a bound for multi-fidelity bandit in the tree setting. We therefore hope the proof structure sets up a paradigm for future work, and Theorem~\ref{thm:budgeted-general-depth} is proved formally in
Appendix~\ref{app:pf3.6}. Overall, we have one main goal: charge
the actual oracle calls made by 2FFS to the ideal recursive oracle complexity
\(H(\boldsymbol{\delta})\). First, every certificate used by 2FFS is shown to
be sound: once a side of a node is certified at scale \(k\), the corresponding
endpoint error is at most \(\rho_k/2\). Second, a recursive call is shown to
spend effort only on children that can still affect the parent certificate at
the current scale; since child calls are capped by the parent scale, the
algorithm avoids irrelevant fine-scale work below already separated parts of
the tree. Third, for each node and scale, the budgeted resolver spends at most
a local certification budget plus its recursive budget, and the same
node-side-scale is never charged again after certification. Summing these
charges over active scales gives a subtree bound with overhead \(P_{h(v)}\).
Lemma~\ref{lem:depth-polynomial-overhead} then shows that this overhead is only
quadratic in depth for \(\alpha_h=(h+1)^2\). Applying the subtree bound to the
root children gives a finite global cost bound; since every non-stopping round
makes positive progress, the algorithm must stop, and interval validity gives
correctness.

\section{Numerical Experiments}\label{sec:exp}

We evaluate 2FFS on stochastic minimax trees, focusing on whether it can reduce search effort by adaptively combining cheap biased evaluations with expensive accurate samples. We compare against three fixed-confidence baselines: (i) BAI-MCTS sampling (\texttt{BAI-MCTS}) \citep{kaufmann2017monte}, which uses stochastic leaf samples; (ii) minimax fast-oracle (\texttt{Minimax-fast}), which uses only the deterministic fast oracle; and (iii) fixed-depth expansion followed by slow-oracle sampling on the exposed frontier (\texttt{Slow-only}). The second and third baselines are controlled ablations: they test, respectively, the limits of relying only on fast expansion and of committing to a non-adaptive expansion depth before slow sampling. We use the abbreviated names in parentheses in all tables. We note that BAI-MCTS \citep[\S4]{kaufmann2017monte} has shown that it already substantially improves (15$\times$) over earlier tree-based BAI methods, including UGapE-MCTS \citep{gabillon2012best},  LUCB-MCTS \citep{Kalyanakrishnan2012PACSS}, and FindTopWinner \citep{EvenDar2006ActionEA}.
\vspace{-0.5em}
\paragraph{Setup and evaluation.} Our main experiments use balanced \(b\)-ary minimax trees with
\(
(D,b)\in\{(5,8),(7,6),(10,3)\}
\),
with \(100\) independently generated trees per setting. These configurations cover both wide shallow trees and deeper narrower trees, and are much larger than the depth-\(3\), \(10\)-ary setting (only $1111$ nodes) in BAI-MCTS \citep{kaufmann2017monte}, with more explicit evidence of efficiency improvement.

For evaluation, we report two complementary metrics. The first is \emph{sampling count}, the total number of nodes sampled and visited. The second is \emph{operation count}, the total number of bookkeeping operation units, where one operation count is one \(\mathcal O(1)\) scalar state access, update, or comparison. This distinction is important because sampling one node  does not necessarily induce the same amount of computation across methods. For example, after a stochastic leaf sample, BAI-MCTS mainly updates the sampled leaf interval and propagates confidence bounds along the leaf-to-root path. In contrast, after a slow oracle sample, 2FFS updates local slow intervals, intersects them with fast intervals, propagates minimax bounds, and maintains scale-dependent comparison certificates. Thus, sampling count measures efficiency of repetitive nodes visit, while operation count measures the overall computation efficiency.

\begin{table}[t]
\centering
\caption{Results over stochastic minimax trees, with mean $\pm$ standard deviation reported.}
\resizebox{\linewidth}{!}{
\begin{tabular}{l|cccc}
\toprule
Method &Stopping $(\uparrow)$ & Accuracy $(\uparrow)$ & Sampling count $(\downarrow)$ & Operation count $(\downarrow)$ \\
\midrule
(i) $D\!=\!5,b=8$; $37,449$ nodes\\
\midrule
\texttt{2FFS}
& 1.00 & 1.00 
& \(5.39{\times}10^3 \pm 1.27{\times}10^3\)
& \(8.38{\times}10^7 \pm 4.05{\times}10^7\) \\

\texttt{BAI-MCTS}
& 1.00 & 0.99
& \(8.80{\times}10^5 \pm 3.39{\times}10^5\)
& \(2.38{\times}10^8 \pm 9.49{\times}10^7\) \\

\texttt{Minimax-fast}
& 1.00 & 0.91
& \(1.49{\times}10^4 \pm 3.35{\times}10^3\)
& \(8.20{\times}10^5 \pm 1.91{\times}10^5\) \\

\texttt{Slow-only}
& 0.47 & 0.47
& \(5.87{\times}10^5 \pm 4.86{\times}10^5\)
& \(8.62{\times}10^7 \pm 7.15{\times}10^7\) \\
\midrule
(ii) $D\!=\!7,b=6$; $335,923$ nodes\\
\midrule
\texttt{2FFS}
& 1.00 & 1.00
& \(1.77{\times}10^4 \pm 2.64{\times}10^3\)
& \(1.06{\times}10^9 \pm 3.02{\times}10^8\) \\

\texttt{BAI-MCTS}
& 0.98 & 0.98
& \(1.75{\times}10^7 \pm 6.71{\times}10^6\)
& \(4.85{\times}10^9 \pm 1.87{\times}10^9\) \\

\texttt{Minimax-fast}
& 1.00 & 0.88
& \(5.52{\times}10^4 \pm 7.08{\times}10^3\)
& \(3.48{\times}10^6 \pm 4.59{\times}10^5\) \\

\texttt{Slow-only}
& 0.73 & 0.70
& \(4.22{\times}10^6 \pm 9.16{\times}10^6\)
& \(1.49{\times}10^9 \pm 4.13{\times}10^8\) \\
\midrule
(iii) $D\!=\!10,b=3$; $88,573$ nodes\\
\midrule
\texttt{2FFS}
& 1.00 & 1.00 
& \(1.31{\times}10^4 \pm 2.34{\times}10^3\)
& \(2.49{\times}10^9 \pm 8.64{\times}10^8\) \\

\texttt{BAI-MCTS}
& 0.98 & 0.98
& \(1.91{\times}10^7 \pm 4.19{\times}10^6\)
& \(4.62{\times}10^9 \pm 9.34{\times}10^8\) \\

\texttt{Minimax-fast}
& 1.00 & 0.90
& \(4.69{\times}10^4 \pm 5.67{\times}10^3\)
& \(4.56{\times}10^6 \pm 5.81{\times}10^5\) \\

\texttt{Slow-only}
& 0.77 & 0.77
& \(5.60{\times}10^6 \pm 8.98{\times}10^6\)
& \(1.63{\times}10^9 \pm 4.22{\times}10^8\) \\
\bottomrule
\end{tabular}
}
\label{tab:main_results}
\vspace{-1.25em}
\end{table}

\paragraph{Results.}
Table~\ref{tab:main_results} shows that \texttt{2FFS} substantially reduces search effort relative to the \texttt{BAI-MCTS} baseline: across the reported settings, it uses \(160\)--\(1450\times\) fewer sampled or visited nodes and
\(1.9\)--\(4.6\times\) fewer bookkeeping operations. The ablations illustrate
the two sides of the fast--slow tradeoff. \texttt{Minimax-fast} is
computationally cheap because it relies only on fast expansion, but this efficiency comes at the cost of accuracy since the fast oracle is biased. Conversely, \texttt{Slow-only} uses accurate slow-oracle samples and can be accurate when enough samples are collected, but it requires many more samples and often fails to stop within the finite budget. Overall, these results suggest that \texttt{2FFS} gains its efficiency by adaptively choosing when to trust fast expansion and when to spend slow samples on decision-critical nodes. 

\begin{figure*}[h]
    \vspace{-0.5em}
    \centering
    \includegraphics[width=0.9\linewidth]{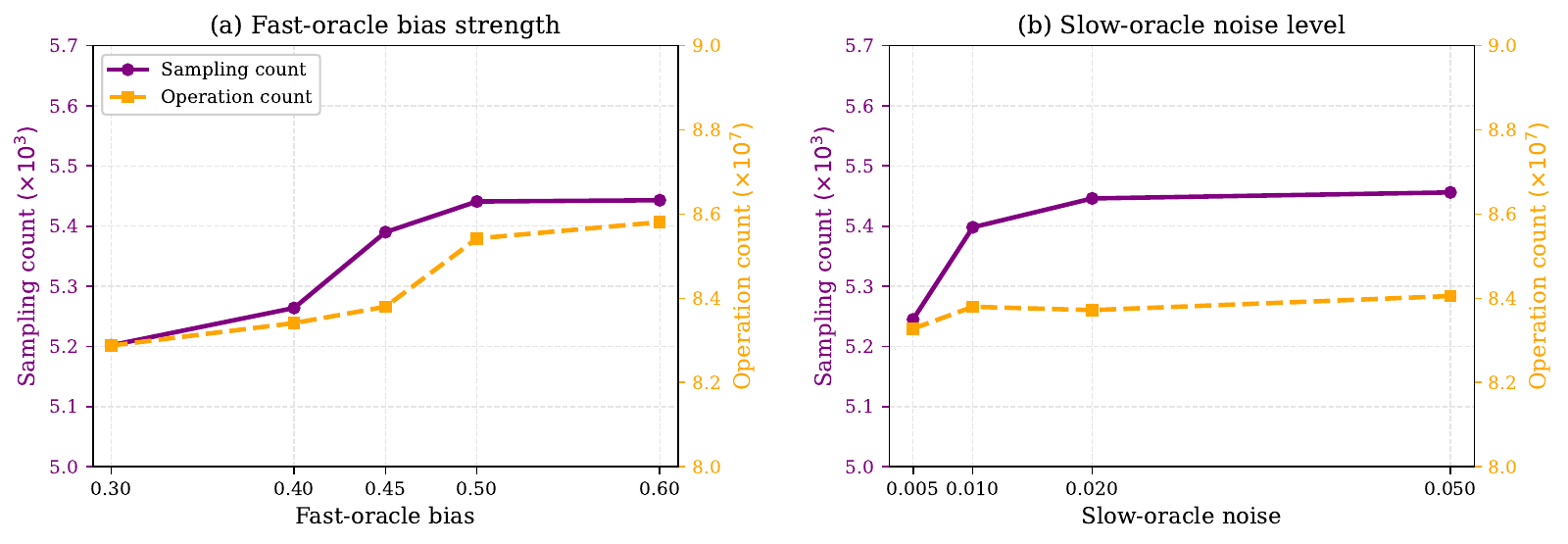}
    \caption{Oracle-bias ablation results for \texttt{2FFS}: (a) fast-oracle bias and (b) slow-oracle noise.}
    \label{f2}
    \vspace{-1em}
\end{figure*}

\paragraph{Ablations.} We further test the sensitivity of \texttt{2FFS} to the fast-oracle bias and slow-oracle noise. Using \(D=5\), 8-ary trees, we sweep the fast-oracle bias with mean \(0.45\) and the slow-oracle noise with
mean \(0.01\). Figure~\ref{f2} shows that \texttt{2FFS} is relatively stable
across both ranges, suggesting that its performance is not overly sensitive to
moderate changes in either oracle parameter.

\section{Conclusion}

In this work, we introduced \texttt{2FFS}, a two-fidelity algorithm for
fixed-confidence best-action identification in stochastic minimax trees. Our work bridges multi-fidelity bandits and minimax tree search by providing a principled mechanism for adaptively trading off cheap biased expansion against expensive accurate sampling. Theoretically, we prove PAC correctness, finite stopping, and a general-depth cost bound relative to an ideal recursive oracle
complexity. Empirically, \texttt{2FFS} achieves substantial gains in sampling efficiency over BAI-MCTS-style baselines while maintaining reliable root-action
identification. These results suggest that two-fidelity tree search may be a useful abstraction beyond the current theoretical framework, with potential
applications in multi-agent reinforcement learning, game-tree planning, and LLM policy-rollout systems, where accurate rollouts are expensive but cheap heuristic evaluations are widely available, a further discussion to these is provided in Appendix~\ref{app:related-works}.

\newpage
\bibliographystyle{unsrtnat}
\bibliography{ref}

\newpage
\newpage
\appendix
\section{Further Related Works and Limitations}\label{app:related-works}

\paragraph{Multi-fidelity bandits and optimization.}
Multi-fidelity methods study how to allocate queries across information sources
with different costs and accuracies. In flat bandit models, the goal is typically
to identify or compete with the best arm while using cheap biased fidelities when
they are informative and reserving expensive accurate fidelities for difficult
comparisons \citep{kandasamy2016multi,poiani2022multifidelity,wang2023multi,poiani2024optimal}.
Related ideas also appear in multi-fidelity Bayesian and black-box optimization,
where low-cost simulators, subsampled training runs, or surrogate evaluations are
used to accelerate optimization under a total cost budget
\citep{Kandasamy2016gaussian,kandasamy2017multi,poloczek2017multi,li2020multi,sen2018multi,pmlr-v108-fiegel20a}.
These works provide the broad cost--accuracy motivation for our two-oracle
model, but they do not address minimax value propagation through an adversarial
tree or fixed-confidence certification of a root action.

\paragraph{Relation to multi-fidelity tree search for noisy black-box optimization.}
\citet{sen2019noisy} study noisy black-box optimization with multi-fidelity
queries using a tree-search approach. Their tree is a hierarchical partition of
the input domain of an unknown function; a node represents a region of the search
space, and querying at a fidelity gives a biased and noisy estimate useful for
optimistic optimization. Their algorithms, MFHOO and MFPOO, adapt HOO/POO-style
optimistic tree search to continuous fidelities and are analyzed through simple
regret under a finite cost budget.

This is related in spirit because both settings combine tree search with
multi-fidelity information, but it is not the same problem. In our setting, the
tree is not an auxiliary partition of a black-box domain: it is the stochastic
minimax tree itself, with alternating \textsc{Max}/\textsc{Min} backups and a
root-action decision. Our objective is fixed-confidence
\((\varepsilon,\delta)\)-PAC best-action identification, not simple-regret
minimization after a budget is exhausted. Methodologically, 2FFS maintains
valid confidence intervals at minimax nodes, propagates them through adversarial
backups, and adaptively chooses between local slow certification and recursive
fast expansion. This differs from the optimistic partition-refinement mechanism
of MFHOO/MFPOO, where the tree is used to explore a continuous domain and there
is no minimax alternation, no root-child certification problem, and no
two-route local-versus-recursive certificate of the kind used here.

\paragraph{Best-action identification and tree-based planning.}
Pure-exploration bandits distinguish fixed-budget regret-style objectives from
fixed-confidence identification objectives. The flat best-arm identification
literature characterizes instance-dependent sample complexity for identifying
the best arm \citep{kaufmann2016complexity,degenne2019non}, with extensions to
structured settings such as unimodal bandits
\citep{ghosh2024fixed,poiani2024best} and cost-aware sampling
\citep{kanarios2024cost}. In tree-based planning, classical minimax search and
\(\alpha\)-\(\beta\) pruning exploit deterministic backups in adversarial trees
\citep{Knuth1975alpha,Baudet1978full}, while MCTS uses stochastic sampling and
bandit-style selection rules \citep{kocsis2006bandit}. BAI-MCTS
\citep{kaufmann2017monte} brings fixed-confidence identification to minimax
trees, but assumes a single unbiased sampling oracle. Our contribution combines
this fixed-confidence root-action perspective with a calibrated cheap evaluator
and an expensive accurate evaluator.

\paragraph{Applications and oracle-based learning.}
Multi-fidelity ideas are widely used in hyperparameter optimization
\citep{li2018hyperband,jamieson2016non,falkner2018bohb,li2020system}, medical
trial design \citep{Robbins1952SomeAO}, and engineering design problems such as
hydrofoil, wind-farm, and aerodynamic optimization
\citep{BONFIGLIO201863,rethore2014top,Zheng_Hedrick_Mittal_2013}. More broadly,
oracle-based learning studies how decision-making systems can exploit structured
feedback such as ranking or comparison oracles
\citep{tang2023zeroth,chen2025compo}. Our work fits this broader line by
studying how cheap heuristic information and expensive reliable feedback should
be allocated in a structured decision tree and GDA-style minimax tree \citep{lin2025two}.

\paragraph{Limitations and future work.}
We acknowledge several limitations, which also open a broad space for follow-up work from both empirical and theoretical perspectives. Empirically, our experiments focus on simulated finite minimax trees, while many modern planning problems involve richer interactive environments (we note that adapting 2FFS to these areas would require the effort of a fully new follow-up work). Extending 2FFS to multi-agent reinforcement learning, policy-gradient-based training, or neural-guided MCTS would be an important step toward broader applications, but is beyond the scope of the present paper. In addition, our current formulation assumes finite trees with fixed depth (which aligns with the setup from previous tree-based BAI works); adapting the method to dynamically generated trees, progressive widening, or very large action spaces is a natural next direction. We also note such minimax planning can be smoothly integrated with the recent group-based language model policy optimization for language model reasoning \citep{Guo-2025-Deepseek,Chen-2025-Stepwise,li2026knapsack,Chen-2025-Exploration} and language model alignment and decision making problems \citep{chen2026reward,park2026post}.

Theoretically, our general-depth upper bound relies on mild local regularity assumptions on the sampling complexity. Understanding whether these conditions can be weakened, verified adaptively, or replaced by fully data-dependent stopping criteria would further sharpen the theory. Another important direction is to close the remaining gap between the current polynomial-depth factor and the ideal recursive oracle complexity, potentially leading to tighter depth dependence or sharper instance-dependent bounds.

\section{Missing Proofs and Further Technical Details}\label{app:technical-details}

In this section, we provide full detailed proofs as well as missing technical details for the main text. To ensure better readability for the readers, we provide a notation table first in Appendix~\ref{app:notation-glossary} for all the symbols and definitions used throughout the paper.

\subsection{Notation Glossary}\label{app:notation-glossary}

\begingroup
\small
\setlength{\LTpre}{0.35em}
\setlength{\LTpost}{0.6em}
\renewcommand{\arraystretch}{1.12}
\begin{longtable}{@{}p{0.27\linewidth}p{0.68\linewidth}@{}}
\caption{Glossary of notation used in the main text and appendix.}
\label{tab:notation-glossary}\\
\toprule
Notation & Meaning \\
\midrule
\endfirsthead
\caption[]{Glossary of notation used in the main text and appendix (continued).}\\
\toprule
Notation & Meaning \\
\midrule
\endhead
\midrule
\multicolumn{2}{r}{\footnotesize Continued on next page}\\
\endfoot
\bottomrule
\endlastfoot
\multicolumn{2}{@{}l}{1) \textbf{Tree, values, and root decision}}\\
\(\mathcal T\), \(r\) & Finite game tree and its root. The root is a \textsc{Max} node. \\
\(\mathrm{Ch}(v)\) & Set of children of node \(v\). \\
\(d(v)\), \(D\), \(h(v)\) & Depth of \(v\), total tree depth, and remaining depth \(h(v)=D-d(v)\). \\
\(\mathcal L\), \(\ell\) & Leaf set and a generic leaf. \\
\(\mu_\ell\), \(V^*(v)\) & Mean payoff at leaf \(\ell\) and true minimax value of node \(v\). \\
\(V^*(r,a)\) & Root-action value, defined as the node value \(V^*(a)\) for \(a\in\mathrm{Ch}(r)\). \\
\(a^*\), \(\Delta_*\) & Unique optimal root action and root gap \(V^*(r,a^*)-\max_{a\ne a^*}V^*(r,a)\). \\
\(\mathcal F\), \(\mathcal F_t\) & A complete frontier cut and the frontier maintained at round \(t\). \\
\(\mathcal T_t\) & Explored subtree at round \(t\). \\
\(p(v)\) & Parent of node \(v\). \\
\(\Delta_v^{\mathrm{eff}}\) & Effective gap of node \(v\), i.e., the largest root-to-\(v\) bottleneck scale that can affect root-action identification. \\
\addlinespace
\multicolumn{2}{@{}l}{2) \textbf{Two-fidelity oracles and sampling}}\\
\(F\), \(S\) & Fast deterministic oracle and slow stochastic oracle. \\
\(V_F(v)\) & Fast-oracle value returned at node \(v\). \\
\(B(h)\) & Known fast-oracle bias envelope at remaining depth \(h\), with \(B(0)=0\). \\
\(\nu_{v,m}\), \(\mu_{v,m}\) & Distribution and mean of fidelity-\(m\) observations at node \(v\). \\
\(\lambda_1,\lambda_2\), \(c\) & Fidelity costs, normalized as \(\lambda_1=1\) and \(\lambda_2=c\ge 1\). \\
\(Y_{v,s}\), \(\sigma\) & The \(s\)-th slow-oracle sample at \(v\), assumed \(\sigma\)-sub-Gaussian around \(V^*(v)\). \\
\(N^F(t)\), \(N_v^S(t)\) & Total number of fast queries and number of slow queries at node \(v\) by round \(t\). \\
\(C_t\) & Total search cost \(N^F(t)+c\sum_v N_v^S(t)\). \\
\(\tau\), \(\hat a_\tau\) & Stopping time and returned root action. \\
\(\varepsilon\), \(\delta\) & PAC accuracy and failure probability. \\
\(\boldsymbol{\delta}\), \(\delta_v\) & Node-wise confidence allocation with \(\sum_{v\ne r}\delta_v\le\delta\). \\
\(\widehat V_{v,n}\) & Empirical mean of the first \(n\) slow samples at \(v\). \\
\(\beta_v(n,\delta_v)\) & Time-uniform confidence radius for node \(v\). \\
\(\mathcal E_\delta\) & Simultaneous-validity event on which all slow confidence intervals are valid. \\
\addlinespace
\multicolumn{2}{@{}l}{3) \textbf{Intervals and local certification}}\\
\(I_v^F\) & Fast interval \([V_F(v)-B(h(v)),\,V_F(v)+B(h(v))]\). \\
\(I_v^S(t)\) & Running intersection of slow confidence intervals at node \(v\). \\
\(I_v^{\mathrm{loc}}(t)\) & Direct local interval \(I_v^F\cap I_v^S(t)\). \\
\(I_v^{\mathrm{ch}}(t)\) & Child-backup interval obtained by minimax propagation from children. \\
\(I_v(t)=[L_v(t),U_v(t)]\) & Effective interval used by the algorithm at node \(v\). \\
\(\mathrm{width}(I)\) & Length of an interval \(I\). In particular, \(\mathrm{width}(I_v(t))=U_v(t)-L_v(t)\). \\
\(m_v(\rho,\delta_v)\) & Minimum slow sample count needed so \(\beta_v(n,\delta_v)\le \rho/4\). \\
\(\Gamma_v(\rho,\delta_v)\) & Local slow-oracle cost to certify node \(v\) at precision \(\rho\); it is zero if the fast interval is already narrow enough. \\
\(J_v^*(\boldsymbol{\delta})\) & Ideal recursive cost to certify subtree \(v\) at precision \(\Delta_v^{\mathrm{eff}}\). \\
\(H(\boldsymbol{\delta})\), \(H^*\) & Root recursive oracle complexity for allocation \(\boldsymbol{\delta}\), and its infimum over feasible allocations. \\
\addlinespace
\multicolumn{2}{@{}l}{4) \textbf{Dyadic scales and algorithmic certificates}}\\
\(\rho_k\) & Dyadic precision grid, initialized at \(\rho_0\) and updated by \(\rho_{k+1}=\rho_k/2\). \\
\(\Gamma_v^{(k)}(\delta_v)\) & Scale-\(k\) local certification cost \(\Gamma_v(\rho_k,\delta_v)\). \\
\(s\in\{L,U\}\), \(\bar s\) & Targeted endpoint side and its opposite side. \\
\(a_t\), \(b_t\) & Current root leader \(\argmax_a L_a(t)\) and challenger \(\argmax_{a\ne a_t}U_a(t)\). \\
\(\lambda_L(v,t)\), \(\lambda_U(v,t)\) & Comparison endpoints \(\max_{u\in\mathrm{Ch}(v)}L_u(t)\) and \(\min_{u\in\mathrm{Ch}(v)}U_u(t)\). \\
\(A_s(v,k,t)\) & Scale-aware active child set for certifying side \(s\) of node \(v\) at scale \(k\). \\
\(b_s(v,k,t)\) & Current blocking child in a comparison case. \\
\(\mathsf{Cert}_s(v,k,t)\) & Observable side certificate at node \(v\), side \(s\), and scale \(k\). \\
\(\mathsf{Done}_s(v,k,t)\) & Latched monotone flag recording that side \(s\) at scale \(k\) was certified earlier. \\
\(\mathsf{Comp}_s(v,k,t)\) & Completion predicate \(\mathsf{Done}_s(v,k,t)\) or \(\mathsf{Cert}_s(v,k,t)\). \\
\(w_x(t)\), \(\underline k(x,t)\) & Current width \(U_x(t)-L_x(t)\) and the dyadic width scale satisfying \(\rho_k/2<w_x(t)\le\rho_k\). \\
\(K_s(x,t)\) & Node-local active scale: the coarsest unresolved scale for side \(s\) at node \(x\). \\
\(K_s^{\le k}(x,t)\) & Capped unresolved scale for side \(s\), restricted to scales no finer than parent scale \(k\). \\
\(\sigma(u,t)\), \(K_{\pm}(u,t)\) & Root-challenger contender side and its corresponding contender scale. \\
\(\Call{Resolve}{v,s,k,B}\) & Recursive resolver call for node \(v\), side \(s\), scale \(k\), and invocation cost cap \(B\). \\
\addlinespace
\multicolumn{2}{@{}l}{5) \textbf{Budgets, counters, and proof accounting}}\\
\(\alpha_h\) & Depth-budget multiplier for remaining depth \(h\), e.g., \((h+1)^2\). \\
\(\mathsf B_{v,k}^{\mathrm{rec}}\) & Recursive race budget \(\alpha_{h(v)}\Gamma_v^{(k)}(\delta_v)\) for node \(v\) and scale \(k\). \\
\(M_{v,k}^{\mathrm{loc}}(t)\) & Cost of slow local samples charged to scale \(k\) at node \(v\) by round \(t\). \\
\(M_{v,k}^{\mathrm{exp}}(t)\) & Cost spent below \(v\) while resolving the scale-\(k\) recursive obligation at \(v\). \\
\(\Lambda_{\mathrm{pre}}\), \(\Lambda_{\mathrm{gap}}\), \(\Lambda_{\mathrm{loc}}\) & Local regularity constants; \(\Lambda_{\mathrm{loc}}=\Lambda_{\mathrm{pre}}\Lambda_{\mathrm{gap}}\). \\
\(P_h\) & Depth-overhead recursion used in the general-depth cost bound. \\
\(\mathcal K_v^{\mathrm{act}}\) & Active scales for \(v\): \(\{k\ge 0:\Delta_v^{\mathrm{eff}}\le 2\rho_k\}\). \\
\(\mathfrak C_v(t)\) & Analytic cost assigned to the subtree rooted at \(v\) under the proof's charging convention. \\
\(R_v^{\mathrm{alg}}\) & Recursive-case accounting quantity \(|\mathrm{Ch}(v)|+\sum_{u\in\mathrm{Ch}(v)}\mathfrak C_u(T)\). \\
\(q_{\min}\) & Minimum positive oracle-action cost, \(\min\{c,1\}\). \\
\end{longtable}
\endgroup

\subsection{Proofs for Preliminaries}\label{app:prelim-proofs}

\subsubsection{Proof of Lemma~\ref{lem:minimax-propagation}}
\begin{proof}
For any node \(u\) that belongs to \(\mathcal F\) or is an ancestor of a node
in \(\mathcal F\), let
\[
m(u)\coloneqq
\max\{d(z)-d(u):z\in\mathcal F,\ u\text{ is an ancestor of }z\text{ or }u=z\}.
\]
Thus \(m(u)=0\) exactly for frontier nodes. We prove validity by induction on
\(m(u)\). The case \(m(u)=0\) is the assumption. Now let \(m(u)\ge 1\), and
assume validity has been proved for all smaller values of \(m\). Because \(u\)
lies on a path to a frontier node, no strict ancestor of \(u\) can belong to
\(\mathcal F\); otherwise that path would meet \(\mathcal F\) twice. Therefore,
for every child \(x\in\mathrm{Ch}(u)\), each root-to-leaf path through \(x\)
must meet \(\mathcal F\) at or below \(x\). Thus \(x\) is either itself in
\(\mathcal F\) or is an ancestor of some node in \(\mathcal F\), so each child
has a valid propagated interval by the induction hypothesis.

If \(u\) is a \textsc{Max} node, then
\[
\max_x L_x\le \max_x V^*(x)=V^*(u)\le \max_x U_x,
\]
which is exactly \(V^*(u)\in[L_u,U_u]\). If \(u\) is a \textsc{Min} node, the
same argument with minima gives
\[
\min_x L_x\le \min_x V^*(x)=V^*(u)\le \min_x U_x.
\]
This proves the validity statement.

The half-width statement is proved by the same upward induction. Suppose every
child interval of \(u\) has width at most \(2w\). If \(u\) is a \textsc{Max}
node, choose \(x^*\in\argmax_x U_x\). Then
\[
U_u-L_u
=
U_{x^*}-\max_x L_x
\le
U_{x^*}-L_{x^*}
\le 2w.
\]
If \(u\) is a \textsc{Min} node, choose \(x^*\in\argmin_x L_x\). Then
\[
U_u-L_u
=
\min_x U_x-L_{x^*}
\le
U_{x^*}-L_{x^*}
\le 2w.
\]
Thus every propagated ancestor interval has half-width at most \(w\).
\end{proof}

\subsubsection{Proof of Lemma~\ref{lem:interval-validity}}
\begin{proof}
Fix a round \(t\). On \(\mathcal E_\delta\), the slow interval is valid by
Eq.~\eqref{eq:slow-interval}. The fast interval \(I_v^F\) is valid
deterministically by Definition~\ref{def:fast}, so
\[
V^*(v)\in I_v^{\mathrm{loc}}(t)=I_v^F\cap I_v^S(t)
\]
for every exposed non-root node \(v\).

It remains to check validity of the effective intervals after recursive
backups. Proceed by induction from the leaves upward inside the explored tree.
If a non-root node \(v\) is unexpanded, then
\(I_v^{\mathrm{ch}}(t)=(-\infty,+\infty)\), so validity of
\(I_v(t)=I_v^{\mathrm{loc}}(t)\cap I_v^{\mathrm{ch}}(t)\) reduces to local
validity.

Now suppose \(v\) is an expanded non-root node. All children of \(v\) are
explored and already carry valid effective intervals by the induction
hypothesis. If \(v\) is a \textsc{Max} node, then
\[
\max_{u\in\mathrm{Ch}(v)}L_u(t)
\le
\max_{u\in\mathrm{Ch}(v)}V^*(u)
=V^*(v)
\le
\max_{u\in\mathrm{Ch}(v)}U_u(t),
\]
so \(V^*(v)\in I_v^{\mathrm{ch}}(t)\). The \textsc{Min} case is identical with
maxima replaced by minima. Since \(I_v^{\mathrm{loc}}(t)\) is also valid, the
intersection \(I_v(t)=I_v^{\mathrm{loc}}(t)\cap I_v^{\mathrm{ch}}(t)\) remains
valid.

Finally, the root interval is defined only by the same child backup. Its
children are valid by the induction just proved, so the same \textsc{Max}
backup argument gives \(V^*(r)\in I_r(t)\). Hence, every propagated interval
used by the algorithm, including root-child and root intervals, is valid.
\end{proof}

\subsection{Full 2FFS Operational Specification}\label{app:alg-details}
This appendix gives the full operational specification behind the main-text
skeleton in Algorithm~\ref{alg:2ffs-t}. The main-text skeleton suppresses the
invocation-level cost cap \(B\); in the full resolver
\(\Call{Resolve}{v,s,k,B}\), this cap prevents a recursive child call from
exceeding the remaining budget inherited from its parent. The key structural modification is
\emph{local reversibility}: once a
node \(v\) has been exposed, the algorithm may continue to gather direct slow
samples at \(v\) even after expanding below it. Thus the two certification
routes at \(v\),
\begin{itemize}
\item the \emph{local route}, which shrinks \(I_v^{\mathrm{loc}}(t)\) through
slow samples at \(v\), and
\item the \emph{recursive route}, which shrinks \(I_v^{\mathrm{ch}}(t)\) by
expanding below \(v\) and recursively refining its children,
\end{itemize}
are no longer mutually exclusive. This is the change that removes the residue
terms from the previous proof architecture: exploring below \(v\) does not
erase the work already invested at \(v\), and sampling \(v\) no longer commits
the algorithm to stop there forever.

The algorithm operates on the dyadic target diameters
\((\rho_k)_{k\ge 0}\) from \S\ref{sec:prelim}. We use this notation
throughout the algorithmic setup: scale \(k\) means target width
\(\rho_k/2\), and \(\Gamma_v^{(k)}(\delta_v)=\Gamma_v(\rho_k,\delta_v)\) is
the corresponding local stopping cost. As before,
the algorithm no longer uses a single monotone global phase counter to decide
the work scale. Instead, every outer-loop iteration computes an \emph{active
local scale} for the currently decision-critical root side: the lower endpoint
of the current leader, or the upper endpoint of the current challenger. A node
\(v\) is treated as resolved at scale \(k\) once the targeted side has an
observable certificate at tolerance \(\rho_k/2\).

The local and recursive routes are now chosen by a \emph{budgeted live race},
rather than by comparing against a raw same-scale recursive DP. Fix a
nondecreasing polynomial depth-budget sequence
\[
\alpha_h\ge 1,\qquad h=0,1,\ldots,D,
\]
for example \(\alpha_h=(h+1)^2\). For an internal node \(v\), the recursive
route at scale \(k\) is allowed to spend at most
\[
\mathsf B_{v,k}^{\mathrm{rec}}
\coloneqq
\alpha_{h(v)}\,\Gamma_v^{(k)}(\delta_v)
\]
below \(v\) before the obligation falls back to direct slow sampling at \(v\).
Thus local sampling provides a hard safety valve, while recursive work is still
attempted first and is always directed only toward currently live certificate
witnesses. For proof accounting, 2FFS maintains, for each explored non-root node
\(v\) and scale \(k\),
\[
M_{v,k}^{\mathrm{loc}}(t)
\coloneqq
c \times
\bigl(\text{number of slow queries issued at \(v\) while resolving scale \(k\)}\bigr),
\]
and
\(
M_{v,k}^{\mathrm{exp}}(t)\)
as the total cost already spent strictly below \(v\) while resolving the
scale-\(k\) recursive obligation at \(v\).
The counters record work already performed; they do not decide which route is
cheaper through any raw full-child same-scale surrogate. This is important:
the recursive route should not force easy children to be certified at a tiny
parent scale merely because the parent is decision-critical. The budgeted race
keeps the algorithm implementable, adaptive to the current live witnesses, and
protected against unbounded recursive spending when the local route is the
right route.

At the root, let
\[
a_t\in \argmax_{a\in \mathrm{Ch}(r)} L_a(t),
\qquad
b_t\in \argmax_{a\in \mathrm{Ch}(r)\setminus\{a_t\}} U_a(t)
\]
be the current leader and challenger. The algorithm stops once some root action
beats every competitor up to \(\varepsilon\):
\[
\text{stop if there exists } \hat a_t\in \mathrm{Ch}(r)
\text{ such that }
L_{\hat a_t}(t)\ge \max_{a\neq \hat a_t} U_a(t)-\varepsilon.
\]
Otherwise, 2FFS refines one uncertified decision-critical root side. At the
root, the challenger is still treated as a full contender, because root
correctness requires either excluding that challenger or letting it promote
itself. Thus the algorithm computes the current lower-side scale for the leader
\(a_t\) and the current full-contender scale for the challenger \(b_t\), then
chooses the parent-relevant contender with the coarser unresolved diameter.
Inside the selected subtree, the algorithm first tries a budgeted live
recursive step. If no such step can be performed within the remaining recursive
budget of the current obligation, it takes one direct slow sample at the
current node. In the two selector cases
\((\textsc{Min},L)\) and \((\textsc{Max},U)\), the recursive route keeps the
same single-witness logic as before. In the two comparison cases
\((\textsc{Max},L)\) and \((\textsc{Min},U)\), it no longer asks live children
to become fully complete. Instead, it tracks the single current child whose
endpoint can still obstruct the parent-side comparison certificate. Unlike the
root challenger, an internal blocking contender is refined only through
\emph{parent-scale capped}
obligations: the call may support the parent's comparison side or the opposite
endpoint needed to exclude the blocker, but it may not descend to a scale finer
than the current parent obligation. This capped rule is the algorithmic change
that prevents the localization constant from growing exponentially with depth.

For an expanded internal node \(v\), define the scale-aware active child set
\(A_s(v,k,t)\subseteq \mathrm{Ch}(v)\) as follows. In the two selector cases we
keep the same single-witness sets:
\begin{itemize}
\item if \(s=L\) and \(v\) is a \textsc{Min} node, then
\[
A_L(v,k,t)=\argmin_{u\in \mathrm{Ch}(v)} L_u(t);
\]
\item if \(s=U\) and \(v\) is a \textsc{Max} node, then
\[
A_U(v,k,t)=\argmax_{u\in \mathrm{Ch}(v)} U_u(t);
\]
\end{itemize}
In the two comparison cases we instead compare against the current best
comparison endpoint and define a lazy set of critical children:
\begin{itemize}
\item if \(s=L\) and \(v\) is a \textsc{Max} node, let
\[
\lambda_L(v,t)\coloneqq \max_{u\in \mathrm{Ch}(v)} L_u(t),
\]
and set
\[
A_L(v,k,t)=
\bigl\{
u\in \mathrm{Ch}(v):\,
U_u(t)> \lambda_L(v,t)+\rho_k/2
\ \text{and}\ 
\neg\mathsf{Comp}_L(u,k,t)
\bigr\};
\]
\item if \(s=U\) and \(v\) is a \textsc{Min} node, let
\[
\lambda_U(v,t)\coloneqq \min_{u\in \mathrm{Ch}(v)} U_u(t),
\]
and set
\[
A_U(v,k,t)=
\bigl\{
u\in \mathrm{Ch}(v):\,
L_u(t)< \lambda_U(v,t)-\rho_k/2
\ \text{and}\ 
\neg\mathsf{Comp}_U(u,k,t)
\bigr\}.
\]
\end{itemize}
Thus \(A_s(v,k,t)\) contains only the children that are still unresolved at the
current scale for the comparison required by the parent. In a comparison case, a
child is discharged either by endpoint exclusion, namely falling outside the
displayed margin, or by same-side completion at scale \(k\). A discharged child
is no longer recursively refined for that parent-scale obligation.
Equivalently, the two comparison certificates use the following dual discharge
rules:
\[
\begin{array}{ll}
\text{\textsc{Max}, \(L\):}
& u\text{ is discharged if }
U_u(t)\le \lambda_L(v,t)+\rho_k/2
\text{ or }\mathsf{Comp}_L(u,k,t),\\[4pt]
\text{\textsc{Min}, \(U\):}
& u\text{ is discharged if }
L_u(t)\ge \lambda_U(v,t)-\rho_k/2
\text{ or }\mathsf{Comp}_U(u,k,t).
\end{array}
\]
Thus the first line excludes a \textsc{Max}-node child through its upper
endpoint or certifies it on the lower side, while the second line is the exact
dual for \textsc{Min} nodes.

When \((v,s)\) is a comparison case and \(A_s(v,k,t)\neq \varnothing\), define
the current blocking contender by
\[
\begin{cases}
b_L(v,k,t)\in \argmax_{u\in A_L(v,k,t)} U_u(t),
& \text{if } v \text{ is a \textsc{Max} node and } s=L,\\[4pt]
b_U(v,k,t)\in \argmin_{u\in A_U(v,k,t)} L_u(t),
& \text{if } v \text{ is a \textsc{Min} node and } s=U.
\end{cases}
\]
Thus only the current blocker can obstruct the comparison certificate at the
current scale. Recursive descent in a comparison case inspects only this current
blocker, with the blocker recomputed after every unit of work.

The algorithm maintains scale-local side-completion flags
\(\mathsf{Done}_s(v,k)\in\{0,1\}\). These flags are initialized to \(0\) when
the pair \((v,k)\) is first used, and are only changed from \(0\) to \(1\).
When used as a predicate, \(\mathsf{Done}_s(v,k,t)\) denotes that the flag is
equal to \(1\) at round \(t\).

Call \((v,s)\) a comparison case if either \(v\) is a \textsc{Max} node with
\(s=L\), or \(v\) is a \textsc{Min} node with \(s=U\). The selector cases are
\((\textsc{Min},L)\) and \((\textsc{Max},U)\). In a comparison case, the
children in \(A_s(v,k,t)\) are exactly those that are neither endpoint-excluded
nor same-side complete at scale \(k\). Among them, only the current blocker
\(b_s(v,k,t)\) is eligible for recursive comparison work. The blocker may be
refined either on the parent-targeted side \(s\), which supports promotion or
same-side safety, or on the opposite side \(\bar s\), which supports endpoint
exclusion, but both calls are capped by the current parent scale \(k\). No
child is required to become fully complete merely because it is currently
active.

For each fixed scale \(k\) and round \(t\), define the predicates
\(\mathsf{Cert}_s(v,k,t)\) and \(\mathsf{Comp}_s(v,k,t)\) upward from the
leaves of the currently explored tree. The observable side certificate
\(\mathsf{Cert}_s(v,k,t)\) holds if either
\[
U_v(t)-L_v(t)\le \rho_k/2,
\]
or
\[
\mathrm{width}\bigl(I_v^{\mathrm{loc}}(t)\bigr)\le \rho_k/2,
\]
or \(v\) is an expanded internal node and one of the following recursive
alternatives holds:
\[
\begin{cases}
\text{for every }u\in\mathrm{Ch}(v),\
\bigl[
U_u(t)\le \lambda_L(v,t)+\rho_k/2
\ \text{or}\ 
\mathsf{Comp}_L(u,k,t)
\bigr],
& \text{if }v\text{ is \textsc{Max} and }s=L,\\[4pt]
\text{for every }u\in\mathrm{Ch}(v),\
\bigl[
L_u(t)\ge \lambda_U(v,t)-\rho_k/2
\ \text{or}\ 
\mathsf{Comp}_U(u,k,t)
\bigr],
& \text{if }v\text{ is \textsc{Min} and }s=U,\\[4pt]
\mathsf{Comp}_s(u,k,t)\quad\text{for every }u\in A_s(v,k,t),
& \text{if }(v,s)\text{ is a selector case}.
\end{cases}
\]
For leaves, only the first two alternatives can apply. Finally,
\[
\mathsf{Comp}_s(v,k,t)
\quad\Longleftrightarrow\quad
\mathsf{Done}_s(v,k,t)
\ \text{ or }\
\mathsf{Cert}_s(v,k,t).
\]
This joint definition of \(\mathsf{Cert}\), \(\mathsf{Comp}\), the comparison
endpoints \(\lambda_s(v,t)\), and the lazy active sets \(A_s(v,k,t)\) is
well-founded because the only recursive certificate reference is from a node to
its children. Whenever the
algorithm recomputes intervals after an oracle query or expansion, it latches
every newly observed side certificate among the affected nodes, in particular
the updated node and its currently explored ancestors, by setting the
corresponding \(\mathsf{Done}\) flag to \(1\).

For any explored non-root node \(x\) with width \(w_x(t)=U_x(t)-L_x(t)>0\), let
\(\underline k(x,t)\) be the unique integer \(k\ge 0\) such that
\[
\rho_k/2 < w_x(t)\le \rho_k.
\]
The node-local active scale of side \(s\in\{L,U\}\) at \(x\) is
\[
K_s(x,t)
\coloneqq
\min\{k\ge \underline k(x,t):\ \neg\mathsf{Comp}_s(x,k,t)\},
\]
with the convention \(K_s(x,t)=+\infty\) if \(w_x(t)=0\) or if the set is empty. Thus the
algorithm never refines a side at a scale where that side is already certified;
if the width-derived scale is already complete, it moves directly to the next
unresolved finer scale for the same side of the same node.

For a finite parent scale \(k\), define the capped unresolved scale
\[
K_s^{\le k}(x,t)
\coloneqq
\min\{0\le j\le k:\ \neg\mathsf{Comp}_s(x,j,t)\},
\]
with the convention \(K_s^{\le k}(x,t)=+\infty\) if the set is empty. A
recursive call made while resolving a parent obligation at scale \(k\) may only
call a child at a capped scale \(K_{s'}^{\le k}(u,t)\). Hence recursive descent
can refine a child at a coarser unresolved scale, but never at a scale finer
than the current parent obligation.

The root has no parent scale. For the root challenger, define its contender
side as follows. If at least one of \(K_L(u,t)\) and \(K_U(u,t)\) is finite, set
\[
\sigma(u,t)\in
\argmax_{s\in\{L,U\}:\,K_s(u,t)<+\infty}\rho_{K_s(u,t)},
\]
breaking ties arbitrarily, and define its contender scale by
\[
K_{\pm}(u,t)\coloneqq K_{\sigma(u,t)}(u,t).
\]
If both \(K_L(u,t)\) and \(K_U(u,t)\) are \(+\infty\), set
\(\sigma(u,t)=U\) arbitrarily and \(K_{\pm}(u,t)=+\infty\). In this case the
contender has no unresolved root-side obligation.
A full-contender call then uses
\(
\Call{Resolve}{u,\sigma(u,t),K_{\pm}(u,t)}.
\)
For internal comparison blockers we instead use capped side obligations. Write
\(\bar L=U\) and \(\bar U=L\). During
\(\Call{Resolve}{v,s,k}\), an internal blocker \(b_s(v,k,t)\) is first called
on the opposite side \(\bar s\) at scale
\(K_{\bar s}^{\le k}(b_s(v,k,t),t)\), whenever that capped obligation is
finite. This is the endpoint that appears in the exclusion inequality. Only
after the blocker has no unresolved opposite-side obligation at scales
\(\le k\), but still remains active, is it called on the same side \(s\) at
scale \(K_s^{\le k}(b_s(v,k,t),t)\). The same-side blocker obligation can then
discharge the blocker through \(\mathsf{Comp}_s(b_s,k,t)\) or allow it to
promote through the comparison-side endpoint.
When evaluating \(K_s\), any previously unused scale-local counters and flags
are first initialized to zero.
The fourth argument of \(\Call{Resolve}{v,s,k,B}\) is an invocation-level cost
cap: the call may not perform a positive oracle action whose returned cost
would exceed \(B\). We write \(\Call{Resolve}{v,s,k}\) in the analysis when the
cap is irrelevant; all actual recursive calls made by the algorithm pass the
remaining recursive budget of their parent obligation.

\begin{remark}[Scale-synchronous progress]\label{rmk:scale-sync-full}
Recursive descent is required to remain synchronized with the current parent
obligation and its remaining recursive budget. Concretely, inside a call
\(\Call{Resolve}{v,s,k,B}\), a child call
\(\Call{Resolve}{u,s',K_{s'}^{\le k}(u,t),B_{\mathrm{rec}}}\) is admissible
only if the scale-\(k\) certificate at \(v\) is still unresolved at the current
round, the child is selected by the current scale-\(k\) rule at \(v\), and
\(B_{\mathrm{rec}}\) is no larger than the remaining budget
\(\mathsf B_{v,k}^{\mathrm{rec}}-M_{v,k}^{\mathrm{exp}}\). Thus:
\begin{itemize}
\item in a comparison case, the algorithm may refine only the current blocking
contender \(b_s(v,k,t)\in A_s(v,k,t)\); the blocker is refined first on
\((b_s(v,k,t),\bar s,K_{\bar s}^{\le k}(b_s(v,k,t),t))\) whenever this
obligation is finite, and otherwise on
\((b_s(v,k,t),s,K_s^{\le k}(b_s(v,k,t),t))\);
\item in a selector case, the algorithm may refine only a child
\(u\in A_s(v,k,t)\) with \(\neg\mathsf{Comp}_s(u,k,t)\), and it calls that
child at \(K_s^{\le k}(u,t)\).
\end{itemize}
Moreover, every invocation of \(\Call{Resolve}{v,s,k,B}\) performs at most one
local query, one expansion, one admissible child call, or one budget-blocked
return before returning. Hence after every positive unit of work the algorithm
recomputes the relevant intervals, comparison endpoints, blockers, and active
child sets before any further descent under the same parent obligation.
Therefore no child is pushed to a finer local scale until all coarser-scale
tests at the parent that could already certify or exclude that child from the
current scale-\(k\) obligation have been performed.
\end{remark}

The implementation below is split into reusable routines. Algorithm
\ref{alg:2ffs-routines} contains scale and selection utilities, Algorithm
\ref{alg:2ffs-actions} contains the atomic oracle actions, Algorithm
\ref{alg:2ffs-resolve} is the budgeted recursive resolver, and Algorithm
\ref{alg:2ffs-full} is the short top-level driver.

\begin{algorithm}[ht!]
\caption{\textsc{2FFS} scale and selection utilities}
\label{alg:2ffs-routines}
\begin{algorithmic}[1]
\Procedure{EnsureScale}{$v,k$}
    \If{the counters and flags for \((v,k)\) are undefined}
        \State Set \(M_{v,k}^{\mathrm{loc}}\gets 0\) and \(M_{v,k}^{\mathrm{exp}}\gets 0\)
        \State Set \(\mathsf{Done}_L(v,k)\gets 0\) and \(\mathsf{Done}_U(v,k)\gets 0\)
    \EndIf
\EndProcedure
\Procedure{RefreshAndLatch}{$k$}
    \State Recompute all intervals affected by the preceding query, expansion, or child return, and propagate them to the root
    \State Set \(\mathsf{Done}_{s'}(y,k)\gets 1\) for every affected node \(y\) and side \(s'\) with \(\mathsf{Cert}_{s'}(y,k,t)\)
\EndProcedure
\Procedure{ContenderScale}{$x$}
    \State Compute \(k_L\gets K_L(x,t)\) and \(k_U\gets K_U(x,t)\)
    \If{\(k_L<+\infty\) and \(\bigl(k_U=+\infty\) or \(\rho_{k_L}\ge \rho_{k_U}\bigr)\)}
        \State \textbf{return} \((L,k_L)\)
    \Else
        \State \textbf{return} \((U,k_U)\)
    \EndIf
\EndProcedure
\Procedure{CappedScale}{$x,s,k$}
    \State \textbf{return} \(K_s^{\le k}(x,t)\)
\EndProcedure
\Procedure{RaceBudget}{$v,k$}
    \State \textbf{return} \(\alpha_{h(v)}\Gamma_v^{(k)}(\delta_v)\)
\EndProcedure
\Procedure{RootObligation}{}
    \State Compute \(a\in \argmax_{u\in \mathrm{Ch}(r)} L_u(t)\) and \(b\in \argmax_{u\in \mathrm{Ch}(r)\setminus\{a\}} U_u(t)\)
    \State Compute \(k_a\gets K_L(a,t)\) and \((s_b,k_b)\gets \Call{ContenderScale}{b}\)
    \If{\(k_a<+\infty\) and \(\bigl(k_b=+\infty\) or \(\rho_{k_a}\ge \rho_{k_b}\bigr)\)}
        \State \textbf{return} \((a,L,k_a)\)
    \Else
        \State \textbf{return} \((b,s_b,k_b)\)
    \EndIf
\EndProcedure
\Procedure{ChildObligation}{$v,s,k$}
    \If{\((v,s)\) is a comparison case}
        \State Let \(b\gets b_s(v,k,t)\)
        \State \(k_b^{\bar s}\gets \Call{CappedScale}{b,\bar s,k}\)
        \If{\(k_b^{\bar s}<+\infty\)}
            \State \textbf{return} \((b,\bar s,k_b^{\bar s})\)
        \EndIf
        \State \(k_b^s\gets \Call{CappedScale}{b,s,k}\)
        \State \textbf{return} \((b,s,k_b^s)\)
    \EndIf
    \State Choose \(u\in A_s(v,k,t)\) with \(\neg\mathsf{Comp}_s(u,k,t)\), maximizing \(U_u(t)-L_u(t)\)
    \State \(k_u\gets \Call{CappedScale}{u,s,k}\)
    \State \textbf{return} \((u,s,k_u)\)
\EndProcedure
\end{algorithmic}
\end{algorithm}

\begin{algorithm}[ht!]
\caption{\textsc{2FFS} atomic oracle actions}
\label{alg:2ffs-actions}
\begin{algorithmic}[1]
\Procedure{LocalStep}{$v,k,B$}
    \If{\(B<c\)}
        \State \textbf{return} \((0,\mathsf{blocked})\)
    \EndIf
    \State Query \(S\) once at \(v\) and update \(I_v^S(t)\), \(I_v^{\mathrm{loc}}(t)\), and \(I_v(t)\)
    \State \(M_{v,k}^{\mathrm{loc}}\gets M_{v,k}^{\mathrm{loc}}+c\)
    \State \Call{RefreshAndLatch}{k}
    \State \textbf{return} \((c,\mathsf{progress})\)
\EndProcedure
\Procedure{ExpandNode}{$v,k$}
    \State Add \(\mathrm{Ch}(v)\) to \(\mathcal{T}_t\) and query \(F\) on each child
    \ForAll{\(u\in \mathrm{Ch}(v)\)}
        \State \Call{EnsureScale}{u,k}
    \EndFor
    \State \(M_{v,k}^{\mathrm{exp}}\gets M_{v,k}^{\mathrm{exp}}+|\mathrm{Ch}(v)|\)
    \State \Call{RefreshAndLatch}{k}
    \State \textbf{return} \((|\mathrm{Ch}(v)|,\mathsf{progress})\)
\EndProcedure
\end{algorithmic}
\end{algorithm}

\begin{algorithm}[ht!]
\caption{\textsc{2FFS} budgeted recursive resolver}
\label{alg:2ffs-resolve}
\begin{algorithmic}[1]
\Procedure{Resolve}{$v,s,k,B$}
    \State \Call{EnsureScale}{v,k}
    \If{\(\mathsf{Comp}_s(v,k,t)\)}
        \State Set \(\mathsf{Done}_s(v,k)\gets 1\)
        \State \textbf{return} \((0,\mathsf{cert})\)
    \EndIf
    \If{\(h(v)=0\)}
        \State \textbf{return} \Call{LocalStep}{v,k,B}
    \EndIf
    \State \(R\gets \Call{RaceBudget}{v,k}-M_{v,k}^{\mathrm{exp}}\)
    \If{\(R\le 0\)}
        \State \textbf{return} \Call{LocalStep}{v,k,B}
    \EndIf
    \State \(B_{\mathrm{rec}}\gets \min\{B,R\}\)
    \If{\(v\) is unexpanded}
        \If{\(|\mathrm{Ch}(v)|>R\)}
            \State \textbf{return} \Call{LocalStep}{v,k,B}
        \ElsIf{\(|\mathrm{Ch}(v)|>B\)}
            \State \textbf{return} \((0,\mathsf{blocked})\)
        \EndIf
        \State \textbf{return} \Call{ExpandNode}{v,k}
    \EndIf
    \State \((u,s_u,k_u)\gets \Call{ChildObligation}{v,s,k}\)
    \State \((q,z)\gets \Call{Resolve}{u,s_u,k_u,B_{\mathrm{rec}}}\)
    \If{\(z=\mathsf{blocked}\)}
        \If{\(B<R\)}
            \State \textbf{return} \((0,\mathsf{blocked})\)
        \EndIf
        \State \textbf{return} \Call{LocalStep}{v,k,B}
    \EndIf
    \State \(M_{v,k}^{\mathrm{exp}}\gets M_{v,k}^{\mathrm{exp}}+q\)
    \State \Call{RefreshAndLatch}{k}
    \State \textbf{return} \((q,z)\)
\EndProcedure
\end{algorithmic}
\end{algorithm}

\begin{algorithm}[ht!]
\caption{\textsc{2FFS} top-level driver}
\label{alg:2ffs-full}
\begin{algorithmic}[1]
\State \textbf{Input:} node-wise confidence schedule \((\delta_v)\), accuracy \(\varepsilon\), depth budgets \((\alpha_h)_{h=0}^D\)
\State \(\mathcal{T}_0\gets\{r\}\cup \mathrm{Ch}(r)\)
\ForAll{\(a\in \mathrm{Ch}(r)\)}
    \State Query \(F\) at \(a\) and initialize \(I_a^F\), \(I_a^{\mathrm{loc}}(0)\), and \(I_a(0)\)
\EndFor
\State Propagate intervals to the root through Eq.~\eqref{eq:minmax-propegate}
\While{there is no \(\hat a_t\in \mathrm{Ch}(r)\) with \(L_{\hat a_t}(t)\ge \max_{a\neq \hat a_t} U_a(t)-\varepsilon\)}
    \State \((x,s,k)\gets \Call{RootObligation}{}\)
    \State \Call{Resolve}{$x,s,k,+\infty$}
\EndWhile
\State \textbf{return} any root child \(\hat a_\tau\) with \(L_{\hat a_\tau}(\tau)\ge \max_{a\neq \hat a_\tau} U_a(\tau)-\varepsilon\)
\end{algorithmic}
\end{algorithm}

\paragraph{Lazy critical children.}
In the selector cases, the live sets reduce to the current endpoint witness set.
In the comparison cases, a child is discharged once it is endpoint-excluded by
the scale-\(k\) comparison margin or same-side complete at scale \(k\). Among
the still-live children, only the endpoint-most blocker is eligible for
recursive comparison work. This blocker is refined through capped side
obligations only, so it may either become safe on the comparison side or be
excluded through the opposite endpoint without being pushed to a finer scale
than the parent.

\subsection{Proof of Theorem~\ref{thm:pac-correct}}
\label{app:pac-correct}
\begin{proof}
By Eq.~\eqref{eq:E-delta} and the union bound in \S\ref{sec:prelim},
\[
\mathbb P(\mathcal E_\delta)\ge 1-\delta.
\]
We prove that no \(\varepsilon\)-incorrect finite recommendation can occur on
\(\mathcal E_\delta\). Fix an outcome in \(\mathcal E_\delta\) and suppose the
algorithm stops at a finite time \(\tau\). Let \(\hat a_\tau\) be any returned
root child. By the stopping rule,
\[
L_{\hat a_\tau}(\tau)
\ge
\max_{a\neq \hat a_\tau}U_a(\tau)-\varepsilon.
\]
Lemma~\ref{lem:interval-validity} gives valid root-child intervals at time
\(\tau\), because the lemma is pathwise for every round and \(\tau\) is finite:
for every \(a\in\mathrm{Ch}(r)\),
\[
L_a(\tau)\le V^*(r,a)\le U_a(\tau).
\]
Therefore, we have
\[
V^*(r,\hat a_\tau)
\ge
L_{\hat a_\tau}(\tau)
\ge
\max_{a\neq \hat a_\tau}U_a(\tau)-\varepsilon
\ge
\max_{a\neq \hat a_\tau}V^*(r,a)-\varepsilon.
\]
If \(\hat a_\tau=a^*\), then the returned action is optimal. Otherwise
\(a^*\neq \hat a_\tau\), and the last display with \(a=a^*\) gives
\[
V^*(r,a^*)-V^*(r,\hat a_\tau)\le \varepsilon.
\]
Thus, every finite recommendation on \(\mathcal E_\delta\) is
\(\varepsilon\)-optimal, and the error event in the theorem is contained in
\(\mathcal E_\delta^c\), whose probability is at most \(\delta\).
\end{proof}

\subsection{Derived Lemmas for the General-Depth Bound}\label{app:general-depth-proofs}
This section collects the proof ingredients used by the upper-bound theorem
stated in Theorem~\ref{thm:budgeted-general-depth}. Throughout, we work on the
simultaneous-validity event \(\mathcal E_\delta\), set \(\varepsilon=0\), and
use Assumption~\ref{ass:local-scale-regularity} only in the aggregate summation
and charging steps.

We start by tracking one-sided endpoint accuracy at the current dyadic scale.
\begin{definition}[Side-specific scale safety]\label{def:side-safe}
Fix a scale index \(k\ge 0\), a round \(t\), an explored non-root node \(v\),
and a side \(s\in\{L,U\}\). We say that \(v\) is \((s,k)\)-safe at round \(t\)
if
\[
\begin{cases}
V^*(v)-L_v(t)\le \rho_k/2, & \text{if } s=L,\\[4pt]
U_v(t)-V^*(v)\le \rho_k/2, & \text{if } s=U.
\end{cases}
\]
\end{definition}

On \(\mathcal E_\delta\), both one-sided errors in
Definition~\ref{def:side-safe} are nonnegative by
Lemma~\ref{lem:interval-validity}. Moreover, if
\[
U_v(t)-L_v(t)\le \rho_k/2,
\]
then \(v\) is both \((L,k)\)-safe and \((U,k)\)-safe.

\begin{lemma}[Nested interval updates]\label{lem:interval-nesting}
For every node \(v\) whose effective interval is defined at round \(t\)
including the root, the effective intervals are nested over time: if
\(t'\ge t\), then
\[
I_v(t')\subseteq I_v(t).
\]
Equivalently, \(L_v(t')\ge L_v(t)\) and \(U_v(t')\le U_v(t)\). The same nesting
holds for \(I_v^{\mathrm{loc}}(t)\) whenever \(v\neq r\) is exposed by round
\(t\).
\end{lemma}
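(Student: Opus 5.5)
We argue by induction over time (each round of updates) combined with structural induction over the explored tree from the frontier upward, in the same spirit as the proofs of Lemma~\ref{lem:minimax-propagation} and Lemma~\ref{lem:interval-validity}. It suffices to compare consecutive update times \(t\) and \(t+1\) (one atomic action: a slow query, an expansion, or a refresh) and then chain the inclusions.

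\textbf{Local intervals.} First, the local nesting. The fast interval \(I_v^F\) is fixed once \(v\) is exposed, since the fast oracle is deterministic and \(B(h(v))\) does not depend on time. By Eq.~\eqref{eq:slow-interval}, \(I_v^S(t)\) is a running intersection indexed by \(j=1,\dots,N_v^S(t)\). Since \(N_v^S\) is nondecreasing and slow samples are never discarded (including after expansion, by local reversibility), \(I_v^S(t+1)\subseteq I_v^S(t)\). The convention \((-\infty,+\infty)\) for \(N_v^S=0\) covers the first sample. Hence \(I_v^{\mathrm{loc}}(t+1)=I_v^F\cap I_v^S(t+1)\subseteq I_v^{\mathrm{loc}}(t)\).

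\textbf{Effective intervals.} Next, we show by induction on remaining explored height that \(I_v(t+1)\subseteq I_v(t)\) for every node explored at time \(t\).

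\emph{Unexpanded at both times.} Here \(I_v=I_v^{\mathrm{loc}}\), and the local step applies.

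\emph{Expanded at both times.} The children are explored at \(t\), so by induction \(L_u\) is nondecreasing and \(U_u\) is nonincreasing for each child. Because \(\max\) and \(\min\) are coordinatewise monotone, both endpoints of \(I_v^{\mathrm{ch}}\) in Eq.~\eqref{eq:minmax-propegate} move inward. Intersecting with the nested local interval then preserves nesting.

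\emph{Expanded exactly between \(t\) and \(t+1\).} At time \(t\) we have \(I_v^{\mathrm{ch}}(t)=(-\infty,+\infty)\), so \(I_v(t+1)=I_v^{\mathrm{loc}}(t+1)\cap I_v^{\mathrm{ch}}(t+1)\subseteq I_v^{\mathrm{loc}}(t)=I_v(t)\).

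The root uses only the child backup, so it is handled by the same argument as the expanded-at-both-times case.

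\textbf{Main obstacle.} The only delicate point is whether an intersection can be empty or a backup can produce a degenerate interval, together with bookkeeping ensuring that the structural induction is well-founded as the explored tree grows. Newly exposed children have no time-\(t\) interval, but they only enter through their parent's backup, and that backup was \((-\infty,+\infty)\) before expansion, so no comparison with a time-\(t\) child interval is ever needed. Emptiness is not an issue for inclusion; on \(\mathcal E_\delta\) it cannot occur anyway, by Lemma~\ref{lem:interval-validity}. Finally, the inequalities \(L_v(t')\ge L_v(t)\) and \(U_v(t')\le U_v(t)\) follow by chaining the one-step inclusions, since the algorithm never overwrites an interval other than by recomputation from these monotone ingredients.
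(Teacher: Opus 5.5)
Your proposal is correct and follows essentially the same route as the paper. The fixed fast interval and the running slow intersection give local nesting; an upward induction then uses coordinatewise monotonicity of the max/min backup, the pre-expansion convention \((-\infty,+\infty)\) covers the first expansion, and the root is a pure backup. Your explicit three-case split over consecutive update times is just more detailed bookkeeping of the same argument.
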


\begin{proof}
For an exposed non-root node, the fast interval is fixed. The slow interval in
Eq.~\eqref{eq:slow-interval} is a running intersection of all slow-confidence
intervals observed so far, so it is nested. Hence
\(I_v^{\mathrm{loc}}(t)=I_v^F\cap I_v^S(t)\) is nested.

It remains to check the child-backup intervals. For a \textsc{Max} node, the
lower endpoint \(\max_u L_u(t)\) is nondecreasing and the upper endpoint
\(\max_u U_u(t)\) is nonincreasing whenever every child interval is nested.
For a \textsc{Min} node, the same statement holds for
\(\min_u L_u(t)\) and \(\min_u U_u(t)\). Thus, by induction from the current
leaves of the explored tree upward, \(I_v^{\mathrm{ch}}(t)\) is nested after
expansion. Before expansion it is \((-\infty,+\infty)\), so the first expansion
also only shrinks it. Intersecting the nested local and child-backup intervals
preserves nesting of \(I_v(t)\) for non-root nodes. The root has no local
interval, and its effective interval is precisely the nested child-backup
interval.
\end{proof}

\begin{lemma}[Comparison-certificate soundness]\label{lem:cert-soundness}
Fix a round \(t\), a scale index \(k\ge 0\), an explored non-root node \(v\),
and a side \(s\in\{L,U\}\). On \(\mathcal E_\delta\), if
\(\mathsf{Comp}_s(v,k,t)\) holds, then \(v\) is \((s,k)\)-safe at round \(t\).
\end{lemma}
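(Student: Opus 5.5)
We will prove the statement by induction on the remaining depth \(h(v)\). The induction hypothesis is asserted simultaneously for all rounds and all scales: for every explored node \(u\) with \(h(u)<h(v)\), every round \(t'\), and every side \(s'\), if \(\mathsf{Comp}_{s'}(u,k,t')\) holds then \(u\) is \((s',k)\)-safe at \(t'\). We first dispose of the \(\mathsf{Done}\) branch of \(\mathsf{Comp}\), then handle the observable certificate \(\mathsf{Cert}\) case by case. Throughout we work on \(\mathcal E_\delta\), so by Lemma~\ref{lem:interval-validity} we have \(L_x(t)\le V^*(x)\le U_x(t)\) for every explored node \(x\).

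\emph{Reduction from \(\mathsf{Done}\) to \(\mathsf{Cert}\).} A flag \(\mathsf{Done}_s(v,k)\) is set to \(1\) only at some round \(t'\le t\) at which \(\mathsf{Cert}_s(v,k,t')\) or \(\mathsf{Comp}_s(v,k,t')\) held. This happens either in \textsc{RefreshAndLatch} or at the top of \textsc{Resolve}. Taking \(t'\) to be the first latching round, the flag was not yet set at \(t'\), so \(\mathsf{Cert}_s(v,k,t')\) must have held. By Lemma~\ref{lem:interval-nesting}, \(L_v\) is nondecreasing and \(U_v\) is nonincreasing in time. Hence \(V^*(v)-L_v(t)\le V^*(v)-L_v(t')\), and symmetrically for \(U\), so \((s,k)\)-safety at \(t'\) transfers to \(t\). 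It therefore suffices to show that \(\mathsf{Cert}_s(v,k,t)\) implies \((s,k)\)-safety at the same round \(t\).

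\emph{Non-recursive alternatives.} Suppose \(U_v(t)-L_v(t)\le\rho_k/2\). Since \(V^*(v)\) lies in the interval, each one-sided error is at most the width, and we are done. Suppose instead that \(\mathrm{width}(I_v^{\mathrm{loc}}(t))\le\rho_k/2\). Then \(I_v(t)\subseteq I_v^{\mathrm{loc}}(t)\) and \(V^*(v)\in I_v(t)\), so again each one-sided error is at most \(\rho_k/2\). These two alternatives also cover leaves, which gives the base case of the induction.

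\emph{Recursive alternatives.} These use the induction hypothesis on the children. The key observation in each case is that \(I_v(t)\subseteq I_v^{\mathrm{ch}}(t)\), so \(L_v(t)\ge \lambda_L(v,t)\) for a \textsc{Max} node, and the analogous inequalities hold in the other cases.

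\textsc{Max}, \(L\): pick \(u^*\) with \(V^*(v)=V^*(u^*)\). By the certificate, one of two things holds. In the first case \(U_{u^*}\le\lambda_L+\rho_k/2\), so \(V^*(v)\le U_{u^*}\le \lambda_L+\rho_k/2\le L_v+\rho_k/2\). In the second case \(\mathsf{Comp}_L(u^*,k,t)\) holds, and the hypothesis gives \(V^*(v)\le L_{u^*}+\rho_k/2\le\lambda_L+\rho_k/2\le L_v+\rho_k/2\).

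\textsc{Min}, \(U\): this is the exact dual, using \(U_v\le\lambda_U\) and \(V^*(v)=V^*(u^*)\) for a minimizing child \(u^*\).

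Selector \((\textsc{Min},L)\): take \(u_0\in A_L(v,k,t)=\argmin_u L_u\). Then \(L_v\ge L_{u_0}\). The certificate gives \(\mathsf{Comp}_L(u_0,k,t)\), so by the hypothesis \(V^*(v)\le V^*(u_0)\le L_{u_0}+\rho_k/2\le L_v+\rho_k/2\).

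Selector \((\textsc{Max},U)\): this is the dual. We have \(U_v\le U_{u_0}\) and \(V^*(v)\ge V^*(u_0)\ge U_{u_0}-\rho_k/2\).

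\emph{Main obstacle.} The delicate point is the well-foundedness of the induction. A child's \(\mathsf{Comp}\) may rely on a \(\mathsf{Done}\) flag latched at an earlier round. That flag may in turn have been latched via \textsc{Resolve}'s \(\mathsf{Comp}\) test rather than via \(\mathsf{Cert}\). We will handle this with the outer induction on \(h\), taken over all rounds simultaneously. Inside it, we use the first-latching-time argument, so that every latch traces back to a genuine \(\mathsf{Cert}\) at the same scale \(k\). Nesting of intervals (Lemma~\ref{lem:interval-nesting}) then carries safety forward in time.
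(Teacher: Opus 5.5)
Your proposal is correct and follows essentially the same route as the paper: induction on remaining depth over all rounds, soundness of the two width alternatives and the four recursive certificate cases via \(I_v(t)\subseteq I_v^{\mathrm{ch}}(t)\), and a first-latching-time argument combined with Lemma~\ref{lem:interval-nesting} to reduce \(\mathsf{Done}\) to \(\mathsf{Cert}\). The only cosmetic difference is that in the comparison cases you bound only a child attaining \(V^*(v)\) rather than every child, which is equivalent.
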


\begin{proof}
We prove the stronger statement simultaneously for all rounds, scales, and
sides by induction on the remaining depth \(h(v)\): whenever
\(\mathsf{Comp}_s(v,k,t)\) holds, \(v\) is \((s,k)\)-safe at round \(t\).
Fix \(v\), and assume the statement has already been proved for all children of
\(v\). We first show, under this induction hypothesis, that the observable
certificate \(\mathsf{Cert}_s(v,k,t)\) is sound.

If
\[
U_v(t)-L_v(t)\le \rho_k/2,
\]
then Lemma~\ref{lem:interval-validity} gives \(V^*(v)\in [L_v(t),U_v(t)]\), so
both one-sided errors are at most \(\rho_k/2\).

If
\[
\mathrm{width}\bigl(I_v^{\mathrm{loc}}(t)\bigr)\le \rho_k/2,
\]
then \(I_v(t)\subseteq I_v^{\mathrm{loc}}(t)\), and interval validity again
places \(V^*(v)\) in \(I_v(t)\). Hence both one-sided errors are at most
\(\rho_k/2\).

Assume now that \(\mathsf{Cert}_s(v,k,t)\) holds through its recursive
alternative. We treat the four structural cases.

\paragraph{\textsc{Min} node, \(s=L\).}
Here
\[
A_L(v,k,t)=\arg\min_{u\in\mathrm{Ch}(v)}L_u(t),
\]
which is nonempty. Every \(u\in A_L(v,k,t)\) satisfies
\(\mathsf{Comp}_L(u,k,t)\), hence by the induction hypothesis is
\((L,k)\)-safe. Choose any \(x\in A_L(v,k,t)\). Since
\[
L_x(t)=\min_{u\in\mathrm{Ch}(v)}L_u(t)
\qquad\text{and}\qquad
V^*(v)=\min_{u\in\mathrm{Ch}(v)}V^*(u)\le V^*(x),
\]
and because \(I_v(t)\subseteq I_v^{\mathrm{ch}}(t)\),
\[
L_v(t)\ge \min_{u\in\mathrm{Ch}(v)}L_u(t)=L_x(t),
\]
we obtain
\[
V^*(v)-L_v(t)
\le
V^*(x)-L_x(t)
\le
\rho_k/2.
\]

\paragraph{\textsc{Max} node, \(s=U\).}
Here
\[
A_U(v,k,t)=\arg\max_{u\in\mathrm{Ch}(v)}U_u(t),
\]
which is nonempty. Every \(u\in A_U(v,k,t)\) satisfies
\(\mathsf{Comp}_U(u,k,t)\), hence by the induction hypothesis is
\((U,k)\)-safe. Choose any \(x\in A_U(v,k,t)\). Since
\[
U_x(t)=\max_{u\in\mathrm{Ch}(v)}U_u(t)
\qquad\text{and}\qquad
V^*(v)=\max_{u\in\mathrm{Ch}(v)}V^*(u)\ge V^*(x),
\]
and because \(I_v(t)\subseteq I_v^{\mathrm{ch}}(t)\),
\[
U_v(t)\le \max_{u\in\mathrm{Ch}(v)}U_u(t)=U_x(t),
\]
we obtain
\[
U_v(t)-V^*(v)
\le
U_x(t)-V^*(x)
\le
\rho_k/2.
\]

\paragraph{\textsc{Max} node, \(s=L\).}
Let \(\lambda\coloneqq \lambda_L(v,t)=\max_{w\in\mathrm{Ch}(v)}L_w(t)\).
Fix any child \(u\in\mathrm{Ch}(v)\). If it is endpoint-excluded, then
\[
V^*(u)\le U_u(t)\le \lambda+\rho_k/2.
\]
Otherwise the certificate requires \(\mathsf{Comp}_L(u,k,t)\). By the induction
hypothesis,
\[
V^*(u)\le L_u(t)+\rho_k/2\le \lambda+\rho_k/2.
\]
Hence
\[
V^*(v)=\max_{u\in\mathrm{Ch}(v)}V^*(u)\le \lambda+\rho_k/2.
\]
Since \(I_v(t)\subseteq I_v^{\mathrm{ch}}(t)\) and
\(\lambda=\max_u L_u(t)\),
\[
L_v(t)\ge \max_{u\in\mathrm{Ch}(v)}L_u(t)=\lambda.
\]
Therefore
\[
V^*(v)-L_v(t)\le \rho_k/2.
\]

\paragraph{\textsc{Min} node, \(s=U\).}
Let \(\lambda\coloneqq \lambda_U(v,t)=\min_{w\in\mathrm{Ch}(v)}U_w(t)\).
Fix any child \(u\in\mathrm{Ch}(v)\). If it is endpoint-excluded, then
\[
V^*(u)\ge L_u(t)\ge \lambda-\rho_k/2.
\]
Otherwise the certificate requires \(\mathsf{Comp}_U(u,k,t)\). By the induction
hypothesis,
\[
V^*(u)\ge U_u(t)-\rho_k/2\ge \lambda-\rho_k/2.
\]
Hence
\[
V^*(v)=\min_{u\in\mathrm{Ch}(v)}V^*(u)\ge \lambda-\rho_k/2.
\]
Since \(I_v(t)\subseteq I_v^{\mathrm{ch}}(t)\) and
\(\lambda=\min_u U_u(t)\),
\[
U_v(t)\le \min_{u\in\mathrm{Ch}(v)}U_u(t)=\lambda.
\]
Therefore
\[
U_v(t)-V^*(v)\le \rho_k/2.
\]

This proves soundness of \(\mathsf{Cert}_s(v,k,t)\).

Now suppose \(\mathsf{Done}_s(v,k,t)=1\), and let \(t_0\le t\) be the first
round at which this flag was changed from \(0\) to \(1\). At that first
latching time, either the flag was set by the post-update certificate sweep, in
which case \(\mathsf{Cert}_s(v,k,t_0)\) held by the update rule, or it was set
by the first branch of \(\Call{Resolve}{v,s,k}\). In the latter case, the flag
was still zero immediately before the assignment, and the branch condition
\(\mathsf{Comp}_s(v,k,t_0)\) could only hold through
\(\mathsf{Cert}_s(v,k,t_0)\). Thus, in all cases,
\(\mathsf{Cert}_s(v,k,t_0)\) held. Applying the certificate soundness just
proved in this induction step at the earlier round \(t_0\), \(v\) was
\((s,k)\)-safe at round \(t_0\). If \(s=L\), Lemma~\ref{lem:interval-nesting}
gives \(L_v(t)\ge L_v(t_0)\), and thus
\[
V^*(v)-L_v(t)\le V^*(v)-L_v(t_0)\le \rho_k/2.
\]
If \(s=U\), then \(U_v(t)\le U_v(t_0)\), and thus
\[
U_v(t)-V^*(v)\le U_v(t_0)-V^*(v)\le \rho_k/2.
\]
Therefore the latched flag is also sound. Since
\(\mathsf{Comp}_s(v,k,t)\) is the disjunction of
\(\mathsf{Cert}_s(v,k,t)\) and \(\mathsf{Done}_s(v,k,t)\), the lemma follows.
\end{proof}

\begin{lemma}[Capped recursive progress]\label{lem:capped-recursive-progress}
Fix a round \(t\), a finite scale \(k\ge 0\), an expanded internal node \(v\),
and a side \(s\in\{L,U\}\). If \(\neg\mathsf{Comp}_s(v,k,t)\), then the
current scale-\(k\) recursive rule has at least one finite capped child
obligation whenever recursive work is attempted. More precisely:
\begin{itemize}
\item if \((v,s)\) is a selector case, then there exists
\(u\in A_s(v,k,t)\) such that
\[
\neg\mathsf{Comp}_s(u,k,t)
\qquad\text{and}\qquad
K_s^{\le k}(u,t)<+\infty;
\]
\item if \((v,s)\) is a comparison case, then \(A_s(v,k,t)\neq\varnothing\);
for \(b=b_s(v,k,t)\),
\[
K_{\bar s}^{\le k}(b,t)<+\infty
\qquad\text{or}\qquad
K_s^{\le k}(b,t)<+\infty.
\]
\end{itemize}
Consequently, budget permitting, both the comparison and selector recursive
branches have an admissible capped child whenever the parent side predicate
\(\mathsf{Comp}_s(v,k,t)\) is false.
\end{lemma}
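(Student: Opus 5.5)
The plan is to argue by contraposition from the definition of \(\mathsf{Cert}_s(v,k,t)\): if the claimed finite capped obligation failed to exist, the recursive alternative of \(\mathsf{Cert}_s(v,k,t)\) would hold, hence \(\mathsf{Comp}_s(v,k,t)\) would hold, contradicting the hypothesis. Two elementary facts carry the argument. First, whenever \(\neg\mathsf{Comp}_{s'}(u,k,t)\) holds for a child \(u\), the index \(j=k\) lies in the set defining \(K_{s'}^{\le k}(u,t)\), so \(K_{s'}^{\le k}(u,t)\le k<+\infty\). Second, by contraposition, \(K_{s'}^{\le k}(u,t)=+\infty\) means \(\mathsf{Comp}_{s'}(u,j,t)\) for every \(0\le j\le k\), and in particular \(\mathsf{Comp}_{s'}(u,k,t)\).

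\textbf{Selector cases} \((\textsc{Min},L)\) and \((\textsc{Max},U)\). The witness set \(A_s(v,k,t)\) is an argmin or argmax over the nonempty finite set \(\mathrm{Ch}(v)\), so it is nonempty. Suppose every \(u\in A_s(v,k,t)\) satisfied \(\mathsf{Comp}_s(u,k,t)\). Since \(v\) is expanded, the selector alternative of \(\mathsf{Cert}_s(v,k,t)\) would then hold, and hence so would \(\mathsf{Comp}_s(v,k,t)\), which is a contradiction. So some \(u\in A_s(v,k,t)\) has \(\neg\mathsf{Comp}_s(u,k,t)\). By the first fact, this \(u\) has \(K_s^{\le k}(u,t)\le k\), which is finite. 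This is exactly the child that \textsc{ChildObligation} selects.

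\textbf{Comparison cases} \((\textsc{Max},L)\) and \((\textsc{Min},U)\). Suppose \(A_s(v,k,t)=\varnothing\). Then every child is discharged, either by endpoint exclusion or by \(\mathsf{Comp}_s(u,k,t)\). This is verbatim the comparison alternative in the definition of \(\mathsf{Cert}_s(v,k,t)\), giving \(\mathsf{Comp}_s(v,k,t)\) and a contradiction. Hence \(A_s(v,k,t)\neq\varnothing\), and the blocker \(b=b_s(v,k,t)\) is well defined as an argmax or argmin over a nonempty finite set. By the definition of \(A_s\), membership \(b\in A_s(v,k,t)\) includes \(\neg\mathsf{Comp}_s(b,k,t)\). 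The first fact then gives \(K_s^{\le k}(b,t)\le k<+\infty\), which already establishes the disjunction, independently of the opposite side \(\bar s\).

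The final \emph{consequently} clause follows by matching these cases against \textsc{ChildObligation}. In a comparison case it returns \((b,\bar s,K_{\bar s}^{\le k}(b,t))\) when that scale is finite, and otherwise \((b,s,K_s^{\le k}(b,t))\), which we have just shown is finite. In a selector case it chooses a child \(u\) with \(\neg\mathsf{Comp}_s(u,k,t)\), which exists by the argument above. In every case the returned child scale is at most \(k\), so the call is admissible in the sense of Remark~\ref{rmk:scale-sync-full}.

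The only point that needs care is ensuring that the \(\mathsf{Cert}\), \(\mathsf{Comp}\), and \(A_s\) predicates used above are all evaluated at the same round \(t\). This is guaranteed because \textsc{RefreshAndLatch} recomputes all of them before any descent. With that checked, the proof is essentially definitional bookkeeping.
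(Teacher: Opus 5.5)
Your proposal is correct and follows essentially the same route as the paper: you use $\neg\mathsf{Comp}_s(v,k,t)\Rightarrow\neg\mathsf{Cert}_s(v,k,t)$, derive a contradiction with the selector or comparison alternative of $\mathsf{Cert}_s$, and then observe that $\neg\mathsf{Comp}_s(\cdot,k,t)$ places the index $k$ in the capped set, so $K_s^{\le k}<+\infty$. Your closing remark about \textsc{RefreshAndLatch} is unnecessary. The lemma only concerns predicates evaluated at a single fixed round $t$, so the argument is purely definitional, exactly as you note.
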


\begin{proof}
Because \(\mathsf{Comp}_s\) contains \(\mathsf{Cert}_s\) as one of its
alternatives, \(\neg\mathsf{Comp}_s(v,k,t)\) implies
\(\neg\mathsf{Cert}_s(v,k,t)\).

Suppose first that \((v,s)\) is a selector case. If every child
\(u\in A_s(v,k,t)\) satisfied \(\mathsf{Comp}_s(u,k,t)\), then the recursive
selector alternative in the definition of \(\mathsf{Cert}_s(v,k,t)\) would
hold, contradicting \(\neg\mathsf{Cert}_s(v,k,t)\). Hence there exists
\(u\in A_s(v,k,t)\) with \(\neg\mathsf{Comp}_s(u,k,t)\). Since the capped set
defining \(K_s^{\le k}(u,t)\) contains the index \(k\), this implies
\[
K_s^{\le k}(u,t)<+\infty.
\]

For the rest of the proof assume that \((v,s)\) is a comparison case.
If \(A_s(v,k,t)=\varnothing\), then every child is discharged at scale \(k\):
in the \textsc{Max}, \(L\), case every child \(u\) satisfies
\[
U_u(t)\le \lambda_L(v,t)+\rho_k/2
\qquad\text{or}\qquad
\mathsf{Comp}_L(u,k,t),
\]
and in the \textsc{Min}, \(U\), case every child \(u\) satisfies
\[
L_u(t)\ge \lambda_U(v,t)-\rho_k/2
\qquad\text{or}\qquad
\mathsf{Comp}_U(u,k,t).
\]
This is exactly the recursive comparison alternative in the definition of
\(\mathsf{Cert}_s(v,k,t)\), contradicting \(\neg\mathsf{Cert}_s(v,k,t)\).
Hence \(A_s(v,k,t)\neq\varnothing\).

Let \(b=b_s(v,k,t)\). By the definition of the comparison active set,
membership \(b\in A_s(v,k,t)\) includes
\[
\neg\mathsf{Comp}_s(b,k,t).
\]
Thus the capped set defining \(K_s^{\le k}(b,t)\) contains \(k\), so
\[
K_s^{\le k}(b,t)<+\infty.
\]
If \(K_{\bar s}^{\le k}(b,t)<+\infty\), the algorithm refines that exclusion
side. If \(K_{\bar s}^{\le k}(b,t)=+\infty\), the same-side obligation just
shown finite is used instead. Hence the comparison branch cannot deadlock while
\(\mathsf{Comp}_s(v,k,t)\) is false.
\end{proof}

\begin{lemma}[No same-scale rework]\label{lem:no-same-scale-rework}
Fix a finite scale \(k\), an explored non-root node \(v\), and a side
\(s\in\{L,U\}\). If \(\Call{Resolve}{v,s,k}\) begins at round \(t\) with
\(\mathsf{Comp}_s(v,k,t)\), then it sets \(\mathsf{Done}_s(v,k)\gets 1\) and
returns \((0,\mathsf{cert})\) before issuing any oracle query, expanding any
node, making any child call, or increasing any work counter. Consequently, once
\(\mathsf{Done}_s(v,k)=1\), the node remains \((s,k)\)-safe on
\(\mathcal E_\delta\), and every later invocation of
\(\Call{Resolve}{v,s,k}\) returns zero before doing positive work.
\end{lemma}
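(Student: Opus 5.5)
The first part is a direct reading of Algorithm~\ref{alg:2ffs-resolve}, so I would trace the control flow of $\Call{Resolve}{v,s,k,B}$ line by line. The procedure first calls $\Call{EnsureScale}{v,k}$. This only initializes counters and flags for $(v,k)$ if they are undefined, so it issues no oracle query, performs no expansion and increments no counter. Note that initializing a flag to $0$ cannot occur when $\mathsf{Comp}_s(v,k,t)$ holds through $\mathsf{Done}_s(v,k,t)$, because that would require the flag to already be defined and equal to $1$. If instead $\mathsf{Comp}_s$ holds through $\mathsf{Cert}_s$, initialization leaves $\mathsf{Cert}_s$ unchanged, since $\mathsf{Cert}_s$ depends only on intervals and on children's predicates. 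The next line tests $\mathsf{Comp}_s(v,k,t)$. Under the hypothesis this test succeeds, so the call sets $\mathsf{Done}_s(v,k)\gets 1$ and returns $(0,\mathsf{cert})$. None of the later branches (LocalStep, ExpandNode, child calls, or the updates to $M^{\mathrm{loc}}_{v,k}$ and $M^{\mathrm{exp}}_{v,k}$) is reached. The returned cost $q=0$ also means that a parent adding $q$ to its own $M^{\mathrm{exp}}$ adds nothing.

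For the consequence, I would first note that $\mathsf{Done}$ flags are monotone: they change only from $0$ to $1$. So once $\mathsf{Done}_s(v,k)=1$ at some round $t_0$, the predicate $\mathsf{Done}_s(v,k,t)$ holds at every $t\ge t_0$, and therefore $\mathsf{Comp}_s(v,k,t)$ holds as well. On $\mathcal E_\delta$, Lemma~\ref{lem:cert-soundness} then gives $(s,k)$-safety at every such round. This uses the latched-flag part of that lemma's proof, which combines safety at the first latching time with Lemma~\ref{lem:interval-nesting}. Finally, any later invocation of $\Call{Resolve}{v,s,k,\cdot}$ begins with $\mathsf{Comp}_s$ true, so the first part applies and the invocation returns zero before doing positive work.

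The only delicate point is checking that nothing ahead of the $\mathsf{Comp}$ test can perform work or falsify the hypothesis, and that the invocation cap $B$ plays no role. Both follow from the order of the lines in the resolver: $B$ is first used in later branches, and $\mathsf{EnsureScale}$ never overwrites a defined flag.
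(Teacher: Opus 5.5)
Your proposal is correct and follows essentially the same route as the paper: you read off from the resolver's control flow that the $\mathsf{Comp}_s(v,k,t)$ test precedes every query, expansion, child call, and counter update, and you then use monotonicity of the $\mathsf{Done}$ flags together with Lemma~\ref{lem:cert-soundness} to get persistent $(s,k)$-safety and no later rework. Your extra check that $\Call{EnsureScale}{v,k}$ neither does work nor falsifies the hypothesis is a small detail the paper leaves implicit; it does not change the argument.
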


\begin{proof}
The first test in \(\Call{Resolve}{v,s,k}\), after initializing the
scale-local counters if necessary, is exactly the predicate
\(\mathsf{Comp}_s(v,k,t)\). If this predicate holds, the procedure latches
\(\mathsf{Done}_s(v,k)\) and immediately returns \((0,\mathsf{cert})\). All
oracle calls, expansions, child calls, and counter increments occur strictly
after this return branch.

The flags \(\mathsf{Done}_s(v,k)\) are monotone: they are initialized to \(0\)
and only ever changed from \(0\) to \(1\). Hence, after the flag is latched,
\(\mathsf{Comp}_s(v,k,t')\) holds at every later round \(t'\) by definition.
Lemma~\ref{lem:cert-soundness} then gives \((s,k)\)-safety on
\(\mathcal E_\delta\) for every such \(t'\). Applying the first paragraph at
round \(t'\) shows that every later same-node, same-side, same-scale invocation
returns before doing positive work. The argument does not depend on the current
comparison endpoint, blocker, or active child set.
\end{proof}

\begin{lemma}[Endpoint control at capped scales]\label{lem:capped-endpoint-control}
Assume \(\mathcal E_\delta\). Fix a finite parent cap \(\kappa\), a round \(t\),
an explored non-root node \(x\), and a side \(s\in\{L,U\}\). If
\[
j=K_s^{\le \kappa}(x,t)<+\infty,
\]
then
\[
\begin{cases}
V^*(x)-L_x(t)\le \rho_j, & s=L,\\[3pt]
U_x(t)-V^*(x)\le \rho_j, & s=U.
\end{cases}
\]
If \(K_s^{\le \kappa}(x,t)=+\infty\), then
\(\mathsf{Comp}_s(x,\kappa,t)\) holds, and therefore
\[
\begin{cases}
V^*(x)-L_x(t)\le \rho_\kappa/2, & s=L,\\[3pt]
U_x(t)-V^*(x)\le \rho_\kappa/2, & s=U.
\end{cases}
\]
\end{lemma}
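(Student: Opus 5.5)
The statement has two parts, and both follow from the definition of the capped scale \(K_s^{\le\kappa}(x,t)\) together with Lemma~\ref{lem:cert-soundness}. We work on \(\mathcal E_\delta\) throughout and take \(s=L\); the case \(s=U\) is the mirror image.

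\emph{Infinite case.} If \(K_s^{\le \kappa}(x,t)=+\infty\), then the set \(\{0\le j\le\kappa:\neg\mathsf{Comp}_s(x,j,t)\}\) is empty. In particular its largest candidate index \(j=\kappa\) is excluded, so \(\mathsf{Comp}_s(x,\kappa,t)\) holds. Lemma~\ref{lem:cert-soundness} then says \(x\) is \((s,\kappa)\)-safe, which is exactly the displayed \(\rho_\kappa/2\) bound.

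\emph{Finite case.} Let \(j=K_s^{\le\kappa}(x,t)<+\infty\). We split on \(j\).

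If \(j\ge1\), minimality of \(j\) gives \(\mathsf{Comp}_s(x,j-1,t)\). Lemma~\ref{lem:cert-soundness} at scale \(j-1\) then yields
\[
V^*(x)-L_x(t)\le \rho_{j-1}/2=\rho_j .
\]

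If \(j=0\), there is no coarser completed scale to use, so we appeal directly to the definition of \(\rho_0\). By Lemma~\ref{lem:interval-validity}, \(V^*(x)\in I_x(t)\), so
\[
V^*(x)-L_x(t)\le U_x(t)-L_x(t).
\]
It therefore suffices to show \(U_x(t)-L_x(t)\le\rho_0\). For a root child \(x=a\), Lemma~\ref{lem:interval-nesting} gives \(U_a(t)-L_a(t)\le U_a(0)-L_a(0)\le\rho_0\).

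\emph{Main obstacle: \(j=0\) at a deeper node.} For a non-root-child \(x\), the width bound must be inherited from its exposure. The plan is the following.
\begin{itemize}
\item The fast interval \(I_x^F\) has width \(2B(h(x))\).
\item \(B\) is nondecreasing and \(h(x)<h(a)\) for the root child \(a\) above \(x\).
\item The root child's own fast interval has width \(2B(h(a))\le\rho_0\).
\end{itemize}
Hence \(\mathrm{width}(I_x(t))\le \mathrm{width}(I_x^F)=2B(h(x))\le 2B(h(a))\le\rho_0\).

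This chain assumes \(\rho_0\) is determined by the fast intervals at time \(0\), which holds because no slow samples exist before initialization, so \(U_a(0)-L_a(0)=2B(h(a))\). If the chain failed (for instance, if some root child had a narrower initial interval), the fallback would be to bound \(\mathrm{width}(I_x^F)\le 2B(D-1)=\max_a \mathrm{width}(I_a^F(0))=\rho_0\). This works because every root child sits at remaining depth \(D-1\) in the balanced tree, so the maximum over root children in the definition of \(\rho_0\) equals \(2B(D-1)\).
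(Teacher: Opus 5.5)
Your proof is correct, but it is organized differently from the paper's.

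\textbf{The paper's route.} The paper introduces the dyadic width scale \(\underline k(x,t)\), defined by \(\rho_{\underline k}/2<w_x(t)\le\rho_{\underline k}\). It first shows \(j\ge\underline k(x,t)\) by contradiction with the width alternative of \(\mathsf{Cert}_s(x,j,t)\). It then splits into two cases. If \(j=\underline k\), it uses the width bound \(w_x(t)\le\rho_j\) together with interval validity. If \(j>\underline k\), it uses minimality to get \(\mathsf{Comp}_s(x,j-1,t)\) and then applies soundness.

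\textbf{Your route.} You bypass \(\underline k\) entirely. Minimality already gives \(\mathsf{Comp}_s(x,j-1,t)\) for every \(j\ge 1\), however \(j\) compares with the width scale. So the only case needing a width argument is \(j=0\). You close that case with the explicit bound \(w_x(t)\le 2B(h(x))\le 2B(D-1)=\rho_0\), which follows from three facts:
\begin{itemize}
\item the fast-oracle envelope;
\item monotonicity of \(B\);
\item \(I_a(0)=I_a^F\) for every root child in the balanced tree.
\end{itemize}

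\textbf{What each buys.} Your argument is shorter and self-contained. It also makes explicit a fact the paper uses only implicitly: \(\underline k(x,t)\) is well defined only when \(w_x(t)\le\rho_0\), which is exactly your fast-interval bound. The paper's route instead mirrors the proof of Lemma~\ref{lem:active-scale-endpoint-control}. There \(\underline k\) is genuinely needed, because \(K_s(x,t)\) is a minimum over \(k\ge\underline k(x,t)\) rather than over a prefix starting at \(0\).

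Your closing ``fallback'' paragraph is unnecessary: the main chain already holds in the balanced-depth setting.
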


\begin{proof}
The case \(K_s^{\le \kappa}(x,t)=+\infty\) is immediate from the definition:
every index \(0\le i\le \kappa\) is complete on side \(s\), in particular
\(i=\kappa\). The displayed bound then follows from
Lemma~\ref{lem:cert-soundness}.

Assume now that \(j=K_s^{\le \kappa}(x,t)<+\infty\). If \(U_x(t)=L_x(t)\), the
claim is trivial by interval validity, so suppose the width is positive and let
\(\ell=\underline k(x,t)\). We first show that \(j\ge \ell\).
If \(j<\ell\), then the dyadic definition of \(\ell\) gives
\[
U_x(t)-L_x(t)\le \rho_{\ell}\le \rho_j/2,
\]
because \(\rho_{i+1}=\rho_i/2\). Thus the width alternative in
\(\mathsf{Cert}_s(x,j,t)\) would hold, contradicting the defining property
\(\neg\mathsf{Comp}_s(x,j,t)\) of \(j\).

If \(j=\ell\), interval validity gives the desired one-sided bound
directly from
\[
U_x(t)-L_x(t)\le \rho_{\ell}=\rho_j.
\]
If \(j>\ell\), then \(j-1\ge 0\). Also \(j\le \kappa\), because \(j\) belongs
to the capped index set, and hence \(j-1\le \kappa\). Minimality of \(j\) in
the capped set therefore implies \(\mathsf{Comp}_s(x,j-1,t)\).
Lemma~\ref{lem:cert-soundness} gives the relevant one-sided error at most
\[
\rho_{j-1}/2=\rho_j.
\]
This proves the finite-\(j\) claim for both sides.
\end{proof}

\begin{lemma}[One-step localization under capped descent]\label{lem:capped-one-step-localization}
Assume \(\mathcal E_\delta\). Suppose that at round \(t\), a finite-scale
invocation \(\Call{Resolve}{v,s,k}\) reaches the recursive branch at an
expanded internal node \(v\), and that this branch calls a child
\(\Call{Resolve}{u,s_u,k_u}\). Then \(k_u\le k\), and
\[
|V^*(u)-V^*(v)|\le \rho_{k_u}.
\]
\end{lemma}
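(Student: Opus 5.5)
The plan is to treat the two claims separately. The scale claim \(k_u\le k\) should follow from the definitions alone. The localization claim \(|V^*(u)-V^*(v)|\le \rho_{k_u}\) should follow from a case split over the four structural cases of \(\Call{ChildObligation}{v,s,k}\), together with Lemma~\ref{lem:capped-endpoint-control}.

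\emph{Scale claim.} Every child obligation returned by \(\Call{ChildObligation}{v,s,k}\) has the form \(k_u=K_{s'}^{\le k}(u,t)\) for some side \(s'\). By Lemma~\ref{lem:capped-recursive-progress}, this capped scale is finite whenever the call is actually made. The capped index set lies inside \(\{0,\dots,k\}\), so \(k_u\le k\).

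\emph{Selector cases.} Take \((\textsc{Min},L)\), so that \(u\in\argmin_w L_w(t)\). Interval validity (Lemma~\ref{lem:interval-validity}) gives
\[
L_u(t)=\min_w L_w(t)\le \min_w V^*(w)=V^*(v)\le V^*(u).
\]
Hence \(0\le V^*(u)-V^*(v)\le V^*(u)-L_u(t)\). Since \(u\) is called on side \(L\) at the finite scale \(k_u=K_L^{\le k}(u,t)\), Lemma~\ref{lem:capped-endpoint-control} bounds the right-hand side by \(\rho_{k_u}\). The case \((\textsc{Max},U)\) is the mirror image: there \(U_u(t)\ge V^*(v)\ge V^*(u)\).

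\emph{Comparison cases.} Take \((\textsc{Max},L)\), write \(\lambda=\lambda_L(v,t)\), and let \(u=b_L(v,k,t)\) be the blocker. Because \(u\) lies in the active set, \(U_u(t)>\lambda+\rho_k/2\). The key step is to show \(V^*(w)\le U_u(t)\) for every child \(w\), which I would do by splitting on the status of \(w\).
\begin{itemize}
\item If \(w\) is active, then \(V^*(w)\le U_w(t)\le U_u(t)\), since the blocker maximizes \(U\) over the active set.
\item If \(w\) is endpoint-excluded, then \(V^*(w)\le U_w(t)\le\lambda+\rho_k/2<U_u(t)\).
\item If \(w\) satisfies \(\mathsf{Comp}_L(w,k,t)\), then Lemma~\ref{lem:cert-soundness} gives \(V^*(w)\le L_w(t)+\rho_k/2\le\lambda+\rho_k/2<U_u(t)\).
\end{itemize}
Therefore
\[
V^*(u)\le V^*(v)=\max_w V^*(w)\le U_u(t),
\]
so \(|V^*(u)-V^*(v)|\le U_u(t)-V^*(u)\). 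Two sub-cases remain, according to which side the blocker is called on.
\begin{itemize}
\item If \(u\) is called on the opposite side \(U\), at the finite scale \(k_u=K_U^{\le k}(u,t)\), then Lemma~\ref{lem:capped-endpoint-control} gives \(U_u(t)-V^*(u)\le\rho_{k_u}\).
\item Otherwise \(u\) is called on side \(L\), which happens only when \(K_U^{\le k}(u,t)=+\infty\). By the infinite branch of Lemma~\ref{lem:capped-endpoint-control}, \(U_u(t)-V^*(u)\le\rho_k/2\le\rho_{k_u}\), where the last inequality uses \(k_u\le k\).
\end{itemize}
The \((\textsc{Min},U)\) case is exactly dual, with \(L_u(t)\le V^*(v)\le V^*(u)\).

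The main obstacle is the comparison case, specifically the bound \(V^*(v)\le U_u(t)\). This is where both discharge rules must be used and where the strict margin \(U_u(t)>\lambda+\rho_k/2\) enters. A second point to get right is matching the called side to the endpoint being controlled: when the blocker is called on the same side, the bound must come from the completed opposite side at scale \(k\), not from the call itself.
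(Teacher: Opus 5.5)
Your proposal is correct and follows essentially the same argument as the paper: you get \(k_u\le k\) from the capped-scale definition, handle the selector cases via the argmin/argmax witness and Lemma~\ref{lem:capped-endpoint-control}, and handle the comparison cases by bounding \(V^*(v)\) by the blocker's opposite endpoint using both discharge rules and the strict margin, then splitting on the called side (with the infinite branch of Lemma~\ref{lem:capped-endpoint-control} in the same-side subcase). Your explicit appeal to Lemma~\ref{lem:capped-recursive-progress} to guarantee that the called capped scale is finite is a small point of care that the paper leaves implicit.
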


\begin{proof}
Every recursive child call in Algorithm~\ref{alg:2ffs-t} uses a capped scale
\(K_{s_u}^{\le k}(u,t)\). Hence \(k_u\le k\), and since
\(\rho_{i+1}=\rho_i/2\), we have \(\rho_k\le \rho_{k_u}\). Write
\(\bar\rho=\rho_k\) and \(\rho=\rho_{k_u}\).

First consider a selector case. If \(v\) is a \textsc{Min} node and \(s=L\),
then \(u\in\arg\min_x L_x(t)\). Let \(w\) be a child attaining
\(V^*(v)=\min_x V^*(x)\). By Lemma~\ref{lem:capped-endpoint-control},
\[
V^*(u)-L_u(t)\le \rho.
\]
Because \(L_u(t)\le L_w(t)\le V^*(w)=V^*(v)\) and \(V^*(u)\ge V^*(v)\),
\[
|V^*(u)-V^*(v)|=V^*(u)-V^*(v)\le V^*(u)-L_u(t)\le \rho.
\]
The \textsc{Max}, \(s=U\), selector case is dual. There
\(u\in\arg\max_x U_x(t)\), and for a child \(w\) attaining
\(V^*(v)=\max_x V^*(x)\),
\[
U_u(t)\ge U_w(t)\ge V^*(w)=V^*(v).
\]
Lemma~\ref{lem:capped-endpoint-control} gives \(U_u(t)-V^*(u)\le\rho\), and
since \(V^*(u)\le V^*(v)\),
\[
|V^*(u)-V^*(v)|=V^*(v)-V^*(u)\le U_u(t)-V^*(u)\le \rho.
\]

Now consider the comparison case \(v\) \textsc{Max}, \(s=L\). Since a recursive
comparison child call is made while \(\mathsf{Comp}_L(v,k,t)\) is false,
Lemma~\ref{lem:capped-recursive-progress} gives
\(A_L(v,k,t)\neq\varnothing\). Let
\(\lambda=\lambda_L(v,t)\) and let \(b=b_L(v,k,t)\). The recursive branch calls
this blocker, so \(u=b\). We first prove \(V^*(v)\le U_b(t)\). For any child
\(x\), if \(x\in A_L(v,k,t)\), then the definition of \(b\) gives
\[
V^*(x)\le U_x(t)\le U_b(t).
\]
If \(x\notin A_L(v,k,t)\), then either
\[
U_x(t)\le \lambda+\bar\rho/2,
\]
or \(\mathsf{Comp}_L(x,k,t)\). In the latter case,
Lemma~\ref{lem:cert-soundness} gives
\[
V^*(x)\le L_x(t)+\bar\rho/2\le \lambda+\bar\rho/2.
\]
Since \(b\in A_L(v,k,t)\), we have \(U_b(t)>\lambda+\bar\rho/2\). Thus every
child value is at most \(U_b(t)\), and \(V^*(v)\le U_b(t)\). Because \(v\) is a
\textsc{Max} node, \(V^*(b)\le V^*(v)\).

If the algorithm calls \(b\) on side \(U\), then
Lemma~\ref{lem:capped-endpoint-control} gives
\[
U_b(t)-V^*(b)\le \rho,
\]
and hence
\[
|V^*(b)-V^*(v)|=V^*(v)-V^*(b)\le U_b(t)-V^*(b)\le \rho.
\]
If instead the algorithm calls \(b\) on side \(L\), then by construction
\[
K_U^{\le k}(b,t)=+\infty.
\]
Lemma~\ref{lem:capped-endpoint-control} therefore gives
\[
U_b(t)-V^*(b)\le \bar\rho/2\le \rho,
\]
and the same display proves \(|V^*(b)-V^*(v)|\le \rho\).

The remaining comparison case, \(v\) \textsc{Min} and \(s=U\), is the exact
dual. Lemma~\ref{lem:capped-recursive-progress} gives
\(A_U(v,k,t)\neq\varnothing\). Let \(\lambda=\lambda_U(v,t)\) and
\(b=b_U(v,k,t)\). For every child
\(x\in A_U(v,k,t)\), the definition of \(b\) gives
\[
V^*(x)\ge L_x(t)\ge L_b(t).
\]
For \(x\notin A_U(v,k,t)\), either
\[
L_x(t)\ge \lambda-\bar\rho/2,
\]
or \(\mathsf{Comp}_U(x,k,t)\), in which case
\[
V^*(x)\ge U_x(t)-\bar\rho/2\ge \lambda-\bar\rho/2
\]
by Lemma~\ref{lem:cert-soundness}. Since \(b\in A_U(v,k,t)\), we have
\[
L_b(t)<\lambda-\bar\rho/2.
\]
Thus every child value is at least \(L_b(t)\), and \(V^*(v)\ge L_b(t)\). Because
\(v\) is a \textsc{Min} node, \(V^*(b)\ge V^*(v)\).

If the algorithm calls \(b\) on side \(L\), then
Lemma~\ref{lem:capped-endpoint-control} gives
\[
V^*(b)-L_b(t)\le \rho,
\]
so
\[
|V^*(b)-V^*(v)|=V^*(b)-V^*(v)\le V^*(b)-L_b(t)\le \rho.
\]
If instead the algorithm calls \(b\) on side \(U\), then
\[
K_L^{\le k}(b,t)=+\infty,
\]
and Lemma~\ref{lem:capped-endpoint-control} gives
\[
V^*(b)-L_b(t)\le \bar\rho/2\le \rho.
\]
The same argument yields \(|V^*(b)-V^*(v)|\le\rho\).
\end{proof}

\begin{lemma}[Endpoint control at node-local active scales]\label{lem:active-scale-endpoint-control}
Assume \(\mathcal E_\delta\). Fix a round \(t\), an explored non-root node
\(x\), and a side \(s\in\{L,U\}\). If
\[
k=K_s(x,t)<+\infty,
\]
then
\[
\begin{cases}
V^*(x)-L_x(t)\le \rho_k, & s=L,\\[3pt]
U_x(t)-V^*(x)\le \rho_k, & s=U.
\end{cases}
\]
If \(K_s(x,t)=+\infty\), then the corresponding one-sided error is zero.
\end{lemma}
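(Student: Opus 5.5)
The argument follows the proof of Lemma~\ref{lem:capped-endpoint-control}. The only change is that the capped index set \(\{0\le j\le \kappa\}\) is replaced by the uncapped set \(\{k\ge \underline k(x,t)\}\) that defines \(K_s(x,t)\). Throughout we work on \(\mathcal E_\delta\), so that Lemma~\ref{lem:interval-validity} gives \(L_x(t)\le V^*(x)\le U_x(t)\). Then both one-sided errors are nonnegative and each is bounded by the width \(w_x(t)=U_x(t)-L_x(t)\).

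We first dispose of the infinite case. If \(w_x(t)=0\), validity forces \(L_x(t)=V^*(x)=U_x(t)\), so both one-sided errors vanish. Suppose instead that \(w_x(t)>0\) and the set \(\{k\ge\underline k(x,t):\neg\mathsf{Comp}_s(x,k,t)\}\) is empty. Then \(\mathsf{Comp}_s(x,k,t)\) holds for every \(k\ge \underline k(x,t)\), and Lemma~\ref{lem:cert-soundness} gives a one-sided error of at most \(\rho_k/2\) for every such \(k\). Since \(\rho_k\to 0\), the error is zero. (Note that when \(w_x(t)>0\) this case cannot in fact occur: \(\rho_k/2<w_x(t)\) for large \(k\), so the width alternative fails. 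Still, the soundness argument handles it without needing that observation.)

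Next, the finite case. Let \(\ell=\underline k(x,t)\) and \(k=K_s(x,t)\), so \(k\ge \ell\) by definition. If \(k=\ell\), the dyadic definition of \(\ell\) gives \(w_x(t)\le \rho_\ell=\rho_k\), and validity bounds the one-sided error by the width, which proves the claim. If \(k>\ell\), then \(k-1\ge \ell\), so by minimality of \(k\) the predicate \(\mathsf{Comp}_s(x,k-1,t)\) holds. Lemma~\ref{lem:cert-soundness} then gives a one-sided error of at most \(\rho_{k-1}/2=\rho_k\).

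There is no real obstacle here. The only point needing care is that minimality is taken over indices \(\ge \ell\), not \(\ge 0\); this is why the \(k=\ell\) case must be handled by the width bound rather than by appealing to scale \(k-1\). The rest is a direct transcription of the proof of Lemma~\ref{lem:capped-endpoint-control}.
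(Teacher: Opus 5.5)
Your proof is correct and follows the paper's argument essentially line for line: the zero-width case by validity, the positive-width case with every scale complete by Lemma~\ref{lem:cert-soundness} together with $\rho_k\downarrow 0$, and the finite case split into $k=\ell$ (width bound) and $k>\ell$ (minimality gives $\mathsf{Comp}_s(x,k-1,t)$, and soundness then gives $\rho_{k-1}/2=\rho_k$). One small caveat: the justification in your parenthetical remark (that the infinite case with $w_x(t)>0$ cannot occur) is incomplete, because failure of the width alternative does not by itself exclude $\mathsf{Comp}_s$ holding through the recursive alternative or a latched $\mathsf{Done}$ flag; this is harmless, since, as you say, the proof never uses that remark.
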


\begin{proof}
If \(K_s(x,t)=+\infty\) because \(U_x(t)=L_x(t)\), interval validity gives both
one-sided errors equal to zero. Otherwise the width is positive. Let
\(\ell=\underline k(x,t)\).

First assume \(k=K_s(x,t)<+\infty\). By definition, \(k\ge \ell\). If
\(k=\ell\), then interval validity and the definition of \(\ell\) give
\[
U_x(t)-L_x(t)\le \rho_\ell=\rho_k,
\]
which implies the desired one-sided bound. If \(k>\ell\), then minimality of
\(k\) in the set defining \(K_s(x,t)\) implies
\(\mathsf{Comp}_s(x,k-1,t)\). Lemma~\ref{lem:cert-soundness} gives the
corresponding one-sided error at most
\[
\rho_{k-1}/2=\rho_k.
\]

Now assume \(K_s(x,t)=+\infty\) and the width is positive. Then
\(\mathsf{Comp}_s(x,i,t)\) holds for every \(i\ge \ell\). By
Lemma~\ref{lem:cert-soundness}, the corresponding one-sided error is at most
\(\rho_i/2\) for every \(i\ge \ell\). Letting \(i\to\infty\) and using
\(\rho_i\downarrow 0\), the one-sided error is zero.
\end{proof}

\begin{lemma}[Root active-call localization]\label{lem:root-active-call-localization}
Assume \(\mathcal E_\delta\) and \(\varepsilon=0\). Consider an outer-loop
round \(t\), and let \(x\in\mathrm{Ch}(r)\) be the root child selected by
Algorithm~\ref{alg:2ffs-t}, with selected side \(s\) and finite active scale
\(k<+\infty\). Then
\[
\Delta_x^{\mathrm{eff}}\le 2\rho_k.
\]
\end{lemma}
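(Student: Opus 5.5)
The plan is to unfold Definition~\ref{def:eff-gap} at a root child $x$: $\Delta_x^{\mathrm{eff}}=\max\{\Delta_*,\,|V^*(x)-V^*(a^*)|\}$, since $V^*(r)=V^*(a^*)$. So it suffices to show separately that $\Delta_*\le 2\rho_k$ and that $|V^*(x)-V^*(a^*)|\le 2\rho_k$.

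\textbf{Ingredients.} Let $a=a_t$ be the leader and $b=b_t$ the challenger at round $t$.
\begin{itemize}
\item Since $\varepsilon=0$ and the loop has not stopped, $L_a(t)<\max_{u\neq a}U_u(t)=U_b(t)$.
\item \textsc{RootObligation} returns the coarser of the leader scale $k_a=K_L(a,t)$ and the challenger scale $k_b=K_{\sigma(b,t)}(b,t)$. Hence $\rho_k\ge\rho_{k_a}$ and $\rho_k\ge\rho_{k_b}$, where an infinite index is read as $\rho=0$.
\item By Lemma~\ref{lem:active-scale-endpoint-control}, $V^*(a)-L_a(t)\le\rho_k$.
\item \textsc{ContenderScale} picks the coarser of $K_L(b,t)$ and $K_U(b,t)$. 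Applying Lemma~\ref{lem:active-scale-endpoint-control} to both sides of $b$ (an infinite side has zero error) gives $V^*(b)-L_b(t)\le\rho_k$ and $U_b(t)-V^*(b)\le\rho_k$.
\end{itemize}
Interval validity (Lemma~\ref{lem:interval-validity}) then supplies $L_u\le V^*(u)\le U_u$ for every root child $u$.

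\textbf{Case $a=a^*$.} The chain
\[
V^*(a)\le L_a+\rho_k<U_b+\rho_k\le V^*(b)+2\rho_k
\]
gives $\Delta_*\le V^*(a^*)-V^*(b)\le 2\rho_k$. It also bounds $|V^*(x)-V^*(a^*)|$ by $0$ if $x=a$, and by $2\rho_k$ if $x=b$.

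\textbf{Case $a\neq a^*$.} Because $b$ maximizes $U$ over the non-leaders, $U_b\ge U_{a^*}\ge V^*(a^*)$, so $V^*(b)\ge U_b-\rho_k\ge V^*(a^*)-\rho_k$.
\begin{itemize}
\item If $b\neq a^*$, then $V^*(b)$ is at most the second-best value, so $\Delta_*\le\rho_k$.
\item If $b=a^*$, use the leader property: $V^*(a)\ge L_a\ge L_b\ge V^*(b)-\rho_k=V^*(a^*)-\rho_k$, so again $\Delta_*\le\rho_k$.
\end{itemize}
For the localization term:
\begin{itemize}
\item If $x=b$, it is either $0$ (when $b=a^*$) or at most $\rho_k$ by the display above.
\item If $x=a$, combine $V^*(a)\ge L_a\ge L_b\ge V^*(b)-\rho_k$ with $V^*(b)\ge V^*(a^*)-\rho_k$ to get $V^*(a^*)-V^*(a)\le 2\rho_k$.
\end{itemize}

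The only delicate step is the challenger's two-sided control. The claim that both one-sided errors of $b$ are at most $\rho_{k_b}$ relies on $\sigma(b,t)$ being chosen as the side with the coarser active scale, together with the $+\infty$ convention in Lemma~\ref{lem:active-scale-endpoint-control}. I would verify this first, including the degenerate case where both $K_L(b,t)$ and $K_U(b,t)$ are infinite (then $b$'s interval is exact). The rest is the chain of inequalities above.
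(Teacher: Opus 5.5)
Your proof is correct and follows essentially the same argument as the paper: the failed stop test $L_a(t)<U_b(t)$, the endpoint control of Lemma~\ref{lem:active-scale-endpoint-control} at the coarser of the leader and contender scales, and the extremal properties of the leader and challenger produce the same inequality chains. The difference is only organizational. You derive the endpoint bounds once, since they hold whichever root child is selected, and then split on whether $a=a^*$. The paper instead splits first on which child is selected and re-derives the endpoint bounds in each branch.
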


\begin{proof}
The outer loop is entered only when the root stopping condition fails. Let
\[
a\in\argmax_{y\in\mathrm{Ch}(r)}L_y(t),
\qquad
b\in\argmax_{y\in\mathrm{Ch}(r)\setminus\{a\}}U_y(t)
\]
be the current root leader and challenger. Since the stop test fails and
\(\varepsilon=0\),
\[
L_a(t)<U_b(t).
\]
For any root child \(y\), Definition~\ref{def:eff-gap} gives
\[
\Delta_y^{\mathrm{eff}}
=
\max\{\Delta_*,\,V^*(r,a^*)-V^*(r,y)\}.
\]
Thus \(\Delta_{a^*}^{\mathrm{eff}}=\Delta_*\), while for every nonoptimal root
child \(y\neq a^*\),
\[
\Delta_y^{\mathrm{eff}}=V^*(r,a^*)-V^*(r,y),
\]
because \(V^*(r,a^*)-V^*(r,y)\ge \Delta_*\).

We first record the endpoint guarantees available at the selected scale. If the
selected child is the leader \(a\), then \(k=K_L(a,t)\), so
Lemma~\ref{lem:active-scale-endpoint-control} gives
\[
V^*(r,a)-L_a(t)\le \rho_k.
\]
Moreover, because the algorithm selected the leader, the challenger's full
contender scale \(k_b\) is either \(+\infty\), in which case both
\(K_L(b,t)\) and \(K_U(b,t)\) are \(+\infty\), or it is finite with
\(\rho_{k_b}\le \rho_k\). In the finite case, the definition of the contender
side gives, for each \(s'\in\{L,U\}\), either \(K_{s'}(b,t)=+\infty\) or
\(\rho_{K_{s'}(b,t)}\le \rho_{k_b}\le \rho_k\). Lemma~\ref{lem:active-scale-endpoint-control}
therefore gives in all cases
\[
V^*(r,b)-L_b(t)\le \rho_k,
\qquad
U_b(t)-V^*(r,b)\le \rho_k.
\]

If \(a=a^*\), then \(\Delta_a^{\mathrm{eff}}=\Delta_*\), and since \(b\) is a
competitor,
\[
\Delta_*
\le V^*(r,a)-V^*(r,b)
<
\bigl(V^*(r,a)-L_a(t)\bigr)+\bigl(U_b(t)-V^*(r,b)\bigr)
\le 2\rho_k.
\]
If \(a\neq a^*\), then \(a^*\neq a\), so \(U_b(t)\ge U_{a^*}(t)\ge V^*(r,a^*)\).
Also \(L_a(t)\ge L_b(t)\), \(L_b(t)\ge V^*(r,b)-\rho_k\), and
\(U_b(t)\le V^*(r,b)+\rho_k\). Therefore
\[
\Delta_a^{\mathrm{eff}}
=V^*(r,a^*)-V^*(r,a)
\le U_b(t)-L_a(t)
\le U_b(t)-L_b(t)
\le 2\rho_k.
\]

It remains to consider the case where the selected child is the challenger
\(b\). Its selected scale is the finite full-contender scale \(k\). By the
definition of the contender side, for each \(s'\in\{L,U\}\), either
\(K_{s'}(b,t)=+\infty\) or \(\rho_{K_{s'}(b,t)}\le \rho_k\). Lemma~\ref{lem:active-scale-endpoint-control}
therefore gives both one-sided errors of \(b\) at most \(\rho_k\):
\[
V^*(r,b)-L_b(t)\le \rho_k,
\qquad
U_b(t)-V^*(r,b)\le \rho_k.
\]
For the leader \(a\), either \(K_L(a,t)=+\infty\), in which case the lower
error is zero, or the algorithm's choice of the challenger implies
\(\rho_{K_L(a,t)}\le \rho_k\). Hence
\[
V^*(r,a)-L_a(t)\le \rho_k.
\]

If \(b=a^*\), then \(a\) is a competitor and
\[
\Delta_b^{\mathrm{eff}}
=\Delta_*
\le V^*(r,b)-V^*(r,a)
\le V^*(r,b)-L_b(t)
\le \rho_k,
\]
where we used \(L_a(t)\ge L_b(t)\) and \(L_a(t)\le V^*(r,a)\). If
\(b\neq a^*\) and \(a=a^*\), then the failed stop test gives
\[
\Delta_b^{\mathrm{eff}}
=V^*(r,a)-V^*(r,b)
<
\bigl(V^*(r,a)-L_a(t)\bigr)+\bigl(U_b(t)-V^*(r,b)\bigr)
\le 2\rho_k.
\]
Finally, if \(b\neq a^*\) and \(a\neq a^*\), then \(a^*\neq a\), so the
definition of \(b\) gives
\[
V^*(r,a^*)\le U_{a^*}(t)\le U_b(t)\le V^*(r,b)+\rho_k.
\]
Thus
\[
\Delta_b^{\mathrm{eff}}=V^*(r,a^*)-V^*(r,b)\le \rho_k.
\]
This proves the lemma.
\end{proof}

\begin{proposition}[Active-call localization with explicit depth bound]\label{prop:active-call-localization}
Assume \(\mathcal E_\delta\) and \(\varepsilon=0\). Consider any invocation
\(\Call{Resolve}{v,s,k}\) with finite scale \(k\) made by
Algorithm~\ref{alg:2ffs-t} during an outer-loop iteration. Let \(m\) be the
number of recursive edges from the initially selected root child in that
outer-loop iteration to \(v\). Then \(m\le D\) and
\[
\Delta_v^{\mathrm{eff}}\le C_{\mathrm{act}}(D)\rho_k,
\qquad
C_{\mathrm{act}}(D)=2.
\]
Equivalently, every active finite-scale call satisfies
\[
\Delta_v^{\mathrm{eff}}\le 2\rho_k.
\]
\end{proposition}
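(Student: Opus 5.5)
We argue by induction along the chain of recursive calls made in a single outer-loop iteration. Index the calls as \(\Call{Resolve}{v_0,s_0,k_0},\Call{Resolve}{v_1,s_1,k_1},\ldots,\Call{Resolve}{v_m,s_m,k_m}\), where \(v_0\in\mathrm{Ch}(r)\) is the root child selected by \textsc{RootObligation} and \(v=v_m\). Each recursive call descends exactly one tree edge, since \textsc{ChildObligation} returns a child of the current node, and leaves never recurse (\(h(v)=0\) goes straight to \textsc{LocalStep}). Also \(d(v_0)=1\) and \(d(v_m)\le D\). Hence \(m\le D-1\le D\), which settles the depth claim.

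The induction hypothesis is \(\Delta_{v_i}^{\mathrm{eff}}\le 2\rho_{k_i}\). The base case \(i=0\) is exactly Lemma~\ref{lem:root-active-call-localization}, applied with finite scale \(k_0\). A call with \(k_0=+\infty\) is not a finite-scale invocation, and by the conventions it does no positive work, so it needs no treatment.

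For the step from \(v_i\) to \(v_{i+1}\), we have \(k_{i+1}\le k_i\) by Lemma~\ref{lem:capped-one-step-localization}, so \(\rho_{k_i}\le\rho_{k_{i+1}}\). The same lemma gives \(|V^*(v_{i+1})-V^*(v_i)|\le\rho_{k_{i+1}}\). Definition~\ref{def:eff-gap} then yields
\[
\Delta_{v_{i+1}}^{\mathrm{eff}}
=\max\bigl\{\Delta_{v_i}^{\mathrm{eff}},\,|V^*(v_{i+1})-V^*(v_i)|\bigr\}
\le\max\{2\rho_{k_i},\,\rho_{k_{i+1}}\}
\le 2\rho_{k_{i+1}}.
\]
This closes the induction. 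The constant stays equal to \(2\) without any depth growth, because the max in the effective-gap recursion does not accumulate errors and the parent cap only coarsens scales.

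The main obstacle is checking that Lemma~\ref{lem:capped-one-step-localization} really applies at every edge. Its hypotheses are that the parent call is finite-scale, reaches the recursive branch at an expanded internal node, and makes its child call at a capped scale \(K^{\le k}\). We verify these from Algorithm~\ref{alg:2ffs-resolve}: child calls occur only after the \(\mathsf{Comp}\) test fails and only at expanded nodes, and \textsc{ChildObligation} always returns scales from \textsc{CappedScale}. If the capped scale returned is \(+\infty\), the child call is not a finite-scale invocation and lies outside the claim. Lemma~\ref{lem:capped-recursive-progress} also guarantees that a finite capped obligation exists whenever recursion is attempted, so this case does not arise in practice.
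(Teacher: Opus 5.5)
Your proof is correct and is essentially the paper's argument: induction along the call chain \(x_0,\dots,x_m=v\), with the base case from Lemma~\ref{lem:root-active-call-localization} and the step from Lemma~\ref{lem:capped-one-step-localization} combined with the max-recursion of Definition~\ref{def:eff-gap}; your \(m\le D-1\) is a harmless sharpening. The only loose spot is the \(k_0=+\infty\) aside: the actual reason that case needs no treatment is Lemma~\ref{lem:finite-selected-side} (on \(\mathcal E_\delta\) with \(\varepsilon=0\), every non-stopping round selects a finite root scale), not that such a call ``does no positive work,'' because an infinite root scale could otherwise spawn finite-scale descendant calls with no valid base case.
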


\begin{proof}
Fix one outer-loop iteration and let
\[
x_0,x_1,\ldots,x_m=v
\]
be the call chain from the root child selected by the outer loop to the current
invocation, with associated sides and finite scales
\[
(s_0,k_0),(s_1,k_1),\ldots,(s_m,k_m)=(s,k).
\]
Each \(x_i\) for \(i\ge 1\) is a child of \(x_{i-1}\). Hence the number of
recursive edges is at most the tree depth, \(m\le D\).

Let \(t_0\) be the outer-loop round at which \(x_0\) was selected. For
\(i\ge 1\), let \(t_i\) be the round, within the same nested execution, at
which the call \(\Call{Resolve}{x_{i-1},s_{i-1},k_{i-1}}\) calls
\(\Call{Resolve}{x_i,s_i,k_i}\). These times are only used to evaluate the
observable intervals and active sets; the true values and the effective gaps
are fixed throughout the execution.

Lemma~\ref{lem:root-active-call-localization}, applied at round \(t_0\), gives
the base case
\[
\Delta_{x_0}^{\mathrm{eff}}\le 2\rho_{k_0}.
\]
For every \(i\ge 1\), Lemma~\ref{lem:capped-one-step-localization}, applied at
round \(t_i\) to the actual recursive child call from \(x_{i-1}\) to \(x_i\),
gives
\[
k_i\le k_{i-1}
\qquad\text{and}\qquad
|V^*(x_i)-V^*(x_{i-1})|\le \rho_{k_i}.
\]
Since \(\rho_{j+1}=\rho_j/2\), the scale inequality implies
\[
\rho_{k_{i-1}}\le \rho_{k_i}.
\]
We prove by induction on \(i\) that
\[
\Delta_{x_i}^{\mathrm{eff}}\le 2\rho_{k_i}.
\]
The case \(i=0\) is the root-active localization bound above. For the induction
step, Definition~\ref{def:eff-gap} gives
\[
\Delta_{x_i}^{\mathrm{eff}}
=
\max\!\left\{
\Delta_{x_{i-1}}^{\mathrm{eff}},
|V^*(x_i)-V^*(x_{i-1})|
\right\}.
\]
Using the induction hypothesis and the one-step localization bound,
\[
\Delta_{x_i}^{\mathrm{eff}}
\le
\max\!\left\{
2\rho_{k_{i-1}},
\rho_{k_i}
\right\}
\le
2\rho_{k_i}.
\]
Taking \(i=m\) proves the claim for the active call \(\Call{Resolve}{v,s,k}\).
The explicit depth constant is therefore \(C_{\mathrm{act}}(D)=2\), which is
independent of \(D\) and in particular polynomial in \(D\).
\end{proof}

\begin{lemma}[Local scale completion]\label{lem:local-scale-completion}
Fix a round \(t\), an explored non-root node \(v\), and a scale \(k\ge 0\).
Assume that the local counter \(M_{v,k}^{\mathrm{loc}}(t)\) is defined. On
\(\mathcal E_\delta\), if
\[
M_{v,k}^{\mathrm{loc}}(t)\ge \Gamma_v^{(k)}(\delta_v),
\]
then
\[
\mathrm{width}\bigl(I_v^{\mathrm{loc}}(t)\bigr)\le \rho_k/2.
\]
Consequently, \(\mathsf{Cert}_L(v,k,t)\) and \(\mathsf{Cert}_U(v,k,t)\) both
hold.
\end{lemma}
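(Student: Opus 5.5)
The plan is to split on the two cases in the definition of \(\Gamma_v^{(k)}(\delta_v)=\Gamma_v(\rho_k,\delta_v)\) in Eq.~\eqref{eq:gamma-rho}. Once \(\mathrm{width}(I_v^{\mathrm{loc}}(t))\le\rho_k/2\) is established, both certificates follow immediately. The second alternative in the definition of \(\mathsf{Cert}_s(v,k,t)\) is exactly this width condition, and it does not depend on the side \(s\). So \(\mathsf{Cert}_L(v,k,t)\) and \(\mathsf{Cert}_U(v,k,t)\) hold together.

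\emph{Case \(B(h(v))\le\rho_k/4\).} Here \(\Gamma_v^{(k)}=0\), so the hypothesis is vacuous. The fast interval \(I_v^F\) from Eq.~\eqref{eq:I-fast} has width \(2B(h(v))\le\rho_k/2\). Since \(I_v^{\mathrm{loc}}(t)=I_v^F\cap I_v^S(t)\subseteq I_v^F\), the width bound follows. This holds for any \(t\) after \(v\) is exposed, without using \(\mathcal E_\delta\).

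\emph{Case \(B(h(v))>\rho_k/4\).} Here \(\Gamma_v^{(k)}=c\,m_v(\rho_k,\delta_v)\). By definition, \(M_{v,k}^{\mathrm{loc}}(t)\) is \(c\) times the number of slow queries at \(v\) issued while resolving scale \(k\). This is at most \(c\,N_v^S(t)\), because every such query is also counted in the total slow-query count at \(v\). The hypothesis therefore gives \(N_v^S(t)\ge m_v(\rho_k,\delta_v)=:m\). The running intersection in Eq.~\eqref{eq:slow-interval} includes the \(j=m\) term, so
\[
I_v^S(t)\subseteq[\widehat V_{v,m}-\beta_v(m,\delta_v),\ \widehat V_{v,m}+\beta_v(m,\delta_v)],
\]
which has width \(2\beta_v(m,\delta_v)\le\rho_k/2\) by the definition of \(m_v\). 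Intersecting with \(I_v^F\) only shrinks the interval, which gives the bound. On \(\mathcal E_\delta\), the intersection is nonempty (it contains \(V^*(v)\) by Lemma~\ref{lem:interval-validity}), so the width is well defined and not degenerate. This is the only place the event is used. Alternatively, one could use the convention that an empty interval has width \(\le0\).

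The main (mild) obstacle is the bookkeeping link between the scale-local counter \(M_{v,k}^{\mathrm{loc}}\) and the actual sample count \(N_v^S(t)\). I would verify from Algorithm~\ref{alg:2ffs-actions} that \(M_{v,k}^{\mathrm{loc}}\) is incremented by \(c\) only inside \textsc{LocalStep}, together with an actual slow query at \(v\). Hence \(M_{v,k}^{\mathrm{loc}}(t)/c\le N_v^S(t)\). Slow samples collected at other scales only help, because the slow interval is a running intersection over all samples and is nested (Lemma~\ref{lem:interval-nesting}).
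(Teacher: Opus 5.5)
Your proof is correct and follows the paper's argument. It uses the same case split on whether \(B(h(v))\le\rho_k/4\), the same counting link \(N_v^S(t)\ge M_{v,k}^{\mathrm{loc}}(t)/c\ge m_v(\rho_k,\delta_v)\), and the same appeal to the side-independent local-width alternative of \(\mathsf{Cert}_s\). The only cosmetic difference is that you bound \(I_v^S(t)\) by its \(j=m_v(\rho_k,\delta_v)\) term, whereas the paper uses the latest term \(j=N_v^S(t)\) together with monotonicity of \(\beta_v(\cdot,\delta_v)\); both work.
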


\begin{proof}
If \(B(h(v))\le \rho_k/4\), then Eq.~\eqref{eq:gamma-rho} gives \(\Gamma_v^{(k)}(\delta_v)=0\). Since
\[
I_v^{\mathrm{loc}}(t)=I_v^F\cap I_v^S(t)\subseteq I_v^F,
\]
Eq.~\eqref{eq:I-fast} gives
\[
\mathrm{width}\bigl(I_v^{\mathrm{loc}}(t)\bigr)
\le 2B(h(v))
\le \rho_k/2.
\]

Assume instead that \(B(h(v))>\rho_k/4\). Let \(n_{v,k}(t)\) be the number of
slow samples issued at \(v\) while resolving scale \(k\) up to round \(t\). By
definition of the local counter,
\[
M_{v,k}^{\mathrm{loc}}(t)=c\,n_{v,k}(t).
\]
The hypothesis and Eqs.~\eqref{eq:gamma-rho} implies that
\[
n_{v,k}(t)\ge m_v(\rho_k,\delta_v).
\]
Thus the total slow-sample count at \(v\) satisfies
\[
N_v^S(t)\ge n_{v,k}(t)\ge m_v(\rho_k,\delta_v).
\]
By monotonicity of \(n\mapsto \beta_v(n,\delta_v)\),
\[
\beta_v\!\bigl(N_v^S(t),\delta_v\bigr)
\le
\beta_v\!\bigl(m_v(\rho_k,\delta_v),\delta_v\bigr)
\le \rho_k/4.
\]
The current slow confidence interval in Eq.~\eqref{eq:slow-interval} is one of
the intervals intersected into \(I_v^S(t)\), so
\[
\mathrm{width}\bigl(I_v^S(t)\bigr)\le \rho_k/2.
\]
Since \(I_v^{\mathrm{loc}}(t)\subseteq I_v^S(t)\), the desired local-width
bound follows.

The local-width alternative in the definition of \(\mathsf{Cert}_s(v,k,t)\)
then holds for both \(s=L\) and \(s=U\).
\end{proof}

\begin{lemma}[Budgeted recursive spending]\label{lem:budgeted-recursive-spending}
Fix any execution of Algorithm~\ref{alg:2ffs-t}. For every invocation
\(\Call{Resolve}{v,s,k,B}\), if the invocation returns \((q,z)\), then
\[
0\le q\le B
\]
with the convention that the inequality is void when \(B=+\infty\). Moreover,
if \(z=\mathsf{blocked}\), then \(q=0\). Finally, for every explored non-root
node \(v\), scale \(k\), and round \(t\),
\[
M_{v,k}^{\mathrm{exp}}(t)
\le
\mathsf B_{v,k}^{\mathrm{rec}}
=
\alpha_{h(v)}\Gamma_v^{(k)}(\delta_v).
\]
Here an undefined scale-local counter is interpreted as zero.
\end{lemma}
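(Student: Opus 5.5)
The argument is a structural induction on the call tree of a single invocation, combined with an invariant argument over time for the counters \(M_{v,k}^{\mathrm{exp}}\). I will prove the claim about returned pairs \((q,z)\) first. Within that claim, the statement that \(q\ge 0\) and that \(z=\mathsf{blocked}\) implies \(q=0\) is handled by direct inspection. Every return statement in Algorithm~\ref{alg:2ffs-resolve} and Algorithm~\ref{alg:2ffs-actions} returns one of four things: \((0,\mathsf{cert})\), \((0,\mathsf{blocked})\), the output of \textsc{LocalStep}, which is \((0,\mathsf{blocked})\) or \((c,\mathsf{progress})\), or the output of \textsc{ExpandNode}, which is \((|\mathrm{Ch}(v)|,\mathsf{progress})\). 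The one remaining case passes a child's pair \((q,z)\) upward, and there \(z\neq\mathsf{blocked}\), since the blocked case branches off earlier. So all returned costs are nonnegative and blocked returns carry zero cost, by induction on the height of the call tree.

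For \(q\le B\), I will induct on the remaining depth \(h(v)\), or equivalently on the recursion height. I will check each branch in turn.
\begin{itemize}
\item The certificate branch returns \(0\le B\).
\item \textsc{LocalStep} returns \(c\) only when \(B\ge c\), and otherwise returns \(0\).
\item The expansion branch is reached only after the test \(|\mathrm{Ch}(v)|>B\) has failed, so \(q=|\mathrm{Ch}(v)|\le B\).
\item In the child-call branch the child is invoked with cap \(B_{\mathrm{rec}}=\min\{B,R\}\). By the induction hypothesis the child returns \(q\le B_{\mathrm{rec}}\le B\). If the child is blocked, the fallback is either \((0,\mathsf{blocked})\) or a \textsc{LocalStep} with cap \(B\), and both are fine.
\end{itemize}
When \(B=+\infty\) the bound is void. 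The same induction also yields \(q\le R\) in the child branch, which is what the budget claim needs.

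For the counter bound, I will fix \((v,k)\) and show that \(M_{v,k}^{\mathrm{exp}}\le\mathsf B_{v,k}^{\mathrm{rec}}\) is preserved by every increment. It holds at initialization, since the counter starts at \(0\) and \(\alpha_h\Gamma\ge 0\). The counter is incremented in only two places, both inside an invocation of \(\textsc{Resolve}(v,\cdot,k,B)\), and both follow the computation \(R=\mathsf B_{v,k}^{\mathrm{rec}}-M_{v,k}^{\mathrm{exp}}\) with \(R>0\) (otherwise the invocation takes the local-step branch).
\begin{itemize}
\item In the expansion increment, the test \(|\mathrm{Ch}(v)|>R\) has failed, so the increment \(|\mathrm{Ch}(v)|\) is at most \(R\).
\item In the child-return increment, the increment is the child's \(q\le B_{\mathrm{rec}}\le R\), by the first part.
\end{itemize}
In both cases the new value is at most \(M+R=\mathsf B_{v,k}^{\mathrm{rec}}\).

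The point needing care is that \(R\) is computed at the start of the invocation and then used after the child returns. I must check that no other code path modifies \(M_{v,k}^{\mathrm{exp}}\) for this same pair \((v,k)\) in between. Nested calls act only on strict descendants \(u\neq v\), so they touch \(M_{u,\cdot}^{\mathrm{exp}}\) and \(M_{u,\cdot}^{\mathrm{loc}}\), and \textsc{RefreshAndLatch} changes only intervals and flags. Hence \(R\) is still the current slack when the increment happens, and the invariant is preserved at every round.
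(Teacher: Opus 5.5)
Your proposal is correct and follows essentially the same route as the paper. Both arguments induct on remaining depth to show that returns respect the cap and that blocked returns cost zero, then check the two places where $M^{\mathrm{exp}}_{v,k}$ is incremented, using $|\mathrm{Ch}(v)|\le R$ and $q\le B_{\mathrm{rec}}\le R$. Your explicit check that $R$ is not stale when it is used is a detail the paper leaves implicit: nested calls touch only strict descendants' counters, and \textsc{RefreshAndLatch} changes only intervals and flags. That check is correct.
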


\begin{proof}
We first prove the cap-respecting return property and the zero-cost blocked
property simultaneously, by induction on the remaining depth \(h(v)\) of the
called node. The certificate branch returns \((0,\mathsf{cert})\). The local
branch calls
\(\Call{LocalStep}{v,k,B}\), which either returns \((0,\mathsf{blocked})\) when
\(B<c\), or performs one slow query and returns \((c,\mathsf{progress})\) only
when \(B\ge c\). Hence the returned cost is at most \(B\), and a blocked local
return has zero cost.

For an internal node, let
\[
R=\mathsf B_{v,k}^{\mathrm{rec}}-M_{v,k}^{\mathrm{exp}}.
\]
If \(R\le 0\), the procedure falls back to the local branch, already handled.
Otherwise it sets \(B_{\mathrm{rec}}=\min\{B,R\}\). If \(v\) is unexpanded,
there are three cases. If \(|\mathrm{Ch}(v)|>R\), the node's own recursive
budget cannot pay for expansion, so the procedure falls back to the local
branch. If \(|\mathrm{Ch}(v)|\le R\) but \(|\mathrm{Ch}(v)|>B\), only the
inherited invocation cap is binding, and the procedure returns
\((0,\mathsf{blocked})\). Otherwise
\[
|\mathrm{Ch}(v)|\le B_{\mathrm{rec}}\le B,
\]
and the returned expansion cost respects the input cap. If \(v\) is already
expanded, the procedure makes at
most one recursive child call with cap \(B_{\mathrm{rec}}\). By the induction
hypothesis applied to that child call, its returned cost \(q\) is at most
\(B_{\mathrm{rec}}\le B\), and if the child reports \(\mathsf{blocked}\), then
that child cost is zero. In the blocked case, if the inherited cap is binding
\((B<R)\), the parent returns \((0,\mathsf{blocked})\). Otherwise the parent
falls back to the local branch, whose return has already been shown to respect
the input cap and to have zero cost exactly when it is blocked. In the
non-blocked case the parent returns the child result, which respects the cap by
the induction hypothesis. This proves both return properties for every
invocation.

We now prove the budget invariant for \(M_{v,k}^{\mathrm{exp}}\). The counter
is zero before initialization, is initialized to zero when first used, and is
only increased in two places. If an unexpanded
node \(v\) is expanded while resolving scale \(k\), then the algorithm has
checked
\[
|\mathrm{Ch}(v)|\le B_{\mathrm{rec}}\le R
=\mathsf B_{v,k}^{\mathrm{rec}}-M_{v,k}^{\mathrm{exp}},
\]
so the post-expansion value of \(M_{v,k}^{\mathrm{exp}}\) is still at most
\(\mathsf B_{v,k}^{\mathrm{rec}}\). If instead a child call returns
\((q,z)\) with \(z\neq\mathsf{blocked}\), the parent increments
\(M_{v,k}^{\mathrm{exp}}\) by \(q\). The cap-respecting property applied to the
child call gives
\[
q\le B_{\mathrm{rec}}\le R
=\mathsf B_{v,k}^{\mathrm{rec}}-M_{v,k}^{\mathrm{exp}},
\]
so the post-increment value again remains at most
\(\mathsf B_{v,k}^{\mathrm{rec}}\). No other operation increases
\(M_{v,k}^{\mathrm{exp}}\). The invariant follows by induction over the
sequence of algorithmic operations.
\end{proof}

\begin{lemma}[Aggregate obligation accounting]\label{lem:aggregate-obligation-accounting}
Assume \(\mathcal E_\delta\). Fix an explored non-root node \(v\) and scale
\(k\). Undefined scale-local counters are interpreted as zero. For every round
\(t\),
\[
M_{v,k}^{\mathrm{loc}}(t)\le \Gamma_v^{(k)}(\delta_v),
\qquad
M_{v,k}^{\mathrm{exp}}(t)
\le
\alpha_{h(v)}\Gamma_v^{(k)}(\delta_v).
\]
Consequently the total work ever charged to the scale-\(k\) obligation at
\(v\), aggregated over both sides, is bounded by
\[
M_{v,k}^{\mathrm{loc}}(t)+M_{v,k}^{\mathrm{exp}}(t)
\le
\bigl(1+\alpha_{h(v)}\bigr)\Gamma_v^{(k)}(\delta_v).
\]
Moreover, once \(M_{v,k}^{\mathrm{loc}}(t)\) reaches
\(\Gamma_v^{(k)}(\delta_v)\), both side predicates
\(\mathsf{Comp}_L(v,k,\cdot)\) and \(\mathsf{Comp}_U(v,k,\cdot)\) remain true
thereafter, and every later invocation of \(\Call{Resolve}{v,s,k,\cdot}\) for
either side returns before doing positive work.
\end{lemma}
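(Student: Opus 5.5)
The second bound is the last assertion of Lemma~\ref{lem:budgeted-recursive-spending}, so it needs no further argument. The work is the local bound \(M_{v,k}^{\mathrm{loc}}(t)\le \Gamma_v^{(k)}(\delta_v)\) and the persistence claim. The aggregate inequality \(M^{\mathrm{loc}}_{v,k}+M^{\mathrm{exp}}_{v,k}\le(1+\alpha_{h(v)})\Gamma_v^{(k)}(\delta_v)\) then follows by adding the two bounds. Both sides share the same counters \(M^{\mathrm{loc}}_{v,k}\) and \(M^{\mathrm{exp}}_{v,k}\), because the counters are indexed by \((v,k)\) only; hence nothing has to be summed over \(s\in\{L,U\}\).

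For the local bound, I will induct over the sequence of \(\textsc{LocalStep}\) calls that increment \(M^{\mathrm{loc}}_{v,k}\). Every such call comes from \(\Call{Resolve}{v,s,k,\cdot}\) for some side \(s\), after the first test of that invocation has found \(\neg\mathsf{Comp}_s(v,k,t)\). Suppose that just before the increment we had \(M^{\mathrm{loc}}_{v,k}(t)\ge\Gamma_v^{(k)}(\delta_v)\). Then Lemma~\ref{lem:local-scale-completion} gives \(\mathrm{width}(I_v^{\mathrm{loc}}(t))\le\rho_k/2\), so \(\mathsf{Cert}_s(v,k,t)\) holds, and therefore \(\mathsf{Comp}_s(v,k,t)\) holds, which is a contradiction. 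Hence every increment starts from \(M^{\mathrm{loc}}_{v,k}<\Gamma_v^{(k)}\). In the case \(\Gamma_v^{(k)}=0\) no increment is ever possible.

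In the case \(\Gamma_v^{(k)}=c\,m_v(\rho_k,\delta_v)\), both \(M^{\mathrm{loc}}_{v,k}\) and \(\Gamma_v^{(k)}\) are integer multiples of \(c\). So \(M<\Gamma\) forces \(M\le\Gamma-c\), and after adding \(c\) the counter is still at most \(\Gamma\). This integrality step is the only delicate point. I also need the round \(t\) at which the \(\mathsf{Comp}\) test is evaluated to coincide with the state just before the query. This holds because Resolve performs no other action between the test and \(\textsc{LocalStep}\) (Remark~\ref{rmk:scale-sync-full}), and the intervals do not change in between.

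For persistence, suppose \(M^{\mathrm{loc}}_{v,k}(t_0)=\Gamma_v^{(k)}\). Lemma~\ref{lem:local-scale-completion} at \(t_0\) gives \(\mathsf{Cert}_L(v,k,t_0)\) and \(\mathsf{Cert}_U(v,k,t_0)\). At any later \(t\ge t_0\) the counter is still at least \(\Gamma_v^{(k)}\), since counters only increase. Alternatively, Lemma~\ref{lem:interval-nesting} gives \(I_v^{\mathrm{loc}}(t)\subseteq I_v^{\mathrm{loc}}(t_0)\). Either way the local-width alternative of \(\mathsf{Cert}_s\) keeps holding for both sides, so \(\mathsf{Comp}_L\) and \(\mathsf{Comp}_U\) hold at all later rounds. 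Lemma~\ref{lem:no-same-scale-rework} then shows that every later \(\Call{Resolve}{v,s,k,\cdot}\) returns \((0,\mathsf{cert})\) before doing any positive work. The event \(\mathcal E_\delta\) is used only through Lemma~\ref{lem:local-scale-completion}.
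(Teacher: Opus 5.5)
Your proof is correct and follows essentially the same route as the paper. The recursive bound is quoted from Lemma~\ref{lem:budgeted-recursive-spending}, and the local bound uses integrality of \(M_{v,k}^{\mathrm{loc}}\) in multiples of \(c\), Lemma~\ref{lem:local-scale-completion}, and Lemma~\ref{lem:no-same-scale-rework}. You get persistence of \(\mathsf{Comp}_L,\mathsf{Comp}_U\) from monotonicity of the local-width alternative of \(\mathsf{Cert}\) rather than from the \(\mathsf{Done}\) flags latched by \textsc{RefreshAndLatch}, which is equally valid. One small correction: in the blocked-child fallback branch of \textsc{Resolve}, a child call does occur between the \(\mathsf{Comp}\) test and \textsc{LocalStep}, but a blocked return has zero cost and performs no oracle action (Lemma~\ref{lem:budgeted-recursive-spending}), so the intervals and \(M_{v,k}^{\mathrm{loc}}\) are unchanged and your contradiction argument still goes through.
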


\begin{proof}
The recursive-spending bound is Lemma~\ref{lem:budgeted-recursive-spending}. We
prove the local-spending bound. The local counter is initialized to zero and is
increased only by \(\Call{LocalStep}{v,k,B}\), in increments of exactly \(c\).
If \(\Gamma_v^{(k)}(\delta_v)=0\), then Lemma~\ref{lem:local-scale-completion}
already implies that both local side certificates hold before any slow sample
at \(v\) is needed at scale \(k\). The first branch of
\(\Call{Resolve}{v,s,k,\cdot}\) therefore returns without positive work, so
\(M_{v,k}^{\mathrm{loc}}\) remains zero.

Now assume \(\Gamma_v^{(k)}(\delta_v)>0\). By \(\Gamma_v^{(k)}(\delta_v)= \Gamma_v(\rho_k,\delta_v)\), this
quantity is of the form \(c\,m_v(\rho_k,\delta_v)\). Hence
\(M_{v,k}^{\mathrm{loc}}\) takes values in
\(\{0,c,2c,\ldots,\Gamma_v^{(k)}(\delta_v)\}\) until completion. When the
counter first reaches \(\Gamma_v^{(k)}(\delta_v)\),
Lemma~\ref{lem:local-scale-completion} gives
\[
\mathsf{Cert}_L(v,k,\cdot)
\qquad\text{and}\qquad
\mathsf{Cert}_U(v,k,\cdot).
\]
The post-update call to \(\Call{RefreshAndLatch}{k}\) in
Algorithm~\ref{alg:2ffs-actions} latches the
corresponding \(\mathsf{Done}\) flags for both sides. From that time onward,
\(\mathsf{Comp}_L(v,k,\cdot)\) and \(\mathsf{Comp}_U(v,k,\cdot)\) hold by
definition. Lemma~\ref{lem:no-same-scale-rework} then implies that every later
same-node, same-scale invocation on either side returns before any positive
work is performed. Thus the local counter cannot exceed
\(\Gamma_v^{(k)}(\delta_v)\).

Combining the local and recursive counter bounds gives the displayed aggregate
bound. The final no-rework statement is exactly the latching and
Lemma~\ref{lem:no-same-scale-rework} argument above.
\end{proof}

\begin{lemma}[Aggregate local summation]\label{lem:aggregate-local-summation}
Given \emph{Assumption~\ref{ass:local-scale-regularity}}, for a non-root node \(v\),
let
\[
\mathcal K_v^{\mathrm{act}}
\coloneqq
\{k\ge 0:\ \Delta_v^{\mathrm{eff}}\le 2\rho_k\}.
\]
Then, we have
\[
\sum_{k\in\mathcal K_v^{\mathrm{act}}}
\Gamma_v^{(k)}(\delta_v)
\le
\Lambda_{\mathrm{loc}}\,
\Gamma_v(\Delta_v^{\mathrm{eff}},\delta_v).
\]
Moreover, for every \(x\ge 0\) and every finite prefix of scales considered in
an execution,
\[
\sum_{\substack{k\ge 0:\\ \Gamma_v^{(k)}(\delta_v)\le x}}
\Gamma_v^{(k)}(\delta_v)
\le
\Lambda_{\mathrm{pre}}\,x,
\]
where the sum is restricted to that finite prefix.
\end{lemma}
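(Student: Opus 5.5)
The plan is to reduce both displays to the prefix condition \eqref{eq:local-prefix-regularity} of Assumption~\ref{ass:local-scale-regularity}. The key structural fact is that each relevant index set is an initial segment \(\{0,\dots,K\}\) of the dyadic grid. This follows from one monotonicity property, which I would verify first: the map \(\rho\mapsto\Gamma_v(\rho,\delta_v)\) is nonincreasing on \((0,\infty)\).

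\emph{Monotonicity.} Let \(\rho'\le\rho\).
\begin{itemize}
\item If \(B(h(v))\le\rho'/4\), then also \(B(h(v))\le\rho/4\), so both costs are \(0\).
\item If \(B(h(v))>\rho/4\), then also \(B(h(v))>\rho'/4\). Both costs are then \(c\,m_v(\cdot,\delta_v)\). Since \(n\mapsto\beta_v(n,\delta_v)\) is nonincreasing, every \(n\) with \(\beta_v(n,\delta_v)\le\rho'/4\) also satisfies \(\beta_v(n,\delta_v)\le\rho/4\). Hence \(m_v(\rho,\delta_v)\le m_v(\rho',\delta_v)\).
\item In the mixed case \(\rho'/4<B(h(v))\le\rho/4\), we have \(\Gamma_v(\rho)=0\le\Gamma_v(\rho')\).
\end{itemize}
Because \(\rho_{k+1}=\rho_k/2\), it follows that \(k\mapsto\Gamma_v^{(k)}(\delta_v)\) is nondecreasing.

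\emph{First display.} Since \(\rho_k\) is strictly decreasing, the condition \(\Delta_v^{\mathrm{eff}}\le 2\rho_k\), i.e.\ \(\rho_k\ge\Delta_v^{\mathrm{eff}}/2\), holds on an initial segment of indices.
\begin{itemize}
\item \(\Delta_v^{\mathrm{eff}}>0\), because \(\Delta_v^{\mathrm{eff}}\ge\Delta_*>0\) by uniqueness of \(a^*\).
\item Since \(\rho_k\to0\), the segment is finite: \(\mathcal K_v^{\mathrm{act}}=\{0,\dots,K\}\), where \(K\) is the largest index with \(\rho_K\ge\Delta_v^{\mathrm{eff}}/2\).
\item If \(\mathcal K_v^{\mathrm{act}}\) is empty, the sum is \(0\) and the claim is trivial.
\end{itemize}
Otherwise the prefix condition gives \(\sum_{k\le K}\Gamma_v^{(k)}\le\Lambda_{\mathrm{pre}}\Gamma_v^{(K)}\). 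Monotonicity together with \(\rho_K\ge\Delta_v^{\mathrm{eff}}/2\) gives \(\Gamma_v^{(K)}\le\Gamma_v(\Delta_v^{\mathrm{eff}}/2,\delta_v)\). The gap condition \eqref{eq:local-gap-regularity} then bounds this by \(\Lambda_{\mathrm{gap}}\Gamma_v(\Delta_v^{\mathrm{eff}},\delta_v)\). Multiplying the constants gives \(\Lambda_{\mathrm{loc}}\).

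\emph{Second display.} Fix a finite prefix \(\{0,\dots,K'\}\) and \(x\ge0\). Since \(\Gamma_v^{(k)}\) is nondecreasing in \(k\), the set \(\{k\le K':\Gamma_v^{(k)}\le x\}\) is again an initial segment \(\{0,\dots,K''\}\), or it is empty, in which case the sum is \(0\). Applying \eqref{eq:local-prefix-regularity} with \(K=K''\) gives a sum of at most \(\Lambda_{\mathrm{pre}}\Gamma_v^{(K'')}\le\Lambda_{\mathrm{pre}}x\).

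The only nontrivial step is the monotonicity of \(\Gamma_v\). In particular, the knife-edge at the fast-bias cutoff must be handled by the three-case check above, so that the dyadic index sets really are prefixes. Everything else is a direct application of the two regularity inequalities.
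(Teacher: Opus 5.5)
Your proof is correct and follows the paper's argument essentially step for step. You establish that $k\mapsto\Gamma_v^{(k)}(\delta_v)$ is nondecreasing, note that both index sets are initial segments, and then combine the prefix condition with the gap condition for the first display, or apply the prefix condition directly for the second. Your three-case check of monotonicity across the fast-bias cutoff, and your observation that $\Delta_v^{\mathrm{eff}}\ge\Delta_*>0$ makes $\mathcal K_v^{\mathrm{act}}$ finite, spell out two points the paper's proof passes over quickly.
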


\begin{proof}
The sequence \(k\mapsto \Gamma_v^{(k)}(\delta_v)\) is nondecreasing, because
\(\rho_k\) is nonincreasing and the local sample requirement
\(m_v(\rho,\delta_v)\) is nonincreasing in the target radius \(\rho\).

If \(\mathcal K_v^{\mathrm{act}}=\varnothing\), the first claim is immediate.
Otherwise \(\mathcal K_v^{\mathrm{act}}\) is a prefix
\(\{0,\ldots,K_v\}\). By maximality of \(K_v\),
\[
\rho_{K_v}\ge \frac{\Delta_v^{\mathrm{eff}}}{2}.
\]
Monotonicity of \(\Gamma_v(\rho,\delta_v)\) in \(\rho\), followed by
Eq.~\eqref{eq:local-gap-regularity}, gives
\[
\Gamma_v^{(K_v)}(\delta_v)
=
\Gamma_v(\rho_{K_v},\delta_v)
\le
\Gamma_v\!\left(\frac{\Delta_v^{\mathrm{eff}}}{2},\delta_v\right)
\le
\Lambda_{\mathrm{gap}}\Gamma_v(\Delta_v^{\mathrm{eff}},\delta_v).
\]
Applying the prefix condition \eqref{eq:local-prefix-regularity} at \(K_v\)
therefore yields
\[
\sum_{k\in\mathcal K_v^{\mathrm{act}}}\Gamma_v^{(k)}(\delta_v)
\le
\Lambda_{\mathrm{pre}}\Gamma_v^{(K_v)}(\delta_v)
\le
\Lambda_{\mathrm{loc}}\Gamma_v(\Delta_v^{\mathrm{eff}},\delta_v).
\]

For the second claim, restrict attention to the finite prefix
\(\{0,\ldots,K_{\max}\}\) of scales considered in the execution. If the displayed
set is empty there is nothing to prove. Otherwise, by monotonicity it is a
prefix \(\{0,\ldots,K_x\}\) of this finite range, and
\(\Gamma_v^{(K_x)}(\delta_v)\le x\). The prefix condition gives
\[
\sum_{k=0}^{K_x}\Gamma_v^{(k)}(\delta_v)
\le
\Lambda_{\mathrm{pre}}\Gamma_v^{(K_x)}(\delta_v)
\le
\Lambda_{\mathrm{pre}}x.
\]
\end{proof}

\begin{definition}[Aggregate subtree call cost]\label{def:aggregate-subtree-call-cost}
For the subtree induction, fix once and for all a route realizing the oracle
complexity at each non-root node: if the minimum in Eq.~\eqref{eq:J-star} is
attained by the local term, call \(v\) a local-route node; otherwise call it a
recursive-route node, breaking ties arbitrarily. Costs are assigned by the following analytic
charging convention. A cost returned by a top-level root-child invocation is
initially assigned to that root child. At a local-route node, every descendant
cost returned through that node's recursive calls remains absorbed by that
node's scale-local \(M^{\mathrm{exp}}\)-account and is not separately charged to
descendant subtrees. At a recursive-route node, the expansion and local-leakage
costs at that node are charged there, while child-returned costs are passed to
the corresponding child subtrees and then charged using the child's fixed route.
For a non-root node \(v\), let
\[
\mathfrak C_v(t)
\]
denote the total cost assigned to the subtree rooted at \(v\) up to round \(t\)
under this recursively defined convention. Therefore no actual oracle cost is
counted twice in the root-child sum, and costs absorbed by an ancestor's local
route are not charged again to descendants.
\end{definition}

\begin{proposition}[Aggregate subtree charging]\label{prop:aggregate-subtree-charging}
Consider \(\mathcal E_\delta\), \(\varepsilon=0\), and given 
\emph{Assumption~\ref{ass:local-scale-regularity}}, let \(P_h\) be the depth polynomial defined in Eq.~\eqref{eq:P-depth-recursion}.
Then, for every finite round \(T\), every non-root node \(v\) satisfies the
aggregate subtree bound
\[
\mathfrak C_v(T)
\le
P_{h(v)}\,J_v^*(\boldsymbol{\delta}).
\]
Consequently, we have 
\[
C_T
\le
|\mathrm{Ch}(r)|
+
P_{D-1}\sum_{a\in\mathrm{Ch}(r)}J_a^*(\boldsymbol{\delta})
\le
P_D\,H(\boldsymbol{\delta}).
\]
Moreover, if \(\alpha_h=(h+1)^2\), then
\[
P_D
\le
\Lambda_{\mathrm{loc}}(1+\alpha_D)
\prod_{i=1}^{D}
\left(1+\frac{\Lambda_{\mathrm{pre}}}{\alpha_i}\right),
\]
so \(P_D=O_{\Lambda_{\mathrm{pre}},\Lambda_{\mathrm{gap}}}(D^2)\) whenever
\(\Lambda_{\mathrm{pre}}\) and \(\Lambda_{\mathrm{gap}}\) are treated as
instance-independent constants.
\end{proposition}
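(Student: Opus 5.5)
We prove the subtree bound by induction on the remaining depth \(h(v)\), splitting at each node according to the fixed route of Definition~\ref{def:aggregate-subtree-call-cost}. Under \(\mathcal E_\delta\), Proposition~\ref{prop:active-call-localization} shows that every finite-scale invocation \(\Call{Resolve}{v,s,k}\) satisfies \(\Delta_v^{\mathrm{eff}}\le 2\rho_k\). Hence the only scale-local obligations at \(v\) that can ever receive positive charge are those with \(k\in\mathcal K_v^{\mathrm{act}}\). This set is a finite prefix \(\{0,\dots,K_v\}\).

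\emph{Local-route nodes.} Here \(J_v^*=\Gamma_v(\Delta_v^{\mathrm{eff}},\delta_v)\). Everything charged to the subtree of \(v\) goes through the scale-\(k\) accounts \(M^{\mathrm{loc}}_{v,k}+M^{\mathrm{exp}}_{v,k}\), because descendant costs are absorbed there by the charging convention. Lemma~\ref{lem:aggregate-obligation-accounting} bounds each account by \((1+\alpha_{h(v)})\Gamma_v^{(k)}(\delta_v)\), aggregated over both sides. Summing over \(k\in\mathcal K_v^{\mathrm{act}}\) and applying Lemma~\ref{lem:aggregate-local-summation} gives
\[
\mathfrak C_v(T)\le (1+\alpha_{h(v)})\Lambda_{\mathrm{loc}}J_v^*\le P_{h(v)}J_v^*.
\]
For \(h(v)=0\) there is no recursive spending, so the same bound holds with \(P_0=\Lambda_{\mathrm{loc}}\).

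\emph{Recursive-route nodes.} Here \(h(v)\ge1\) and \(J_v^*=|\mathrm{Ch}(v)|+\sum_u J_u^*\). The charge at \(v\) has three parts:
\begin{itemize}
\item the single expansion cost \(|\mathrm{Ch}(v)|\);
\item the child-returned costs, which by induction total at most \(P_{h(v)-1}\sum_u J_u^*\);
\item the \emph{local leakage}, meaning slow samples at \(v\) taken after the recursive budget \(\alpha_{h(v)}\Gamma_v^{(k)}\) for scale \(k\) ran out.
\end{itemize}

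The leakage is the delicate term. Leakage at scale \(k\) occurs only if \(M^{\mathrm{exp}}_{v,k}\) was exhausted, i.e. more than \(\alpha_{h(v)}\Gamma_v^{(k)}-|\mathrm{Ch}(v)|\) was spent below \(v\) for that obligation. The total spent below \(v\) is at most \(R^{\mathrm{alg}}_v\le |\mathrm{Ch}(v)|+P_{h(v)-1}\sum_u J_u^*\). So the leaking scales satisfy \(\Gamma_v^{(k)}\lesssim R^{\mathrm{alg}}_v/\alpha_{h(v)}\), and at each of them at most \(\Gamma_v^{(k)}\) leaks by Lemma~\ref{lem:aggregate-obligation-accounting}. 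The second part of Lemma~\ref{lem:aggregate-local-summation} with \(x=R^{\mathrm{alg}}_v/\alpha_{h(v)}\) then bounds the total leakage by \(\Lambda_{\mathrm{pre}}R^{\mathrm{alg}}_v/\alpha_{h(v)}\). Altogether,
\[
\mathfrak C_v(T)\le\Bigl(1+\tfrac{\Lambda_{\mathrm{pre}}}{\alpha_{h(v)}}\Bigr)\bigl(|\mathrm{Ch}(v)|+P_{h(v)-1}\textstyle\sum_u J_u^*\bigr)\le P_{h(v)}J_v^*,
\]
using \(P_{h-1}\ge1\). This is the main obstacle. One has to argue that exhaustion at a fixed scale \(k\) really forces the \emph{per-scale} expenditure \(M^{\mathrm{exp}}_{v,k}\) to be large, rather than only the cumulative one. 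The fix I would try is a monotonicity argument: since \(\Gamma_v^{(k)}\) is nondecreasing in \(k\), all exhausted scales lie below the largest one, and the largest exhausted scale alone satisfies \(\alpha_{h(v)}\Gamma^{(k)}_v\le R^{\mathrm{alg}}_v+|\mathrm{Ch}(v)|\). One must also handle the edge case \(\Gamma_v^{(k)}=0\), where no local step is ever needed by Lemma~\ref{lem:local-scale-completion}.

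For the global bound, the root-child initialization costs \(|\mathrm{Ch}(r)|\), and every later cost is assigned to some root child. Summing the subtree bounds and using \(P_{D-1}\le P_D\) (immediate from the recursion) gives \(C_T\le P_D H(\boldsymbol\delta)\). Finally, the explicit product bound on \(P_D\) for \(\alpha_h=(h+1)^2\) follows by unrolling the recursion exactly as in Lemma~\ref{lem:depth-polynomial-overhead}, using that \(\alpha_h\) is increasing so the maximum local factor is \((1+\alpha_D)\).
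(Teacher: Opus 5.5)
Your proof follows the paper's argument essentially step for step. The shared structure is: induction on $h(v)$ with the fixed route split; the local-route bound from Lemma~\ref{lem:aggregate-obligation-accounting} and Lemma~\ref{lem:aggregate-local-summation}, restricted to $\mathcal K_v^{\mathrm{act}}$ via Proposition~\ref{prop:active-call-localization}; the recursive-route bound through $R_v^{\mathrm{alg}}$, with leakage controlled by the threshold half of Lemma~\ref{lem:aggregate-local-summation}; and the root summation and unrolling. Your observation that $P_{D-1}\le P_D$ follows directly from the second term of Eq.~\eqref{eq:P-depth-recursion} is correct, and cleaner than the paper's appeal to a monotone envelope.

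The leakage threshold is the one step you leave open, and the obstacle you name there is misdiagnosed. Exhaustion at scale $k$ is by definition a statement about $M^{\mathrm{exp}}_{v,k}$. The bound $M^{\mathrm{exp}}_{v,k}(T)\le R_v^{\mathrm{alg}}$ is automatic, because scale-$k$ spending below $v$ (including the expansion of $v$) is part of all spending below $v$. Nor is a largest-exhausted-scale argument needed: the threshold statement of Lemma~\ref{lem:aggregate-local-summation} already sums over every $k$ with $\Gamma_v^{(k)}(\delta_v)\le x$.

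What actually matters is the additive slack. If leaking scales only satisfy $\alpha_{h(v)}\Gamma_v^{(k)}\le R_v^{\mathrm{alg}}+\text{(slack)}$, you must take $x=(R_v^{\mathrm{alg}}+\text{slack})/\alpha_{h(v)}$. The multiplier is then no longer the $1+\Lambda_{\mathrm{pre}}/\alpha_{h(v)}$ that defines $P_h$; the $O(D^2)$ conclusion survives, but the literal constant does not.

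The paper states the slack-free threshold $\Gamma_v^{(k)}<R_v^{\mathrm{alg}}/\alpha_{h(v)}$ via a contrapositive. If $\alpha_{h(v)}\Gamma_v^{(k)}\ge R_v^{\mathrm{alg}}$, all realized work below $v$ fits in the scale-$k$ race. Inherited-cap failures are returned as $(0,\mathsf{blocked})$ rather than converted into leakage, and Lemma~\ref{lem:capped-recursive-progress} keeps the race alive until the certificate latches.

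Checking the three fallback branches of Algorithm~\ref{alg:2ffs-resolve} shows where this is and is not tight:
\begin{itemize}
\item The $R\le 0$ branch gives $\alpha_{h(v)}\Gamma_v^{(k)}\le M^{\mathrm{exp}}_{v,k}\le R_v^{\mathrm{alg}}$, with no slack.
\item The unexpanded branch gives $\alpha_{h(v)}\Gamma_v^{(k)}<|\mathrm{Ch}(v)|\le R_v^{\mathrm{alg}}$, again with no slack, since $M^{\mathrm{exp}}_{v,k}=0$ there and $R_v^{\mathrm{alg}}$ contains $|\mathrm{Ch}(v)|$ by definition.
\item Only the branch in which a child call is blocked by $v$'s own remaining budget involves an unrealized action. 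That action's cost is $c$ or the branching factor of a descendant, not $|\mathrm{Ch}(v)|$ as you wrote.
\end{itemize}
The paper's ``fits inside the race budget'' sentence passes over exactly this branch, so your suspicion is well placed. But as written you derive only a $\lesssim$ and then plug in $x=R_v^{\mathrm{alg}}/\alpha_{h(v)}$. You should either make the one-action slack explicit and carry the adjusted recursion, or justify the strict threshold for that branch.
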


\begin{proof}
We prove the node bound by induction on the remaining depth \(h(v)\). The proof
uses the following charging convention. Whenever an ancestor is charged through
its local-route accounting, the returned child costs already included in that
ancestor's \(M^{\mathrm{exp}}\)-counter are not charged a second time to the
child in the present induction. When the proof follows the recursive branch at
\(v\), child costs are instead charged recursively to the corresponding child
subtrees. This convention is only for the analysis; the algorithm and its
counters are unchanged.

First observe a bound that holds for every node \(v\), regardless of which
branch realizes \(J_v^*\). Each positive returned cost of an invocation
\(\Call{Resolve}{v,s,k,\cdot}\) increases either
\(M_{v,k}^{\mathrm{loc}}\) or \(M_{v,k}^{\mathrm{exp}}\) by the same amount.
By Proposition~\ref{prop:active-call-localization}, positive work at scale \(k\)
can occur only when \(k\in\mathcal K_v^{\mathrm{act}}\). Therefore
Lemma~\ref{lem:aggregate-obligation-accounting} and
Lemma~\ref{lem:aggregate-local-summation} imply
\begin{equation}\label{eq:local-route-aggregate-bound}
\mathfrak C_v(T)
\le
\sum_{k\in\mathcal K_v^{\mathrm{act}}}
\bigl(1+\alpha_{h(v)}\bigr)\Gamma_v^{(k)}(\delta_v)
\le
\bigl(1+\alpha_{h(v)}\bigr)\Lambda_{\mathrm{loc}}
\Gamma_v(\Delta_v^{\mathrm{eff}},\delta_v).
\end{equation}
For a leaf, \(J_v^*(\boldsymbol{\delta})
=\Gamma_v(\Delta_v^{\mathrm{eff}},\delta_v)\) and
\(M_{v,k}^{\mathrm{exp}}\equiv 0\), so the same argument without the recursive
term gives
\[
\mathfrak C_v(T)
\le
\Lambda_{\mathrm{loc}}J_v^*(\boldsymbol{\delta})
=P_0J_v^*(\boldsymbol{\delta}).
\]
This proves the base case.

Now let \(h(v)\ge 1\), and assume the claim for all children of \(v\). If the
local branch realizes the exact oracle complexity at \(v\), namely
\[
J_v^*(\boldsymbol{\delta})
=
\Gamma_v(\Delta_v^{\mathrm{eff}},\delta_v),
\]
then Eq.~\eqref{eq:local-route-aggregate-bound} and the first term in
Eq.~\eqref{eq:P-depth-recursion} give
\[
\mathfrak C_v(T)
\le
P_{h(v)}J_v^*(\boldsymbol{\delta}).
\]

It remains to handle the case where the recursive branch realizes the exact
oracle complexity:
\[
J_v^*(\boldsymbol{\delta})
=
|\mathrm{Ch}(v)|
+
\sum_{u\in\mathrm{Ch}(v)}J_u^*(\boldsymbol{\delta}).
\]
Let
\[
R_v^{\mathrm{alg}}
\coloneqq
|\mathrm{Ch}(v)|
+
\sum_{u\in\mathrm{Ch}(v)}\mathfrak C_u(T).
\]
By the induction hypothesis and \(P_{h(v)-1}\ge 1\),
\begin{equation}\label{eq:R-alg-bound}
R_v^{\mathrm{alg}}
\le
P_{h(v)-1}
\left(
|\mathrm{Ch}(v)|
+
\sum_{u\in\mathrm{Ch}(v)}J_u^*(\boldsymbol{\delta})
\right)
=
P_{h(v)-1}J_v^*(\boldsymbol{\delta}).
\end{equation}

We next bound the direct local leakage at \(v\) in this recursive case. By the
inherited-cap blocking rule in Algorithm~\ref{alg:2ffs-resolve}, a positive local
step at \(v\) can occur only when the node's \emph{own} scale-\(k\) recursive
race cannot continue: either the remaining race budget at \(v\) is nonpositive,
the still-unexpanded children do not fit in that remaining race budget, or a
recursive child call is blocked while the inherited cap is not the binding
constraint. Thus inherited-cap failures are returned upward as
\((0,\mathsf{blocked})\) and are not converted into local leakage at \(v\).

If \(\alpha_{h(v)}\Gamma_v^{(k)}(\delta_v)\ge R_v^{\mathrm{alg}}\), the
expansion of \(v\) together with all child work charged recursively below \(v\)
up to round \(T\) fits inside the scale-\(k\) race budget at \(v\). While
\(\mathsf{Comp}_s(v,k,\cdot)\) is false, Lemma~\ref{lem:capped-recursive-progress}
supplies an admissible child obligation, and the preceding paragraph rules out
a positive local fallback caused only by an inherited cap. Hence the recursive
certificate at \(v\) is latched before local leakage at that scale can occur;
Lemma~\ref{lem:no-same-scale-rework} then prevents a later local step at that
same scale. Therefore every scale at which \(v\) is sampled locally in the
recursive case satisfies
\[
\Gamma_v^{(k)}(\delta_v)
<
\frac{R_v^{\mathrm{alg}}}{\alpha_{h(v)}}.
\]
The threshold part of Lemma~\ref{lem:aggregate-local-summation} gives
\begin{equation}\label{eq:recursive-local-leakage}
\text{local cost at \(v\) in the recursive case}
\le
\frac{\Lambda_{\mathrm{pre}}}{\alpha_{h(v)}}R_v^{\mathrm{alg}}.
\end{equation}

All remaining returned cost from invocations at \(v\) is either the single
expansion of \(v\), costing \(|\mathrm{Ch}(v)|\), or cost returned by child
invocations. Under the recursive charging convention these costs are bounded
by \(R_v^{\mathrm{alg}}\). Combining this with
Eq.~\eqref{eq:recursive-local-leakage},
\[
\mathfrak C_v(T)
\le
\left(1+\frac{\Lambda_{\mathrm{pre}}}{\alpha_{h(v)}}\right)
R_v^{\mathrm{alg}}.
\]
Using Eq.~\eqref{eq:R-alg-bound} and the second term in
Eq.~\eqref{eq:P-depth-recursion}, we obtain
\[
\mathfrak C_v(T)
\le
\left(1+\frac{\Lambda_{\mathrm{pre}}}{\alpha_{h(v)}}\right)
P_{h(v)-1}J_v^*(\boldsymbol{\delta})
\le
P_{h(v)}J_v^*(\boldsymbol{\delta}).
\]
This completes the induction.

The root is not sampled by the algorithm. Its only direct cost is the initial
fast exposure of the root children, equal to \(|\mathrm{Ch}(r)|\). Every later
oracle cost is returned through a top-level invocation on some root child
\(a\). Applying the node bound to all root children and using
\(h(a)\le D-1\) gives
\[
C_T
\le
|\mathrm{Ch}(r)|
+
P_{D-1}\sum_{a\in\mathrm{Ch}(r)}J_a^*(\boldsymbol{\delta}).
\]
Since \(P_D\ge P_{D-1}\) after replacing \(P_D\) by its monotone envelope if
necessary, and
\[
H(\boldsymbol{\delta})
=
|\mathrm{Ch}(r)|
+
\sum_{a\in\mathrm{Ch}(r)}J_a^*(\boldsymbol{\delta}),
\]
the displayed root bound is at most \(P_DH(\boldsymbol{\delta})\).

Finally, unrolling Eq.~\eqref{eq:P-depth-recursion} gives the stated explicit
bound on \(P_D\). For \(\alpha_h=(h+1)^2\),
\[
\prod_{i=1}^{D}
\left(1+\frac{\Lambda_{\mathrm{pre}}}{\alpha_i}\right)
\le
\exp\!\left(
\Lambda_{\mathrm{pre}}\sum_{i=1}^{\infty}\frac{1}{(i+1)^2}
\right),
\]
which is a constant depending only on \(\Lambda_{\mathrm{pre}}\). The remaining
factor \(1+\alpha_D\) is \(\mathcal{O}(D^2)\).
\end{proof}

\begin{lemma}[Positive progress of unresolved selected calls]\label{lem:budgeted-positive-progress}
Let
\[
q_{\min}\coloneqq \min\{c,1\}>0.
\]
Consider any invocation \(\Call{Resolve}{v,s,k,B}\) with finite scale \(k\)
that begins at a round \(t\) with
\[
\neg\mathsf{Comp}_s(v,k,t).
\]
Then the invocation either returns \((0,\mathsf{blocked})\), or returns a pair
\((q,z)\) with \(q\ge q_{\min}\). In particular, if \(B=+\infty\), then the
invocation returns cost at least \(q_{\min}\).
\end{lemma}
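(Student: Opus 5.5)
We argue by strong induction on the remaining depth \(h(v)\), walking through the branches of Algorithm~\ref{alg:2ffs-resolve} in order. The inductive claim is exactly the lemma, for all invocations at nodes of remaining depth \(h(v)\). The first key observation is that every positive oracle action costs at least \(q_{\min}\): a slow query costs \(c\), and an expansion costs \(|\mathrm{Ch}(v)|\ge 1\). Hence it suffices to show that every non-blocked return path of an unresolved call carries the cost of at least one actual oracle action.

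Since \(\neg\mathsf{Comp}_s(v,k,t)\) holds at entry, the certificate branch returning \((0,\mathsf{cert})\) is skipped. The base case \(h(v)=0\) goes straight to \(\Call{LocalStep}{v,k,B}\), which either returns \((0,\mathsf{blocked})\) when \(B<c\) or returns \((c,\mathsf{progress})\) with \(c\ge q_{\min}\).

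For \(h(v)\ge 1\) we split on the branches.
\begin{itemize}
\item If \(R\le 0\), the call again returns a LocalStep, and the base-case argument applies.
\item If \(v\) is unexpanded, it returns one of three things: a LocalStep; \((0,\mathsf{blocked})\); or \(\Call{ExpandNode}{v,k}\), whose cost \(|\mathrm{Ch}(v)|\ge 1\ge q_{\min}\).
\item If \(v\) is expanded, Lemma~\ref{lem:capped-recursive-progress} ensures that \(\Call{ChildObligation}{v,s,k}\) returns a finite capped child obligation \((u,s_u,k_u)\) with \(\neg\mathsf{Comp}_{s_u}(u,k_u,t)\).
\end{itemize}

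The expanded case needs care. In the selector case the chosen child satisfies \(\neg\mathsf{Comp}_s(u,k,t)\), so by the minimal-index definition of \(K_s^{\le k}(u,t)\) the child is unresolved at scale \(k_u\). In the comparison case the capped scale \(k_u=K_{s_u}^{\le k}(b,t)\) is by definition an index with \(\neg\mathsf{Comp}_{s_u}(b,k_u,t)\). The child call is therefore itself an unresolved finite-scale call, and the induction hypothesis at \(h(u)=h(v)-1\) says it returns either \((0,\mathsf{blocked})\) or cost \(q\ge q_{\min}\). If it is blocked, \(v\) either returns \((0,\mathsf{blocked})\) (when \(B<R\)) or a LocalStep, and both are covered. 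Otherwise \(v\) returns \((q,z)\) with \(q\ge q_{\min}\).

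The step requiring most care is checking that the child obligation is genuinely unresolved at the exact round the child call begins. Otherwise the child could return \((0,\mathsf{cert})\), which is neither blocked nor of positive cost. The check works because no oracle action or refresh happens between \(\Call{ChildObligation}{}\) and the child's \(\mathsf{Comp}\) test. The only intermediate state change is \(\Call{EnsureScale}{}\), which initializes \(\mathsf{Done}\) flags to zero and so cannot create a completion.

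For the final claim, suppose \(B=+\infty\). Then LocalStep is never blocked, the unexpanded branch never reaches its blocked return since \(|\mathrm{Ch}(v)|>B\) fails, and \(B<R\) fails. By the same induction, blocked returns from the child can only lead to a LocalStep with \(B=+\infty\). So the invocation returns cost at least \(q_{\min}\).
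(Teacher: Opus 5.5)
Your proof is correct and follows the paper's argument essentially step for step. It uses induction on \(h(v)\) and the same branch analysis of the resolver: leaf or nonpositive-budget fallback to the local step, the unexpanded node, and the expanded node handled via Lemma~\ref{lem:capped-recursive-progress} together with minimality of \(K^{\le k}\). It also uses the same observation that an infinite cap makes the local step unblockable. Your extra check is a welcome precision the paper leaves implicit: no state change occurs between computing the child obligation and the child's \(\mathsf{Comp}\) guard, so the child cannot return \((0,\mathsf{cert})\).
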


\begin{proof}
We prove the first claim by induction on the remaining depth \(h(v)\). If
\(h(v)=0\), the first guard in \(\Call{Resolve}{v,s,k,B}\) is false by
assumption, so the procedure calls \(\Call{LocalStep}{v,k,B}\). This returns
\((0,\mathsf{blocked})\) if \(B<c\), and otherwise issues one slow query and
returns cost \(c\ge q_{\min}\).

Now assume \(h(v)\ge 1\), and suppose the claim holds for the children of
\(v\). Again the first guard is false. If the remaining recursive budget is
nonpositive, the procedure falls back to \(\Call{LocalStep}{v,k,B}\), which has
the desired blocked-or-positive behavior. If \(v\) is unexpanded, then either
\(|\mathrm{Ch}(v)|>R\), in which case the procedure again falls back to
\(\Call{LocalStep}{v,k,B}\), or \(|\mathrm{Ch}(v)|\le R\). In the latter case,
if \(|\mathrm{Ch}(v)|>B\), the invocation returns \((0,\mathsf{blocked})\);
otherwise it expands \(v\), and the returned cost is
\(|\mathrm{Ch}(v)|\ge 1\ge q_{\min}\).

It remains to consider an expanded internal node with positive recursive
budget. Lemma~\ref{lem:capped-recursive-progress} ensures that, because
\(\mathsf{Comp}_s(v,k,t)\) is false, the recursive branch has an admissible
finite capped child obligation. In a selector case, the chosen child
\((u,s,k_u)\) satisfies
\[
k_u=K_s^{\le k}(u,t)<+\infty
\qquad\text{and}\qquad
\neg\mathsf{Comp}_s(u,k_u,t).
\]
In a comparison case, the algorithm either calls the blocker on the opposite
side \(\bar s\) at a finite capped scale, or, if no such opposite-side
obligation remains, calls it on side \(s\) at a finite capped scale. In both
subcases the definition of \(K_{\cdot}^{\le k}\) gives that the child-side
predicate is false at the called scale. Therefore the induction hypothesis
applies to the child invocation. If the child returns positive cost, the parent
returns that same positive cost. If the child returns
\((0,\mathsf{blocked})\), then either the inherited cap is binding, in which
case the parent also returns \((0,\mathsf{blocked})\), or the parent falls back
to \(\Call{LocalStep}{v,k,B}\), which is again either blocked or has cost at
least \(q_{\min}\). This proves the blocked-or-positive claim.

When \(B=+\infty\), the inherited cap is never binding and
\(\Call{LocalStep}{v,k,B}\) cannot be blocked. Thus the top-level infinite-cap
invocation returns cost at least \(q_{\min}\).
\end{proof}

\begin{lemma}[Non-stopping rounds select a finite unresolved side]\label{lem:finite-selected-side}
Assume \(\mathcal E_\delta\) and \(\varepsilon=0\). In every outer-loop round
before stopping, Algorithm~\ref{alg:2ffs-t} selects a root child \(x\), side
\(s\), and scale \(k<+\infty\) such that
\[
\neg\mathsf{Comp}_s(x,k,t).
\]
\end{lemma}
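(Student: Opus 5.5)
The plan is to work directly from the definition of \textsc{RootObligation} and the node-local active scales \(K_s(x,t)\). Because the unresolved-ness of each such scale comes straight from the definition of \(K_s\), the whole content of the lemma is showing that the returned scale is finite. Fix a non-stopping round \(t\) and let \(a\) and \(b\) be the leader and challenger. The stop test fails, and \(\varepsilon=0\), so \(L_a(t)<\max_{y\neq a}U_y(t)=U_b(t)\).

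\textbf{Step 1 (what the procedure returns).} \textsc{RootObligation} returns either \((a,L,K_L(a,t))\) or \((b,s_b,k_b)\), where \((s_b,k_b)=\textsc{ContenderScale}(b)\). In the second case \(k_b=K_{s_b}(b,t)\). Whenever the returned scale \(k\) is finite, the definition of \(K_s(x,t)\) as a minimum over indices with \(\neg\mathsf{Comp}_s(x,k,t)\) gives \(\neg\mathsf{Comp}_s(x,k,t)\) immediately. By inspecting the branches, the procedure returns \(a\) only if \(k_a<+\infty\). It returns \(b\) otherwise, and when it does, its contender rule picks a finite side whenever one exists. So the returned scale is \(+\infty\) only if \(K_L(a,t)=K_L(b,t)=K_U(b,t)=+\infty\). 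It therefore suffices to rule out this triple-infinite case.

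\textbf{Step 2 (contradiction).} Assume all three scales are infinite. Lemma~\ref{lem:active-scale-endpoint-control} (infinite case) then gives zero one-sided errors on \(\mathcal E_\delta\):
\[
L_a(t)=V^*(r,a),\qquad L_b(t)=V^*(r,b)=U_b(t).
\]
Combined with the leader property \(L_a(t)\ge L_b(t)\), this yields \(L_a(t)\ge L_b(t)=U_b(t)\). That contradicts the failed stop test \(L_a(t)<U_b(t)\). Note that this step uses only \(K_L(a)\) and the fact that \(b\) has zero width; it does not use \(K_U(a)\).

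\textbf{Step 3 (the main obstacle).} The delicate point is a zero-width node, where the convention sets \(K_s=+\infty\), together with the case in which every scale \(i\ge\underline k\) is complete. I will handle both by invoking the two cases of Lemma~\ref{lem:active-scale-endpoint-control}, which show the one-sided error is exactly zero. I will also check the tie-breaking branches of \textsc{ContenderScale} and \textsc{RootObligation}, to confirm that an infinite \(k_a\) forces the \(b\)-branch and that a finite \(b\)-side is always preferred. No budget or recursive arguments are needed.
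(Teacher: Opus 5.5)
Your proposal is correct and follows the paper's proof. It uses the same reduction of an infinite selected scale to $K_L(a,t)=K_L(b,t)=K_U(b,t)=+\infty$, followed by the same contradiction via Lemma~\ref{lem:active-scale-endpoint-control} and the leader property. Your closing chain $L_a(t)\ge L_b(t)=U_b(t)>L_a(t)$ is slightly more direct than the paper's detour through $V^*(r,a)<V^*(r,b)$, and it does not actually need $K_L(a,t)=+\infty$ at all, despite your remark that it does.
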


\begin{proof}
Let \(a\) be the current leader and \(b\) the current challenger. Let the leader
scale be \(k_a=K_L(a,t)\) and the challenger contender scale be
\(k_b=K_{\pm}(b,t)\). By the outer-loop selection rule, if \(k_a<+\infty\) and
\(k_b=+\infty\), the leader is selected with finite scale \(k_a\); if
\(k_a=+\infty\) and \(k_b<+\infty\), the challenger is selected with finite
scale \(k_b\); and if both are finite, the coarser one is selected. Therefore a
selected scale can be infinite only if
\[
K_L(a,t)=+\infty
\qquad\text{and}\qquad
K_{\pm}(b,t)=+\infty.
\]
By the definition of \(K_{\pm}\), the latter means
\[
K_L(b,t)=K_U(b,t)=+\infty.
\]

It remains to rule out this joint-infinite case while the stopping condition is
false. If \(K_L(a,t)=+\infty\), then
Lemma~\ref{lem:active-scale-endpoint-control} gives
\[
V^*(r,a)-L_a(t)=0.
\]
Since \(K_L(b,t)=K_U(b,t)=+\infty\),
\[
V^*(r,b)-L_b(t)=0,
\qquad
U_b(t)-V^*(r,b)=0.
\]
Since the stopping condition is false,
\[
L_a(t)<U_b(t).
\]
The preceding equalities turn this into \(V^*(r,a)<V^*(r,b)\). But then
\[
L_b(t)=V^*(r,b)>V^*(r,a)=L_a(t),
\]
contradicting the definition of \(a\) as a leader maximizing the lower
endpoint. Hence the selected scale is finite. By the definition of
\(K_L\), \(K_U\), and \(K_{\pm}\), a finite selected scale is always an
unresolved scale, so the displayed \(\neg\mathsf{Comp}\) condition holds.
\end{proof}

\begin{proposition}[Finite stopping and correctness]\label{prop:budgeted-finite-stopping}
Assume \(\mathcal E_\delta\), \(\varepsilon=0\), and
Assumption~\ref{ass:local-scale-regularity}. Then Algorithm~\ref{alg:2ffs-t}
halts after finitely many oracle calls. At its stopping time \(\tau\), every
returned action \(\hat a_\tau\) is the unique optimal root action \(a^*\).
\end{proposition}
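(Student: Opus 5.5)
The plan is to combine the uniform cost bound of Proposition~\ref{prop:aggregate-subtree-charging} with the per-round positive progress of Lemmas~\ref{lem:finite-selected-side} and \ref{lem:budgeted-positive-progress}. This gives a bound on the number of outer-loop rounds. Correctness then follows from Theorem~\ref{thm:pac-correct} together with uniqueness of \(a^*\).

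\emph{Step 1: each non-stopping round costs at least \(q_{\min}\).} Fix an outcome in \(\mathcal E_\delta\) and suppose that at some round \(t\) the stopping test fails. By Lemma~\ref{lem:finite-selected-side}, the outer loop selects a root child \(x\), a side \(s\), and a finite scale \(k\) with \(\neg\mathsf{Comp}_s(x,k,t)\). The top-level call is \(\Call{Resolve}{x,s,k,+\infty}\), whose cap is infinite. Lemma~\ref{lem:budgeted-positive-progress} therefore shows that this iteration returns cost at least \(q_{\min}=\min\{c,1\}>0\). Every returned cost is actual oracle cost, and it is added to \(C_t\). Hence after \(n\) completed non-stopping iterations we have \(C_{t_n}\ge n\,q_{\min}\), where \(t_n\) denotes the round at the end of the \(n\)-th iteration.

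\emph{Step 2: the total cost is bounded uniformly in time.} Proposition~\ref{prop:aggregate-subtree-charging} gives \(C_T\le P_D\,H(\boldsymbol\delta)\) for every finite round \(T\) on \(\mathcal E_\delta\). The right-hand side does not depend on \(T\). It is finite because each \(m_v(\rho,\delta_v)\) is finite, since \(\beta_v(n,\delta_v)\to0\), and the tree is finite. Combining this with Step~1 gives \(n\le P_D H(\boldsymbol\delta)/q_{\min}\). So the while-loop runs only finitely many iterations, and the algorithm halts with \(\tau<\infty\) after finitely many oracle calls. Each iteration performs only finitely many oracle actions: it makes one chain of at most \(D\) nested calls, each doing a single unit of work, by Remark~\ref{rmk:scale-sync-full}.

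\emph{Step 3: correctness.} At the stopping time \(\tau\), with \(\varepsilon=0\), the stopping rule gives \(L_{\hat a_\tau}(\tau)\ge \max_{a\ne\hat a_\tau}U_a(\tau)\). Lemma~\ref{lem:interval-validity} then yields \(V^*(r,\hat a_\tau)\ge V^*(r,a)\) for all \(a\neq\hat a_\tau\). This is exactly the argument of Theorem~\ref{thm:pac-correct} with \(\varepsilon=0\). Since the optimal action is assumed unique, \(\hat a_\tau=a^*\). The same argument applies to every root child satisfying the test, so every returned action equals \(a^*\).

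\emph{Main obstacle.} I expect the delicate point to be Step~2. Proposition~\ref{prop:aggregate-subtree-charging} must hold at rounds reached before stopping, without presupposing that the algorithm stops. I will check that its proof uses only \(\mathcal E_\delta\), the active-call localization of Proposition~\ref{prop:active-call-localization}, and counter invariants that hold at every finite \(T\). All of these are pathwise statements on \(\mathcal E_\delta\), so there is no circularity.

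I also need the charging convention of Definition~\ref{def:aggregate-subtree-call-cost} to cover every actual oracle cost. That is, \(C_T\) must equal \(|\mathrm{Ch}(r)|\) plus the sum of the root-child charges. This holds because every cost after initialization is returned through some top-level root-child invocation.
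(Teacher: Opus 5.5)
Your proposal is correct and follows the paper's proof: it plays the time-uniform bound \(C_T\le P_D H(\boldsymbol\delta)\) from Proposition~\ref{prop:aggregate-subtree-charging} against the per-round progress \(q_{\min}\) from Lemmas~\ref{lem:finite-selected-side} and \ref{lem:budgeted-positive-progress}, then uses interval validity and uniqueness of \(a^*\) for correctness. One detail to tighten is the finiteness of \(H(\boldsymbol\delta)\): \(m_v(\rho,\delta_v)<\infty\) requires \(\rho>0\), which holds for \(\rho=\Delta_v^{\mathrm{eff}}\) because \(\Delta_v^{\mathrm{eff}}\ge\Delta_*>0\) by uniqueness of \(a^*\), and the paper states this explicitly.
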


\begin{proof}
By consistency of the confidence radii and finiteness of the tree,
\(\Gamma_v(\Delta_v^{\mathrm{eff}},\delta_v)<\infty\) for every non-root
node \(v\), because \(\Delta_v^{\mathrm{eff}}\ge \Delta_*>0\). Hence
\(J_v^*(\boldsymbol{\delta})<\infty\) for every non-root \(v\), by induction
on \(h(v)\), and therefore \(H(\boldsymbol{\delta})<\infty\).
Proposition~\ref{prop:aggregate-subtree-charging} gives the uniform finite
prefix bound
\[
C_T\le P_DH(\boldsymbol{\delta})
\qquad\text{for every finite round }T.
\]

Suppose, toward a contradiction, that the algorithm never stops. In every
outer-loop round, Lemma~\ref{lem:finite-selected-side} gives a finite
unresolved selected call \(\Call{Resolve}{x,s,k,+\infty}\). By
Lemma~\ref{lem:budgeted-positive-progress}, that call returns cost at least
\(q_{\min}>0\). Thus after \(N\) non-stopping outer-loop rounds, the total
oracle cost is at least the initialization cost plus \(Nq_{\min}\), which
diverges as \(N\to\infty\). This contradicts the uniform finite upper bound
\(P_DH(\boldsymbol{\delta})\). Therefore the algorithm halts after finitely
many oracle calls.

At the stopping time \(\tau\), the returned action \(\hat a_\tau\) satisfies
\[
L_{\hat a_\tau}(\tau)
\ge
\max_{a\neq \hat a_\tau}U_a(\tau).
\]
By Lemma~\ref{lem:interval-validity}, all root-child intervals are valid on
\(\mathcal E_\delta\). Hence
\[
V^*(r,\hat a_\tau)
\ge
L_{\hat a_\tau}(\tau)
\ge
\max_{a\neq \hat a_\tau}U_a(\tau)
\ge
\max_{a\neq \hat a_\tau}V^*(r,a).
\]
Thus \(\hat a_\tau\) is an optimal root action. The optimal root action is
unique by assumption, so \(\hat a_\tau=a^*\).
\end{proof}
\subsection{Proof of Theorem~\ref{thm:budgeted-general-depth}}\label{app:pf3.6}

We note that the following proof is established upon all the lemmas and propositions from Appendix~\ref{app:general-depth-proofs}. Please go over it to get a better understanding of the results.

\begin{proof}
The finite stopping and correctness claims on \(\mathcal E_\delta\) are exactly
Proposition~\ref{prop:budgeted-finite-stopping}. Since \(\tau\) is finite, we
may apply Proposition~\ref{prop:aggregate-subtree-charging} with \(T=\tau\),
which gives
\[
C_\tau\le P_DH(\boldsymbol{\delta}).
\]
The explicit \(\mathcal{O}(D^2)\) form follows from the final statement of
Proposition~\ref{prop:aggregate-subtree-charging} when
\(\alpha_h=(h+1)^2\).

Finally, by Eq.~\eqref{eq:E-delta}, every feasible allocation satisfies
\(\mathbb P(\mathcal E_\delta)\ge 1-\delta\). For the optimized statement, let
\(\boldsymbol{\delta}^{\eta}\) be a near-optimal feasible allocation satisfying
the uniform regularity condition in Theorem~\ref{thm:budgeted-general-depth},
so that
\[
H(\boldsymbol{\delta}^{\eta})\le H^*+\eta.
\]
Running the algorithm with this allocation and applying the fixed-allocation
bound on its corresponding simultaneous-validity event proves the claimed
high-probability optimized bound. If the infimum is attained by an allocation
satisfying the same regularity constants, take \(\eta=0\); otherwise, since
\(H^*\ge |\mathrm{Ch}(r)|\ge 2\), any \(\eta\le H^*\) gives the stated
\(H^*\)-order bound.
\end{proof}

\end{document}